\newtheorem{theorem}{Theorem}[section]
\newtheorem{corollary}[theorem]{Corollary}
\newtheorem{lemma}[theorem]{Lemma}
\newtheorem{proposition}[theorem]{Proposition}
\newtheorem{remark}[theorem]{Remark}
\newtheorem{definition}{Definition}
\newtheorem{assumption}{Assumption}
\title{Stochastic Adaptive Gradient Descent Without Descent}
\author{Jean-Francois\ Aujol, J\'er\'emie Bigot \& Camille Castera \thanks{Alphabetical order.} \\
	Univ. Bordeaux\\
	CNRS, Bordeaux INP, IMB, UMR 5251\\
	F-33400 Talence, France\\
	\texttt{\{jean-francois.aujol,jeremie.bigot,camille.castera\}@math.u-bordeaux.fr} \\
}
\newcommand{\NN}{\ensuremath{\mathbb N}}
\newcommand{\N}{{\mathbb N}}
\newcommand{\R}{{\mathbb R}}
\newcommand\ie{\textit{i.e.,}}
\newcommand{\esp}[2][]{\mathbb{E}_{{#1}}\left[#2\right]}
\newcommand{\gf}[1][]{\nabla f_{{#1}}}
\newcommand{\norm}[1]{\left\Vert #1 \right\Vert}
\newcommand{\inner}[2]{\left\langle #1 , #2 \right\rangle}
\newcommand{\xk}{{x_{k}}}
\newcommand{\xkm}{{x_{k-1}}}
\newcommand{\xikm}{{\xi_{k-1}}}
\begin{document}
	
	\maketitle

	\begin{abstract}
		We introduce a new adaptive step-size strategy for convex optimization with stochastic gradient that exploits the local geometry of the objective function only by means of a first-order stochastic oracle and without any hyper-parameter tuning.
		The method comes from a theoretically-grounded adaptation of the Adaptive Gradient Descent Without Descent method to the stochastic setting.
		We prove the convergence of stochastic gradient descent with our step-size under various assumptions, and we show that it empirically competes against tuned baselines.
	\end{abstract}


	\section{Introduction\label{sec::intro}}
	
	We consider the stochastic convex optimization problem
	\begin{equation}
		\min_{x\in\R^d} f(x) = \min_{x\in\R^d} \esp{f_\xi(x)}, \label{eq:optim}
	\end{equation}
	where the expectation is taken with respect to some random variable $\xi$ 
	such that, $f$ and $f_\xi : \R^d \to \R$ are convex, differentiable and  lower-bounded functions for all $\xi$. We furthermore assume that $f$ has at least one minimizer $x^\star\in \mathbb{R}^d$, and we denote its smallest value by $f^\star=f(x^\star)$.
	When $\xi$ follows a uniform distribution on subsets of $\{1,\ldots,N\}$ (for some integer $N>0$),  \eqref{eq:optim} reads
	\begin{equation}
		\min_{x\in\R^d} f(x) = \min_{x\in\R^d} \frac{1}{N} \sum_{\ell = 1}^{N}  f_{\ell}(x), \label{eq:optim_fullbatch}
	\end{equation}
	where $f_1,\ldots,f_N$ is a set of convex lower-bounded differentiable functions, with locally Lipschitz continuous gradients.
	
	\subsection{Main contributions} \label{sec:main}
	
	We introduce a new adaptive step-size strategy for stochastic gradient descent (SGD) to tackle \eqref{eq:optim}. We build upon the adaptive strategy introduced by \citet{malitsky2019adaptive} for deterministic (full batch) convex optimization.
	They propose a step-size that automatically adapts to the local geometry of $f$ only by means of first-order oracle calls (\ie gradient evaluations). Their method has the notable advantages of not requiring any hyper-parameter tuning nor the global Lipschitz smoothness of the gradient of the objective function (Definition~\ref{def::Lsmooth}). 
	While a stochastic version of the algorithm has been suggested in \cite{malitsky2019adaptive}, it does not inherit the main advantages from the deterministic version: its implementation requires the careful tuning of a hyper-parameter, crucial to guarantee convergence. In contrast, we propose a different stochastic adaptation of their method that preserves the aforementioned advantages in the deterministic setting. As illustrated on Figure~\ref{fig::stepsize_sensitivity}, our algorithm converges without requiring the tuning of any hyper-parameter, under various standard assumptions and models for stochastic optimization.
	
	\subsection{Stochastic Adaptive Descent}
	
	We consider a sequence $(\xi_k)_{k\in\N}$ of independent and identically distributed copies of $\xi$. For any random variable $Z$, $\esp[k-1]{Z}$ denotes the conditional expectation $\esp{Z\mid \mathcal{F}_{k-1}}$, where $\mathcal{F}_{k-1}$ is the filtration up to iteration $k-1$.
	In particular, for all $k\in\N$, we assume access to a stochastic gradient oracle $\gf[\xi_k]$ such that $\forall x\in\R^d$, $\esp[k-1]{\gf[\xi_k](x)} = \gf(x)$.
	We fix $x_0 \in \R^d$ and consider the stochastic recursion
	\begin{equation} \label{eq:algosto}
		x_{k+1} = x_k - \lambda_k \gf[\xi_k](x_k),
	\end{equation}
	where step-size $\lambda_k>0$ is a (possibly random) variable that is assumed to be  independent of $\xi_k$ conditionally on $\mathcal{F}_{k-1}$. These notations are consistent with those of \citet{malitsky2019adaptive} to ease the comparison.
	
	Our main contribution is to prove the benefits of the following adaptive step-sizes $\lambda_k$, for $k\geq 2$:
	\begin{align} \label{eq:choicestep}
		\begin{split}
			&\hat{L}_{k-1} = \frac{\norm{\gf[\xi_{k-1}](x_k) - \gf[\xi_{k-1}](x_{k-1})}}{\norm{x_k-x_{k-1}}},\ \theta_{k-1} = \frac{\lambda_{k-1}}{\lambda_{k-2}} \\
			&\lambda_k  = 
			\left\{
			\begin{array}{@{}lcl@{}}
				\min \left(\frac{1}{2\sqrt{2}\hat{L}_{k-1}}, \lambda_{k-1} \sqrt{1 + \theta_{k-1}} \right) & \text{(V-\textbf{I})} \\
				\min \left( \frac{c_k}{2\sqrt{2}\hat{L}_{k-1}}, \lambda_{k-1} \sqrt{1 + \theta_{k-1}} \right) & \text{(V-\textbf{II})} \\
				\min \left( \frac{c_k}{2\sqrt{2}\hat{L}_{k-1}}, \lambda_{k-1} \sqrt{1 + \left(1-c_k\right)\theta_{k-1}} \right) & \text{(V-\textbf{III})}
			\end{array}
			\right.
		\end{split}
	\end{align}
	with $\lambda_0 > 0$ and $\lambda_1$ defined hereafter.
	We propose three variants of our method, refereed to as V-\textbf{I}, V-\textbf{II} and V-\textbf{III} in \eqref{eq:choicestep}, that only differ by the use of decay $c_k = k^{1/2+\delta}$, for some $0 < \delta < 1/2$.
	This decay provides different convergence properties to the variants. We recommend using V-\textbf{III} which has the strongest theoretical guarantees, but we show that other variants sometimes exhibit good numerical results.   
	The choices in \eqref{eq:choicestep} adapt locally to the geometry of the functions $f_{\xi}$.
	The parameter $\delta>0$ express how fasts the step-sizes decay and is needed for the theoretical convergence analysis of stochastic algorithms.

	\subsection{Making SGD Tuning-free}\label{sec::tuningfree}
	The goal of this paper, and of a long line of work (see related work hereafter), is to set the step-size of SGD in adaptive manners that preserve its rate of convergence while making it as little dependent on hyper-parameter tuning as possible. Ideally it should be ``tuning-free'', \textit{i.e.}, without any hyper-parameter.
	To convey the idea, consider problem~\eqref{eq:optim_fullbatch} and assume (in this section only) that all the functions $f_\ell$ are $\mu$-strongly convex, $L$-smooth (with $0 < \mu \leq L$) and that the variance of the stochastic gradients is bounded by $\sigma^2>0$ (rigorous definitions are deferred to Section~\ref{sec:conv}). Then, \citet{bach:hal-00608041} showed that, for all $k\geq 1$, using the step-sizes $\lambda_k = \frac{\lambda_0}{(k+1)^{1/2+\delta}} $ with $\delta\in(0,1/2)$, the iterates $(x_k)_{k\in\N}$ of vanilla SGD converge to the minimizer $x^\star$ at the following rate:
	\begin{multline}\label{eq::vanillaSCrate}
		\esp{\norm{x_{k+1}-x^\star}^2} \leq 2\exp\left(2L^2\lambda_0^2 \frac{1-(k+1)^{-2\delta}}{\delta}\right)\exp\left(- \frac{\mu\lambda_0}{4}(k+1)^{1/2-\delta}\right)
		\norm{x_0-x^\star}^2 
		+ \frac{4\lambda_0\sigma^2}{\mu (k+1)^{1/2+\delta}}.
	\end{multline}
	Thus the rate of convergence of vanilla SGD is asymptotically of order $\frac{1}{k^{1/2+\delta}}$ and is controlled by the parameter $\delta$. However, the first term in \eqref{eq::vanillaSCrate} may grow \emph{exponentially} with the choice of the initial step-size $\lambda_0$. This can severely impact the non-asymptotic performance of vanilla SGD. Therefore, although $\lambda_0$ has little impact on the asymptotic behavior, its choice is critical in practice, as empirically illustrated for example in \citet{AsiDuchi2019}. 
	This work focuses on proposing a variant of SGD with as little dependence on $\lambda_0$ as possible. We do not study the decay parameter $\delta$ which is another important question.
	
	\citet{malitsky2019adaptive} proposed a deterministic (or full-batch) algorithm with small dependence on $\lambda_0$. However the adaptation they propose in the deterministic setting features a parameter $\alpha>0$ that must be carefully tuned. In particular in the same setting as above, if \textbf{$\alpha \leq \frac{\mu}{2L}$} then they obtain the following upper-bound (with $C_0>0$):
	\begin{equation}\label{eq::MMSCrate}
		\esp{\norm{x_{k+1}-x^\star}^2} \leq C_0\exp\left(-k\frac{\mu\alpha}{L} \right) + \alpha\frac{\sigma^2}{\mu^2},
	\end{equation}
	where the second-term does not vanish in general. Therefore the stochastic adaptation is significantly less ``tuning free'' than the full-batch counterpart as setting $\alpha$ requires knowing $\mu/L$.

	In comparison, in this setting we show that, when choosing $\lambda_k$ according to \eqref{eq:choicestep} V-\textbf{III}, then for all $k\geq 3$:
	\begin{multline}\label{eq::ourRate_strcvx}
		\esp{\norm{x_{k+1}-x^\star}^2}  \leq 2 \exp \left( 
		8 \frac{L^2}{\mu^2} \frac{1- (k-2)^{-2 \delta}}{\delta} \right)
		\exp\left( - \frac{\tau}{16} (k-2)^{1/2-\delta}\right)
		\left(\esp{T_2} + \frac{\sigma^2}{2 L^2} \right) 
		+ \frac{16 \sigma^2}{\tau \mu^2  (k-2)^{1/2 + \delta}}
	\end{multline}
	where $\tau = \min\left\{ \frac{\mu}{2\sqrt{2}L}, \frac{\mu}{\mu+2^\delta\sqrt{2}L}  , \frac{\mu^2}{2^\delta2L^2} \right\}$ is independent of $\lambda_0$, and $\esp{T_2}$ is the expectation at iteration $k=2$ of the Lyapunov function $(T_k)$ we use, defined later in \eqref{eq::Lyap}.
	The asymptotic rates obtained are the same as those in \eqref{eq::vanillaSCrate} for vanilla SGD \emph{without requiring any knowledge of $\mu$ and $L$} nor an extra parameter $\alpha$.
	The two exponential factors do not depend on $\lambda_0$ but rather scale like $\frac{L^2}{\mu^2}$ and $\frac{\mu^2}{L^2}$ respectively (without needing to know this ratio). This is very close to the scaling of \citet{malitsky2019adaptive} detailed in \eqref{eq::MMSCrate}, since in their case $\frac{\alpha \mu}{L}\leq \frac{\mu^2}{2 L^2}$.
	
	Finally, the proposed method (Algorithm~\ref{algo::AdaSGD}) is still not exactly ``parameter-free'' since \eqref{eq::ourRate_strcvx} slightly depends on $\lambda_0$ through $\esp{T_2}$. Yet, we call it ``tuning-free'' since taking a very small value of $\lambda_0$ is always as least as good as large ones, as highlighted in Figure~\ref{fig::stepsize_sensitivity} and also noted by \citet{malitsky2019adaptive} in the deterministic setting.
	We achieve this at the price of one extra stochastic gradient evaluation at each iteration. The same drawback exists with the stochastic approach of \citet{malitsky2019adaptive}, and it is inherent to the idea they propose.
	This additional cost has to be balanced with the cost of hyper-parameter tuning \citep{sivaprasad2020optimizer}.
	We provide additional guarantees beyond the globally Lipschitz continuous and strongly convex settings later in Section~\ref{sec:conv}.
	
	\begin{figure*}[t]
		\centering
		\includegraphics[width=0.8\linewidth]{Figures/sensitivity_legend}
		
		\begin{minipage}{0.47\linewidth}
			\centering
			\includegraphics[width=0.99\linewidth]{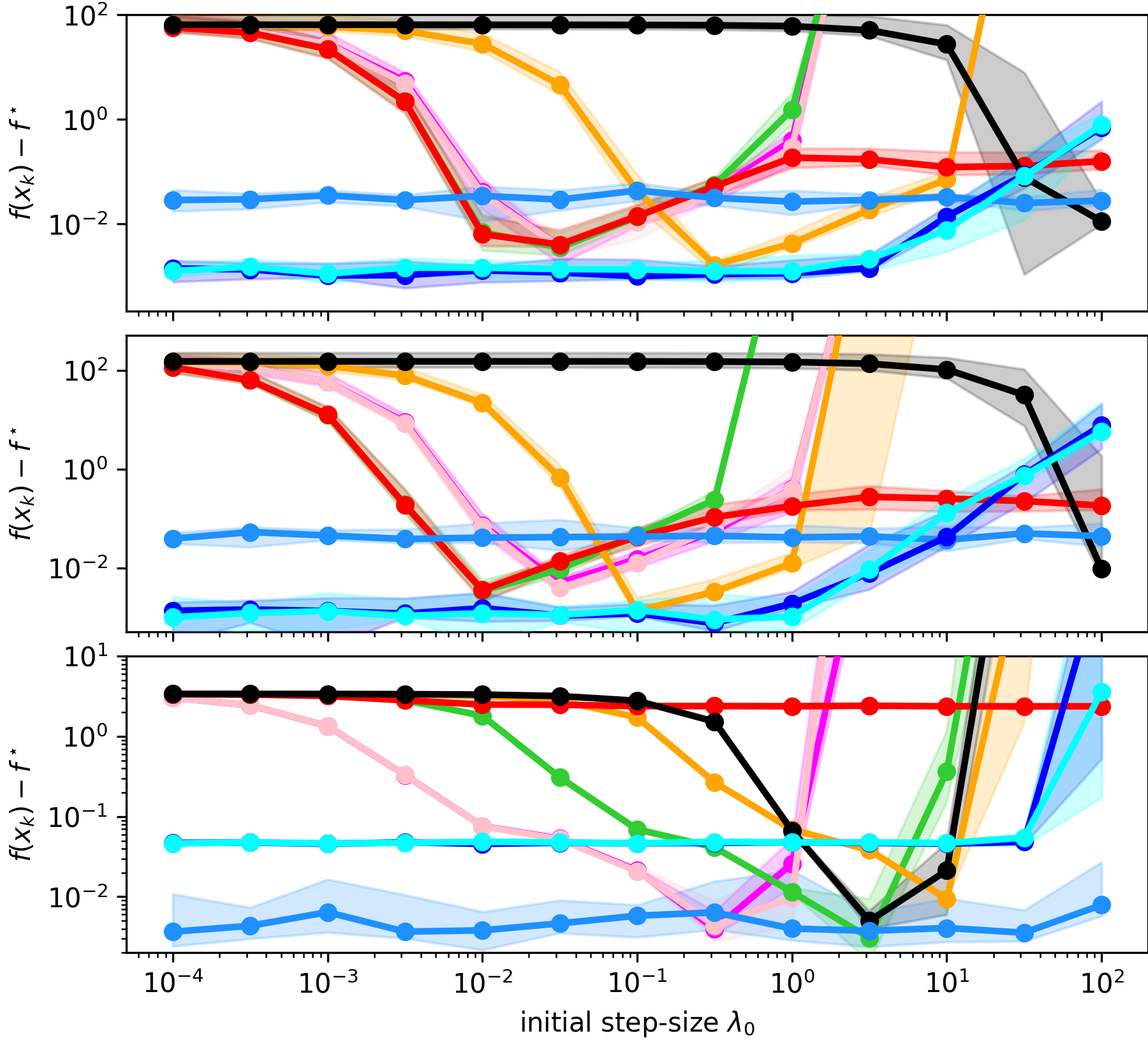}
		\end{minipage}
		\begin{minipage}{0.02\linewidth}
			
		\end{minipage}
		\begin{minipage}{0.49\linewidth}
			\begin{center}
				\vspace{-0.15cm}
				\begin{algorithm}[H]
					\caption{Adaptive SGD without descent\label{algo::AdaSGD}}
					\begin{algorithmic}[1]
						\Require $x_0 \in \R^d$, $\lambda_0 = 10^{-3}$
						\For{$k = 0$ to $\ldots$}
						\State Draw mini-batch $\xi_k$
						\State Evaluate $\gf[\xi_k](x_k)$
						\If{$k \geq 1$}
						\State Evaluate $\gf[\xi_{k-1}](x_k)$
						\If{$k = 1$}
						\State $\lambda_1 = \frac{\norm{x_1 - x_0}}{2\sqrt{2}\norm{\gf[\xi_0](x_1) - \gf[\xi_0](x_0)}}$
						\Else
						\State Compute $\lambda_k$ according to \eqref{eq:choicestep}
						\EndIf
						\EndIf
						\State Store $x_k$, $\lambda_k$, $\lambda_{k-1}$ and $\xi_k$
						\State Update $x_{k+1} = x_k - \lambda_k \gf[\xi_k](x_k)$
						\EndFor
					\end{algorithmic}
				\end{algorithm}
			\end{center}
		\end{minipage}
		\caption{Influence of the initial step-size $\lambda_0$ on the optimality gap $f(x_k)- f^\star$ after $100$ epochs. Each figure represents a different problem (linear, ridge and Poisson regression, see details in Section~\ref{sec:num}).
			Each experiment was ran ten times, solid curve: median value , area: gap between $10\%$ and $90\%$ quantiles.
			Our methods (AdaSGD, the blue curves) always perform well with small values of $\lambda_0$ and thus do not require step-size tuning to compete with optimally-tuned SGD and ``AdaSGD-MM'' from \citet{malitsky2019adaptive}. \label{fig::stepsize_sensitivity}}
	\end{figure*}

	\subsection{Related work}
	
	\noindent
	\textbf{Further adaptation of \citet{malitsky2019adaptive}}
	In the deterministic setting, \citet{malitsky2019adaptive} introduced an adaptive version of GD (here-after called AdaGD), later refined in \citep{malitsky2024adaptive} to improve constant factors in the choice of $\lambda_k$. A proximal version has been developed \citep{latafat2024adaptive}. 
	As far as we know, there is little work on adapting AdaGD to the stochastic case (beyond the heuristics in \citealt{malitsky2019adaptive}). One exception is  due to \citet{defazio2022grad}, who take inspiration from AdaGD, but derives a rather different algorithm \textit{à la} AdaGrad that still features a $\lambda_0$ whose choice is important. 
	
	\noindent
	\textbf{Adaptive methods.}
	Adaptive methods are a very active topic due to their use for training neural networks \citep{duchi2011adaptive, mcmahan2010adaptive, kingma2014adam, tieleman2012lecture}. However, they still feature a step-size parameter (and possibly others), whose choice significantly affects the performances \citep{sivaprasad2020optimizer}.
	Many other strategies exist beyond AdaGD in the deterministic setting ,  \citep{li2023simple,lan2023optimal,khaled2023dowg}, and AdaGD can be seen as a more stable version of the Barzilai-Borwein step-sizes \citep{barzilai1988two, raydan1997barzilai}. Stochastic adaptations of these step-sizes have been proposed for convex \citep{tan2016barzilai} and non-convex optimization \citep{8945980,castera2022second}, but they always mitigate the instability of the methods through additional hyper-parameters.
	Overall, we stress that the stochastic setting possesses additional difficulties that make it significantly more challenging \citep{orabona2020icml}.

	\noindent
	\textbf{Parameter free algorithms and normalized gradients}
	In the stochastic setting, the term \emph{parameter-free} has several meanings.
	It is often connected to online learning where one seeks adaptive strategies of methods that \emph{provably} minimize $f$ as well as the optimally-tuned instance of the method (up to a factor), see \textit{e.g.}, \citep{orabona2014simultaneous, Tuningfree24}. The definition of parameter free may vary depending on the assumptions on the function, the variables of the problems that assumed accessible and the oracles (e.g., accessing $f_\xi$ or not). 
	Despite the name, parameter-free algorithms may still assume additional knowledge of the problem, such as a bound on gradient norms.
	Many parameter-free methods use normalized gradient strategies, \textit{à la} Adagrad: they normalize step-sizes using past (stochastic) evaluations of gradients, which has a provable benefit \citep{pmlr-v89-li19c}. Some of these approaches include \citep{levy2017online,orabona2021parameter,pmlr-v178-faw22a,ivgi2023dog,carmon2022making}.
	When an upper-bound on gradient norms is known, the coin-betting strategy \citep{orabona2016coin, orabona2017training} provides acceleration by estimating the initial optimality gap $\Vert x_0 - x^\star\Vert$.
	We refer to \citet{orabona2023normalized} for a detailed discussion on gradient normalization.   
	When function values are accessible, Polyak step-sizes \citep{polyak1987introduction} can be considered. While originally designed for deterministic optimization, they have been adapted to the stochastic setting \citep{hazan2019revisiting, loizou2021stochastic}, but requires knowledge of the optimal values of the functions $(f_\xi)_\xi$. \citet{chen2019fast} proposed another algorithm using function values.
	If the functions $f$ and the $f_\xi$ are themselves Lipschitz continuous, recent works leverages this to design deterministic and stochastic adaptive strategies \citep{cutkosky2019artificial,defazio2023learning,mishchenko2023prodigy}.

	\noindent
	\textbf{Difference with parameter-free approaches}
	Despite the connections with parameter-free algorithms, our work does not focus on theoretical \emph{speed} comparisons with optimally tuned SGD. Since even in the deterministic setting, \citet{malitsky2019adaptive} did not show any theoretical acceleration for AdaGD compared to GD. The purpose of our work is rather to adapt AdaGD to the stochastic setting while preserving its benefits (see Section \ref{sec::tuningfree}).

	\noindent
	\textbf{Other related approaches}
	Line-search is a common way to alleviate step-size selection but is difficult to adapt to the stochastic setting and often requires auxiliary hyper-parameters for the sake of stability \citep{byrd2012sample, franchini2023learning}, similarly to the Barzilai-Borwein methods.
	Beyond GD, inertial methods are ubiquitous in optimization as they allow achieving optimal rates for convex \citep{nesterov1983method} and strongly convex optimization \citep{nesterov2013introductory}. 
	These methods can also be used with adaptive parameters or adaptive restarting strategies \citep{aujol2024parameter, aujol2025fista} \citep{barre2020complexity, maier2023near}.
	Finally, \textit{universal methods} \citep{nesterov2015universal,li2023simple,grimmer2024optimal} relax the Lipschitz continuity of the gradient by assuming an (unknown) level of Hölder continuity and try to automatically adapt to it. The focus is thus different than that of the parameter-free setting.
	
	\noindent
	\textbf{Convergence of SGD.}
	Despite the different theoretical focus, we do provide theoretical guarantees via the Robbins-Siegmund theorem \citep{RS71}, one of the main tools to prove convergence in the stochastic setting. Other proof strategies are possible depending on the variance assumption of the stochastic algorithm considered, see the discussion in \citet{cortild2025new} for further details, and \citet{garrigos2023handbook} for a recollection of proofs.

	\subsection{Organisation of the paper}
	
	In Section \ref{sec:lyap}, we derive a Lyapunov sequence that is inspired by the one proposed in \citet{malitsky2019adaptive} but with a different adaptation to the stochastic setting, leading to the choice \eqref{eq:choicestep}. In Section \ref{sec:conv} we discuss various assumptions of the functions $f_{\xi}$ that allow to show the convergence of our method. Numerical experiments are reported in Section \ref{sec:num} to illustrate the performances of our approach over a range of stochastic optimization problems. Some auxiliary technical results are postponed the Appendix of the paper.


	\section{Design of the Method} \label{sec:lyap}
	
	The algorithm of \citet{malitsky2019adaptive} is obtained via an original Lyapunov sequence. We present the main ideas leading to it and explain where our approach departs from theirs. Although Lipschitz continuity is not directly used below, we recall it in the Appendix (Definition~\ref{def::Lsmooth}).

	\subsection{Previous Results}\label{sec:prev}
	In the deterministic setting, they propose to use GD (\ie \eqref{eq:algosto} with full batch) algorithm with the step-size $\lambda_k$ defined as
	\begin{equation} \label{eq::lambdak_deterministic}
		\min \left\{ \frac{\norm{x_k-x_{k-1}}}{2\norm{\gf(x_k) - \gf(x_{k-1})}},   \lambda_{k-1} \sqrt{ 1 + \theta_{k-1} }   \right\},
	\end{equation}
	where $\theta_{k-1} =  \frac{ \lambda_{k-1} }{\lambda_{k-2}}$.
	To ease the reading, we use the same notation $(x_k)_{k\in\N}$ to denote the iterates both in stochastic and deterministic settings. Their main idea is that for the step-size \eqref{eq::lambdak_deterministic}, the sequence defined for $k\geq 2$ by
	\begin{equation}\label{eq::LyapTk}
		T_k =  \norm{x_{k+1}-x^\star}^2 + 2  \left(  \lambda_{k} \left( 1 + \theta_{k} \right) \right) \left(f(x_{k})-f^\star\right) 
		+   \frac{\norm{x_{k+1}-x_{k}}^2}{2}
	\end{equation}
	is non-increasing along the iterates of GD. We thus call $(T_k)_{k\in\N}$ a Lyapunov sequence. This is key as the choice \eqref{eq::lambdak_deterministic}, does not guarantee the decay of $(f(x_k))_{k\in\N}$, which is the usual Lyapunov sequence used for GD. This motivated the name ``adaptive gradient descent without descent'' (AdaGD).
	Not ensuring descent at each iteration allows for a choice $\lambda_k$ that adapts to the local Lipschitz constant of $\nabla f$ as argued in   \citet{malitsky2019adaptive}.

	In the case of SGD, the authors consider the choice
	$
	\lambda_k  = \min \left\{ \alpha \Lambda_k,   \lambda_{k-1} \sqrt{ 1 + \theta_{k-1} } \right\}
	$
	with either
	$
	\Lambda_k =  \frac{\norm{x_k-x_{k-1}}}{\norm{\gf[\xi_k](x_k) - \gf[\xi_k](x_{k-1})}}$
	or
	$
	\Lambda_k =  \frac{\norm{x_k-x_{k-1}}}{\norm{\gf[\zeta_k](x_k) - \gf[\zeta_k](x_{k-1})}}$ with $\zeta_k$ an independent copy of $\xi_k$,
	and where $\alpha > 0$ is a hyper-parameter \emph{to be tuned}. 
	They call these choices ``biased'' and ``unbiased'' for a reason we explain later.
	In contrast, in \eqref{eq:choicestep}, we evaluate the difference of gradients on the \emph{previous} random variable $\xi_{k-1}$: $\norm{\gf[\xi_{k-1}](x_k) - \gf[\xi_{k-1}](x_{k-1})}$. This small difference is crucial as it allows getting rid of the hyper-parameter $\alpha$ compared to their approach, and whose choice depends on the (unknown) smoothness constants of the functions $\gf[\xi_k]$.

	\subsection{Adaptive stochastic gradient descent}
	
	We now derive the Lyapunov sequence and highlight the main differences with compared \citet{malitsky2019adaptive}. 
	In what follows, we assume \emph{and ensure} that the step-size satisfies the following.
	
	\begin{assumption} \label{assump_inde}
		The step-size $\lambda_k>0$ is independent of $\xi_k$ conditionally on $\mathcal{F}_{k-1}$. 
	\end{assumption}
	\begin{remark}
		At this stage Assumption \ref{assump_inde} is required for the computations below, but we stress that our step-sizes \eqref{eq:choicestep} are designed \emph{so that the assumption always holds true}.     
	\end{remark}
	Let $k\geq 2$, 	we use notation $\Delta_k = x_{k+1} - x_k$, as in the deterministic case we begin with the decomposition
	\begin{alignat*}{1}
		\norm{x_{k+1} - x^\star}^2 
		=& \norm{\Delta_k}^2 + 2 \inner{\Delta_k}{x_k-x^\star} 
		+ \norm{x_k-x^\star}^2
		\\
		=& \norm{\Delta_k}^2 - 2 \inner{\lambda_k\gf[\xi_{k}](x_k)}{x_k-x^\star}
		+ \norm{x_k-x^\star}^2.
	\end{alignat*}

	Taking conditional expectations on both sides, we obtain
	\begin{equation}\label{eq:start}
		\esp[k-1]{\norm{x_{k+1} - x^\star} ^2} = \esp[k-1]{\norm{\Delta_k}^2}
		- 2 \inner{\esp[k-1]{\lambda_k\gf[\xi_{k}](x_k)}}{x_k-x^\star}
		+  \norm{x_k-x^\star}^2. 
	\end{equation}
	
	Focusing on the second term in the right-hand side: due to Assumption~\ref{assump_inde} the conditional independence between $\xi_k$ and $\lambda_k$ allows splitting the conditional expectation into a product. This is the reason for the terminology ``unbiased'' as it then allows to use
	the property that $\esp[k-1]{\gf[\xi_k](x_k)} = \gf(x_k)$, meaning that $\gf[\xi_k](x_k)$ is an unbiased estimator of $\gf(x_k)$. We obtain
	\begin{align}\label{convexe}
		&\inner{\esp[k-1]{\lambda_k\gf[\xi_{k}](x_k)}}{x_k-x^\star} = \esp[k-1]{\lambda_k}\inner{\gf(x_k)}{x_k-x^\star}
		\geq  \esp[k-1]{\lambda_k}\left(f(x_k)-f^\star\right), 
	\end{align}
	where we used the convexity of $f$ in the last inequality. The function values $f(x_k)$ appear, so we expect the quantity $f(x_{k-1})$ to appear as well later in our Lyapunov analysis.

	Now focusing on $\Delta_k$, it is trivial but convenient to write
	$\norm{\Delta_k}^2 = 2\norm{\Delta_k}^2 - \norm{\Delta_k}^2$ and use the identity
	$2\norm{\Delta_k}^2 = -2\inner{\lambda_k\gf[\xi_k](x_k)}{\Delta_k}$.
	We then make an extra stochastic gradient appear:
	\begin{align}\label{eq::diffxk}
		\begin{split}
			-2&\inner{\lambda_k\gf[\xi_k](x_k)}{\Delta_k} = \underbrace{-2\lambda_k\inner{\gf[\xi_{k-1}](x_{k-1})}{\Delta_k}}_{
				:=B_k}
			\underbrace{-2\lambda_k\inner{\gf[\xi_k](x_k) - \gf[\xi_{k-1}](x_{k-1})}{\Delta_k}}_{
				:=A_k}
		\end{split}
	\end{align}
	This is a first pivoting choice in our analysis, it yields two terms that we analyze separately.
	
	\subsubsection{Bounding \texorpdfstring{$B_k$}{Bk}}
	
	We first use \eqref{eq:algosto} to substitute $\gf[\xi_{k-1}](x_{k-1})$ in $B_k$:
	\begin{equation*}
		B_k = -2\lambda_k \inner{-\frac{1}{\lambda_{k-1}}(x_k-x_{k-1})}{-\lambda_k \gf[\xi_k](x_k)} 
		=  -2 \frac{\lambda_k^2}{\lambda_{k-1}} \inner{x_k-x_{k-1}}{\gf[\xi_k](x_k)}
	\end{equation*} 
	We take conditional expectation and use again Assumption~\ref{assump_inde} and the convexity of $f$ to get
	\begin{align} 
		\nonumber
		\esp[k-1]{B_k} & = 	-2 \frac{\esp[k-1]{\lambda_k^2}}{\lambda_{k-1}} \inner{x_k-x_{k-1}}{\gf(x_k)} 
		\\ \label{eq:(b)}
		&\leq 2 \frac{\esp[k-1]{\lambda_k^2}}{\lambda_{k-1}} (f(x_{k-1})-f(x_k)) =
		2 \frac{\esp[k-1]{\lambda_k^2}}{\lambda_{k-1}} (f(x_{k-1}) - f^\star) 
		- 2 \frac{\esp[k-1]{\lambda_k^2}}{\lambda_{k-1}}(f(x_{k}) - f^\star) 
	\end{align}
	We obtain a term with $f(x_{k-1})$ that will be balanced by \eqref{convexe} and will yield a condition between $\lambda_k$ and $\lambda_{k-1}$, similar to that in \eqref{eq::lambdak_deterministic}. Our choice of plugging $\gf[\xi_{k-1}](x_{k-1})$ into \eqref{eq::diffxk} allows our analysis to remain faithful to the approach followed in the deterministic case.

	\subsubsection{Dealing with the term \texorpdfstring{$A_k$}{Ak}}
	
	We use $\vert\inner{x}{y}\vert\leq \norm{x}^2 + \frac{\norm{y}^2}{4}$, $\forall x,y\in\R^d$ on $A_k$ in \eqref{eq::diffxk}:
	\begin{equation*}
		-2\lambda_k\inner{\gf[\xi_k](x_k) - \gf[\xi_{k-1}](x_{k-1})}{\Delta_k} 
		\leq
		2\lambda_k^2\norm{\gf[\xi_k](x_k) - \gf[\xi_{k-1}](x_{k-1})}^2 +  \frac{\norm{\Delta_k}^2}{2}.
	\end{equation*}
	The second term in the right-hand side above will be compensated by the $- \norm{\Delta_k}^2$ obtained above \eqref{eq::diffxk}. The first term is more involved. We plug again an extra stochastic gradient:
	\begin{align}\label{eq:key}
		\begin{split}
			\norm{\gf[\xi_k](x_k) - \gf[\xi_{k-1}](x_{k-1})}^2
			=&\left\Vert\gf[\xi_k](x_k) - \gf[\xi_{k-1}](x_{k})\right.
			\left.+ \gf[\xi_{k-1}](x_k) - \gf[\xi_{k-1}](x_{k-1})\right\Vert^2 
			\\
			\leq&  2\underbrace{\norm{\gf[\xi_k](x_k) - \gf[\xi_{k-1}](x_{k})}^2}_{\text{induced by sampling}} 
			+ 2\underbrace{\norm{\gf[\xi_{k-1}](x_k) - \gf[\xi_{k-1}](x_{k-1})}^2}_{\text{induced by curvature}},
		\end{split}
	\end{align}
	where we used $\norm{x+y}^2\leq 2 \norm{x}^2 + 2\norm{y}^2$.  The decomposition \eqref{eq:key} is a subtle but crucial difference with \citet{malitsky2019adaptive} in the stochastic setting. As highlighted in \eqref{eq:key} it allows bounding the difference of stochastic gradients by a sampling error and a ``curvature-induced'' term. In the deterministic setting ($\xi_k=\xi_{k-1}$) the first term vanishes and the curvature term is exactly the difference of gradients in \eqref{eq::lambdak_deterministic}. This leads to our first condition on $\lambda_k$ in \eqref{eq:choicestep}:	
	\begin{equation}\label{eq::firstcondition}
		\lambda_k \leq 
		\frac{\norm{x_k-x_{k-1}}}{2\sqrt{2}\norm{\gf[\xi_{k-1}](x_k) - \gf[\xi_{k-1}](x_{k-1})}}
	\end{equation}
	that allows bounding the curvature term:
	$ 4\lambda_k^2{\norm{\gf[\xi_{k-1}](x_k) - \gf[\xi_{k-1}](x_{k-1})}^2} 
	\leq 
	\frac{\norm{\Delta_{k-1}}^2}{2}.
	$
	Therefore, using again Assumption~\ref{assump_inde}, we  obtain 
	\begin{equation}
		\esp[k-1]{A_k} \leq
		\esp[k-1]{\lambda_k^2}\esp[k-1]{\norm{\gf[\xi_k](x_k) - \gf[\xi_{k-1}](x_k)}^2} + \frac{\norm{\Delta_{k-1}}^2}{2} 
		+ \frac{\norm{\Delta_k}^2}{2}.  \label{eq:(a)}
	\end{equation}
	
	\subsection{Decay of the Lyapunov sequence in Expectation}\label{sec:first_result}
	We use \eqref{eq:(b)} and \eqref{eq:(a)} in \eqref{eq::diffxk} to bound the first term in \eqref{eq:start}:
	\begin{align*}
		\esp[k-1]{\norm{\Delta_{k}}^2}  
		&\leq  \esp[k-1]{\lambda_k^2}\esp[k-1]{\norm{\gf[\xi_k](x_k) - \gf[\xi_{k-1}](x_k)}^2} 
		+ \frac{\norm{\Delta_{k-1}}^2}{2}
		- \esp[k-1]{\frac{\norm{\Delta_{k}}^2}{2}} +
		\\&
		2 \frac{\esp[k-1]{\lambda_k^2}}{\lambda_{k-1}} (f(x_{k-1}) - f^\star) 
		- 2 \frac{\esp[k-1]{\lambda_k^2}}{\lambda_{k-1}}(f(x_{k}) - f^\star) ,
	\end{align*}
	which, going back to \eqref{eq:start} and rearranging, leads to
	\begin{multline*}{1}
		\esp[k-1]{\norm{x_{k+1} - x^\star}^2} + \esp[k-1]{\frac{\norm{\Delta_{k}}^2}{2}}
		+ 2\left(\esp[k-1]{\lambda_k}+\frac{\esp[k-1]{\lambda_k^2}}{\lambda_{k-1}}\right)\left(f(x_k)-f^\star\right) 
		\\
		\leq
		\norm{x_k-x^\star}^2  
		+ \frac{\norm{\Delta_{k-1}}^2}{2} 
		+ 2 \frac{\esp[k-1]{\lambda_k^2}}{\lambda_{k-1}} (f(x_{k-1})-f(x_k))
		+ 4\esp[k-1]{\lambda_k^2}\esp[k-1]{\norm{\gf[\xi_k](x_k) - \gf[\xi_{k-1}](x_k)}^2} .
	\end{multline*}
	
	We have almost recovered an inequality on the sequence $(T_k)_{k\in\N}$ defined in \eqref{eq::LyapTk}. We want to make $(T_k)_{k\in\N}$ non-increasing, up to the last term above that we will handle later. This is where our second condition comes in. Recall $\theta_{k-1} =  \lambda_{k-1} / \lambda_{k-2}$ and assume that for all $k\geq 2$,
	\begin{equation}\label{eq::cond2strat3}
		\esp[k-1]{\lambda_{k}^2}
		\leq \lambda_{k-1}^2 \left( 1 + \theta_{k-1} \right)
	\end{equation}
	then, under this condition and \eqref{eq::firstcondition}, we just obtained a new stochastic adaptation of AdaGD.
	
	\begin{proposition}  \label{prop:lyap}Under Assumption \ref{assump_inde} and conditions \eqref{eq::firstcondition}
		and \eqref{eq::cond2strat3}, $T_k$ defined in \eqref{eq::LyapTk} satisfies
		\begin{equation} \label{eq::Lyap}
			\esp[k-1]{T_{k}}  \leq 
			T_{k-1} 
			+ 4  \esp[k-1]{\lambda_k^2}\esp[k-1]{\norm{\gf[\xi_k](x_k) - \gf[\xi_{k-1}](x_k)}^2}.
		\end{equation}
	\end{proposition}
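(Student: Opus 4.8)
The proof of Proposition~\ref{prop:lyap} is essentially bookkeeping: all the analytical work has already been done in the derivation leading up to it, and what remains is to assemble the pieces into the claimed inequality on $T_k$. The plan is to start from the rearranged inequality displayed just above the statement, namely
\begin{multline*}
	\esp[k-1]{\norm{x_{k+1} - x^\star}^2}
	+ 2\left(\esp[k-1]{\lambda_k}+\tfrac{\esp[k-1]{\lambda_k^2}}{\lambda_{k-1}}\right)\left(f(x_k)-f^\star\right) + \esp[k-1]{\tfrac{\norm{\Delta_{k}}^2}{2}}
	\\
	\leq
	\norm{x_k-x^\star}^2
	+ \tfrac{\norm{\Delta_{k-1}}^2}{2}
	+ 2 \tfrac{\esp[k-1]{\lambda_k^2}}{\lambda_{k-1}} (f(x_{k-1})-f(x_k))
	+ 4\esp[k-1]{\lambda_k^2}\esp[k-1]{\norm{\gf[\xi_k](x_k) - \gf[\xi_{k-1}](x_k)}^2},
\end{multline*}
and to recognize the left-hand side as (a lower bound for) $\esp[k-1]{T_k}$ and the right-hand side as $T_{k-1}$ plus the error term.

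First I would handle the left-hand side. By definition \eqref{eq::LyapTk}, $T_k = \norm{x_{k+1}-x^\star}^2 + 2\lambda_k(1+\theta_k)(f(x_k)-f^\star) + \tfrac12\norm{\Delta_k}^2$, and since $\theta_k = \lambda_k/\lambda_{k-1}$ we have $\lambda_k(1+\theta_k) = \lambda_k + \lambda_k^2/\lambda_{k-1}$. Taking $\esp[k-1]{\cdot}$ of $T_k$ and using that $f(x_k)-f^\star$ is $\mathcal{F}_{k-1}$-measurable (the point $x_k$ is determined by iteration $k-1$) while $f^\star \le f(x_k)$ by optimality, the coefficient $2\esp[k-1]{\lambda_k + \lambda_k^2/\lambda_{k-1}}(f(x_k)-f^\star)$ is exactly what appears on the left above. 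So the left-hand side of the displayed inequality equals $\esp[k-1]{T_k}$.

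Next I would handle the right-hand side. The three deterministic terms $\norm{x_k-x^\star}^2$, $\tfrac12\norm{\Delta_{k-1}}^2$ and $2\tfrac{\esp[k-1]{\lambda_k^2}}{\lambda_{k-1}}(f(x_{k-1})-f(x_k))$ should be matched against $T_{k-1} = \norm{x_k-x^\star}^2 + 2\lambda_{k-1}(1+\theta_{k-1})(f(x_{k-1})-f^\star) + \tfrac12\norm{\Delta_{k-1}}^2$. Here is where condition \eqref{eq::cond2strat3} enters: it gives $\esp[k-1]{\lambda_k^2}/\lambda_{k-1} \le \lambda_{k-1}(1+\theta_{k-1})$, so
\[
2\tfrac{\esp[k-1]{\lambda_k^2}}{\lambda_{k-1}}(f(x_{k-1})-f(x_k)) \le 2\lambda_{k-1}(1+\theta_{k-1})(f(x_{k-1})-f(x_k))
\]
provided $f(x_{k-1})-f(x_k) \ge 0$ --- but this sign is not guaranteed, which is the one genuine subtlety. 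The resolution is that we do not actually need the sign: we split $f(x_{k-1})-f(x_k) = (f(x_{k-1})-f^\star) - (f(x_k)-f^\star)$, bound the $\esp[k-1]{\lambda_k^2}/\lambda_{k-1}$ coefficient of the \emph{positive} part $f(x_{k-1})-f^\star$ using \eqref{eq::cond2strat3}, and absorb the $\esp[k-1]{\lambda_k^2}/\lambda_{k-1}$ coefficient of $-(f(x_k)-f^\star)$ into the already-present $2(\esp[k-1]{\lambda_k}+\esp[k-1]{\lambda_k^2}/\lambda_{k-1})(f(x_k)-f^\star)$ on the left (this is in fact how the inequality was already arranged: the $-2\esp[k-1]{\lambda_k^2}/\lambda_{k-1}(f(x_k)-f^\star)$ term from \eqref{eq:(b)} was moved to the left as part of that coefficient). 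After this substitution the right-hand side becomes exactly $T_{k-1} + 4\esp[k-1]{\lambda_k^2}\esp[k-1]{\norm{\gf[\xi_k](x_k)-\gf[\xi_{k-1}](x_k)}^2}$, which is \eqref{eq::Lyap}.

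The main obstacle, then, is not any hard estimate but rather making sure the bookkeeping with the non-sign-definite term $f(x_{k-1})-f(x_k)$ is done correctly; once one tracks the $f(x_k)-f^\star$ coefficients carefully on both sides and invokes \eqref{eq::cond2strat3} only against the manifestly nonnegative quantity $f(x_{k-1})-f^\star$, the result follows immediately. One should also remember to note explicitly that condition \eqref{eq::firstcondition} was what justified the curvature bound feeding into \eqref{eq:(a)}, and that Assumption~\ref{assump_inde} is what licensed factoring $\esp[k-1]{\lambda_k \gf[\xi_k](x_k)} = \esp[k-1]{\lambda_k}\gf(x_k)$ and $\esp[k-1]{\lambda_k^2 \norm{\cdot}^2} = \esp[k-1]{\lambda_k^2}\esp[k-1]{\norm{\cdot}^2}$ throughout, so the hypotheses of the proposition are all genuinely used.
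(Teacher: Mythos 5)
Your proposal is correct and follows essentially the same route as the paper: the proposition is exactly the rearranged display preceding it, with the left-hand side identified as $\esp[k-1]{T_k}$ and condition \eqref{eq::cond2strat3} applied to upgrade the coefficient $2\esp[k-1]{\lambda_k^2}/\lambda_{k-1}$ of the nonnegative quantity $f(x_{k-1})-f^\star$ to $2\lambda_{k-1}(1+\theta_{k-1})$, recovering $T_{k-1}$. Your care in splitting $f(x_{k-1})-f(x_k)$ and invoking \eqref{eq::cond2strat3} only against $f(x_{k-1})-f^\star$ is exactly the right bookkeeping, and is the correct reading of how the $-2\esp[k-1]{\lambda_k^2}/\lambda_{k-1}\,(f(x_k)-f^\star)$ contribution from \eqref{eq:(b)} is absorbed into the left-hand side.
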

	
	\begin{remark}
		Condition \eqref{eq::cond2strat3} involves an expectation, which is inconvenient in practice but we simplify this by taking the stricter requirement $\lambda_k \leq \lambda_{k-1} \sqrt{ 1 + \theta_{k-1} } $, in \eqref{eq:choicestep}.
	\end{remark}

	In particular for the three variants that we propose in \eqref{eq:choicestep}, conditions \eqref{eq::firstcondition} and \eqref{eq::cond2strat3} are fulfilled. 
	\begin{corollary}\label{col::Lyap}
		For any of the three choices in \eqref{eq:choicestep}, Assumption~\ref{assump_inde} and Proposition~\ref{prop:lyap} both hold true for Algorithm~\ref{algo::AdaSGD}.
	\end{corollary}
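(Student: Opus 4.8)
The plan is to verify that each of the three step-size rules defined in \eqref{eq:choicestep} (together with the initialization of Algorithm~\ref{algo::AdaSGD}) satisfies the three hypotheses of Proposition~\ref{prop:lyap}: Assumption~\ref{assump_inde} (conditional independence of $\lambda_k$ and $\xi_k$ given $\mathcal{F}_{k-1}$), the curvature bound \eqref{eq::firstcondition}, and the growth bound \eqref{eq::cond2strat3}. Since Proposition~\ref{prop:lyap} is stated for $k\geq 2$, it suffices to check these for all $k\geq 2$; the values $\lambda_0>0$ and $\lambda_1$ (given by the explicit formula in Algorithm~\ref{algo::AdaSGD}) only enter as inputs to the recursion \eqref{eq:choicestep}.

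First I would establish Assumption~\ref{assump_inde} by induction on $k$, proving the slightly stronger statement that $\lambda_k$ is $\mathcal{F}_{k-1}$-measurable (hence a.s. constant given $\mathcal{F}_{k-1}$, which trivially makes it conditionally independent of $\xi_k$). For the base cases, $\lambda_0$ is deterministic, and $x_1 = x_0-\lambda_0\gf[\xi_0](x_0)$ together with the closed form of $\lambda_1$ show $\lambda_1$ is $\mathcal{F}_0$-measurable. For the inductive step, $x_k = x_{k-1}-\lambda_{k-1}\gf[\xi_{k-1}](x_{k-1})$ is $\mathcal{F}_{k-1}$-measurable because $x_{k-1},\lambda_{k-1}$ are $\mathcal{F}_{k-2}\subset\mathcal{F}_{k-1}$-measurable and $\xi_{k-1}$ is $\mathcal{F}_{k-1}$-measurable; therefore $\hat L_{k-1}$, being a function of $x_k$, $x_{k-1}$ and $\gf[\xi_{k-1}]$, is $\mathcal{F}_{k-1}$-measurable, and so is $\lambda_k$, which is obtained from $\hat L_{k-1},\lambda_{k-1},\lambda_{k-2}$ and the deterministic index $k$. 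Positivity $\lambda_k>0$ propagates along the same induction using $\lambda_0>0$; on the (degenerate) event $\gf[\xi_{k-1}](x_{k-1})=0$, one has $x_k=x_{k-1}$, the first argument of the min is read as $+\infty$ and $\lambda_k=\lambda_{k-1}\sqrt{1+\theta_{k-1}}$, while the curvature estimate below still holds because $\gf[\xi_{k-1}](x_k)=\gf[\xi_{k-1}](x_{k-1})$ and $\Delta_{k-1}=0$. This measurability induction, and in particular keeping straight which $\xi_j$ are already frozen in $\mathcal{F}_{k-1}$ and treating $\lambda_1$ separately from \eqref{eq:choicestep}, is the only point that needs genuine care; the rest is immediate.

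Indeed, \eqref{eq::firstcondition} is built into the definition: in each variant $\lambda_k$ is a minimum whose first argument is $\tfrac{1}{2\sqrt 2\,\hat L_{k-1}}$ (Variant~\textbf{I}) or $\tfrac{1}{2\sqrt 2\,\hat L_{k-1}}\,k^{-1/2-\delta}$ (Variants~\textbf{II}--\textbf{III}); substituting $\hat L_{k-1}=\norm{\gf[\xi_{k-1}](x_k)-\gf[\xi_{k-1}](x_{k-1})}/\norm{x_k-x_{k-1}}$ shows this first argument equals the right-hand side of \eqref{eq::firstcondition} in Variant~\textbf{I} and is smaller in Variants~\textbf{II}--\textbf{III} since $k^{-1/2-\delta}\le 1$ for $k\ge 1$, so $\lambda_k$ satisfies \eqref{eq::firstcondition}. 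For \eqref{eq::cond2strat3} I would invoke the Remark following Proposition~\ref{prop:lyap}: the second argument of the minimum is at most $\lambda_{k-1}\sqrt{1+\theta_{k-1}}$ in every variant — exactly so for Variants~\textbf{I} and \textbf{II}, and for Variant~\textbf{III} because $1-k^{-1/2-\delta}\le 1$ gives $\lambda_{k-1}\sqrt{1+(1-k^{-1/2-\delta})\lambda_{k-1}/\lambda_{k-2}}\le\lambda_{k-1}\sqrt{1+\theta_{k-1}}$. Hence $\lambda_k\le\lambda_{k-1}\sqrt{1+\theta_{k-1}}$ deterministically, so $\lambda_k^2\le\lambda_{k-1}^2(1+\theta_{k-1})$, and taking $\esp[k-1]{\cdot}$ (the right-hand side being $\mathcal{F}_{k-1}$-measurable) yields \eqref{eq::cond2strat3}. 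With Assumption~\ref{assump_inde}, \eqref{eq::firstcondition} and \eqref{eq::cond2strat3} all verified, Proposition~\ref{prop:lyap} applies and gives \eqref{eq::Lyap} for Algorithm~\ref{algo::AdaSGD}, which is the claim of Corollary~\ref{col::Lyap}.
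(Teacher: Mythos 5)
Your proof is correct and follows the same route as the paper, which treats the corollary as immediate: the first argument of each minimum enforces \eqref{eq::firstcondition} (with $k^{-1/2-\delta}\le 1$ for Variants \textbf{II}--\textbf{III}), the second argument enforces $\lambda_k\le\lambda_{k-1}\sqrt{1+\theta_{k-1}}$ and hence \eqref{eq::cond2strat3} as noted in the Remark, and $\mathcal{F}_{k-1}$-measurability of $\lambda_k$ gives Assumption~\ref{assump_inde}. Your explicit measurability induction and the treatment of the degenerate case $\gf[\xi_{k-1}](x_{k-1})=0$ are details the paper leaves implicit, but they do not change the argument.
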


	To conclude this section, we importantly note that our analysis unifies the deterministic and stochastic settings. Indeed, in the deterministic setting, $\xi_k=\xi_{k-1}$, so \eqref{eq::Lyap} simply reads $T_k \leq T_{k-1}$ and conditions \eqref{eq::firstcondition} and \eqref{eq::cond2strat3} are exactly \eqref{eq::lambdak_deterministic}, up to a $\sqrt{2}$ factor. 
	The main difference in the stochastic case is $ \esp[k-1]{\lambda_k^2}\esp[k-1]{\norm{\gf[\xi_k](x_k) - \gf[\xi_{k-1}](x_k)}^2}$ in \eqref{eq::Lyap}, which can informally be interpreted as the variance of the noise induced by stochastic approximations. 
	A classical challenge in stochastic optimization amounts to controlling this term to derive convergence results, which we now do.


	\section{Convergence Analysis} \label{sec:conv}
	
	Throughout this section, we consider Algorithm~\ref{algo::AdaSGD} with any of the three choices \eqref{eq:choicestep}. Remark that with such choices, $\lambda_k$ is not a random variable conditionally on $\mathcal{F}_{k-1}$, so we simply have $\esp[k-1]{\lambda_k^2} = \lambda_k^2$ and as stated, Assumption~\ref{assump_inde} holds true.
	We derive the convergence of Algorithm~\ref{algo::AdaSGD}, and defer the proofs of this Section to the appendix.
	According to Corollary~\ref{col::Lyap}, the sequence $(T_k)_{k\in\N}$ satisfies \eqref{eq::Lyap} for Algorithm~\ref{algo::AdaSGD} in the stochastic setting, but  \eqref{eq::Lyap} features an additional term discussed above that we want to bound. 
	We first show in Lemma~\ref{lemma_bound} of Appendix~\ref{sec::varTransfer} that $\forall k\geq 1$ that defining
	$$
	\sigma_k^2 = \esp[k-1]{\norm{\gf[\xi_k](x^\star)}^2} +  \esp[k-1]{\norm{\gf[\xi_{k-1}](x^\star)}^2},
	$$
	Then
	\begin{align}
		\nonumber
		\esp[k-1]{\norm{\gf[\xi_k](x_k) - \gf[\xi_{k-1}](x_k)}^2} 
		\leq & 4 \esp[k-1]{\norm{\gf[\xi_k](x_k) - \gf[\xi_k](x^\star)}^2} 
		\\ \label{eq::transfer}
		&+ 4 \esp[k-1]{\norm{\gf[\xi_{k-1}](x_k) - \gf[\xi_{k-1}](x^\star)}^2} + 4 \sigma_k^2.
	\end{align}	
	The first two terms in \eqref{eq::transfer} may vanish asymptotically if $x_k$ converges to $x^\star$, but
	$\sigma_k$ is bounded away from zero in general.\footnote{Except in the so-called interpolated case where $x^\star$ minimizes all the $f_\xi$'s.}
	This evidences that ensuring convergence in the stochastic setting requires additional assumptions on the objective function.
	Remark that so far the results hold without using the (local) Lipschitz continuity of the $\gf[\xi]$'s.	We present three possible sets of assumptions that allow controlling \eqref{eq::transfer}.

	\begin{assumption}\label{ass:VarianceControl} One of the following assumptions holds true.
		\begin{enumerate}[label=(\ref{ass:VarianceControl}-{\roman*}),leftmargin=*]
			\item The function $f$ has the finite-sum structure \eqref{eq:optim_fullbatch} and each $\gf[\ell]$ is $L_\ell$-Lipschitz continuous. \label{ass:VCi}
			\item There exists $L>0$ such that each $\gf[\xi]$ is $L$-Lipschitz and $\sigma>0$ such that $\forall k, \sigma_k^2<\sigma^2$.\label{ass:VCii}
			\item The function $f$ has the finite-sum structure \eqref{eq:optim_fullbatch} and the iterates $(x_k)_{k\in\N}$ are bounded.\label{ass:VCiii}
		\end{enumerate}
	\end{assumption}
	Any of these sets of properties provides a uniform control on the problematic term in \eqref{eq::Lyap}. Assumption~\ref{ass:VCii} is necessary when one minimizes infinite sums, Assumption~\ref{ass:VCi} requires global Lipschitz continuity for finite sums, which can be relaxed in \ref{ass:VCiii} but at the cost of assuming boundedness of the iterates, which can only be checked \textit{a posteriori}.
	\begin{lemma}\label{lem::uniformControl}
		Under the conditions in Assumption~\ref{ass:VarianceControl}, $\forall k \geq 1$, there exists $L,\sigma>0$ such that
		\begin{equation*}
			\lambda_k^2\esp[k-1]{\norm{\gf[\xi_k](x_k) - \gf[\xi_{k-1}](x_k)}^2} 
			\leq 8 \lambda_k^2 L^2 \norm{x_k - x^\star}^2 + 4\lambda_k^2\sigma^2
		\end{equation*}
	\end{lemma}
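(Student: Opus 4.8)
The plan is to feed the transfer inequality \eqref{eq::transfer} — established in Lemma~\ref{lemma_bound} — into a short case analysis over the three scenarios of Assumption~\ref{ass:VarianceControl}. In each case I would bound the two gradient-difference expectations on the right-hand side of \eqref{eq::transfer} by $L^2\norm{x_k-x^\star}^2$ and the quantity $\sigma_k^2$ by a constant $\sigma^2$. Since \eqref{eq::transfer} then yields $\esp[k-1]{\norm{\gf[\xi_k](x_k)-\gf[\xi_{k-1}](x_k)}^2}\le 8L^2\norm{x_k-x^\star}^2+4\sigma^2$, and since $\lambda_k$ is $\mathcal{F}_{k-1}$-measurable, multiplying through by $\lambda_k^2$ gives exactly the claimed inequality. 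The one point requiring care throughout is that $\lambda_k$, $x_k$ and $\xi_{k-1}$ are all $\mathcal{F}_{k-1}$-measurable whereas $\xi_k$ is independent of $\mathcal{F}_{k-1}$, which matters when averaging per-sample constants.

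Under \ref{ass:VCii} this is immediate: both $\gf[\xi_k]$ and $\gf[\xi_{k-1}]$ are $L$-Lipschitz and $x_k,x^\star$ are $\mathcal{F}_{k-1}$-measurable, so each gradient-difference expectation is at most $L^2\norm{x_k-x^\star}^2$, while $\sigma_k^2<\sigma^2$ is assumed directly. Under \ref{ass:VCi} I would set $L:=\max_{\ell}L_\ell$ and $\sigma^2:=2\max_\ell\norm{\nabla f_\ell(x^\star)}^2$: using $\esp[k-1]{L_{\xi_k}^2}=\tfrac1N\sum_\ell L_\ell^2\le L^2$ for the $\xi_k$-term, $L_{\xi_{k-1}}^2\le L^2$ for the (conditionally deterministic) $\xi_{k-1}$-term, and the analogous bounds on $\sigma_k^2=\esp[k-1]{\norm{\gf[\xi_k](x^\star)}^2}+\norm{\gf[\xi_{k-1}](x^\star)}^2$, gives the two desired bounds. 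In both cases, plugging into \eqref{eq::transfer} and multiplying by $\lambda_k^2$ concludes.

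Case \ref{ass:VCiii} needs a different idea, because $C^1$-smoothness alone provides no Lipschitz bound — not even a local one — on the $\nabla f_\ell$'s. Instead I would exploit boundedness of the iterates: all $x_k$ and $x^\star$ lie in a common closed ball $B$, on which each continuous map $\nabla f_\ell$ is bounded, say by $M$ uniformly in $\ell$. Then every expectation on the right-hand side of \eqref{eq::transfer} is bounded by an absolute constant (at most $40M^2$ in total), so $\lambda_k^2\esp[k-1]{\norm{\gf[\xi_k](x_k)-\gf[\xi_{k-1}](x_k)}^2}\le 40M^2\lambda_k^2$. Choosing $L:=1$ and $\sigma^2:=10M^2$ — the statement only asks for the \emph{existence} of such constants — this quantity is $\le 4\sigma^2\lambda_k^2\le 8L^2\lambda_k^2\norm{x_k-x^\star}^2+4\sigma^2\lambda_k^2$, which is the asserted bound.

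I expect the only genuine obstacle to be the realization, in case \ref{ass:VCiii}, that one should \emph{not} try to recover a curvature-type bound of the form $L^2\norm{x_k-x^\star}^2$ at all — there is no Lipschitz constant to be had — but rather absorb a uniform bound on the stochastic-noise term entirely into the additive $\sigma^2$ term, leaving $L$ arbitrary. Once the transfer inequality \eqref{eq::transfer} is in hand, everything else is routine bookkeeping with conditional expectations.
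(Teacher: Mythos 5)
Your proposal is correct, and for cases \ref{ass:VCi} and \ref{ass:VCii} it coincides with the paper's argument: feed Lemma~\ref{lemma_bound} into a case analysis, bound the two gradient-difference expectations by $L^2\norm{x_k-x^\star}^2$ via Lipschitz continuity (with $L=\max_\ell L_\ell$ in the finite-sum case), and bound $\sigma_k^2$ either by hypothesis or by maximizing $\norm{\gf[\ell](x^\star)}^2$ over the finitely many indices. Where you genuinely diverge is case \ref{ass:VCiii}. The paper recovers a Lipschitz-type bound there too: it argues that the compact set $\mathsf{K}$ containing the iterates can be covered by finitely many neighborhoods on which each $\gf[\xi]$ is Lipschitz, yielding a uniform constant $L$ with $\norm{\gf[\xi](x_k)-\gf[\xi](x^\star)}\leq L\norm{x_k-x^\star}$ --- an argument that silently relies on \emph{local} Lipschitz continuity of the $\gf[\xi]$'s, which the paper treats as a standing hypothesis (cf.\ the discussion around Definition~\ref{def::Lsmooth} and the opening of Section~\ref{sec:conv}) but which is not part of Assumption~\ref{ass:VCiii} as literally written. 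You instead use only that each $\gf[\ell]$ is continuous (automatic for a differentiable convex function) and hence bounded by some $M$ on the ball containing the iterates, absorb the entire right-hand side of \eqref{eq::transfer} into the additive term ($40M^2$ --- your arithmetic checks out), and set $L=1$, $\sigma^2=10M^2$. This buys you a proof under strictly weaker hypotheses and sidesteps the covering argument; what you give up is a meaningful curvature constant $L$, which is harmless for Lemma~\ref{lem::uniformControl} itself (only existence is claimed, and Theorem~\ref{thm:convergence} uses $L$ only inside a summable coefficient) but means your case-\ref{ass:VCiii} constants could not be reused where the paper later needs a genuine Lipschitz constant, e.g.\ in the lower bound of Lemma~\ref{lem:boundLambdaVariant3} --- that result, however, is stated under \ref{case::strcvx} and re-derives its own $L$, so nothing downstream breaks.
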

	One can see that when $\lambda_k^2$ converges to zero ``fast enough'', we should obtain convergence of $(T_k)_{k\in\N}$. We now present the main convergence results of the paper in the next two sections.

	\subsection{Almost sure convergence} \label{sec:as}
	
	We begin with a general theorem.
	
	\begin{theorem}\label{thm:convergence} Consider Algorithm~\ref{algo::AdaSGD} with step-size \eqref{eq:choicestep}. If Assumption~\ref{ass:VarianceControl} holds true,
		then whenever the sequence of step-sizes $(\lambda_{k})_{k \in \N}$ satisfies
		$	\sum_{k \geq 0} \lambda_{k}^2 < +\infty \mbox{ almost surely,}$
		the Lyapunov sequence $(T_k)_{k\in\N}$ converges almost surely.
	\end{theorem}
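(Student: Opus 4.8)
The plan is to recognise the recursion obeyed by $(T_k)_{k\in\N}$ as an instance of the Robbins--Siegmund setup \citep{RS71} and invoke that theorem. Two ingredients are already available: Proposition~\ref{prop:lyap} together with Corollary~\ref{col::Lyap} — which, using that $\lambda_k$ is $\mathcal{F}_{k-1}$-measurable so that $\esp[k-1]{\lambda_k^2}=\lambda_k^2$, gives for every $k\geq 2$
\[
\esp[k-1]{T_k}\leq T_{k-1}+4\lambda_k^2\,\esp[k-1]{\norm{\gf[\xi_k](x_k)-\gf[\xi_{k-1}](x_k)}^2},
\]
and Lemma~\ref{lem::uniformControl}, which under Assumption~\ref{ass:VarianceControl} provides (deterministic, and in case \ref{ass:VCiii} obtained on the assumed-bounded trajectory) constants $L,\sigma>0$ such that
\[
4\lambda_k^2\,\esp[k-1]{\norm{\gf[\xi_k](x_k)-\gf[\xi_{k-1}](x_k)}^2}\leq 32\,L^2\lambda_k^2\,\norm{x_k-x^\star}^2+16\,\sigma^2\lambda_k^2 .
\]

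The only non-mechanical observation is that all three summands in the definition \eqref{eq::LyapTk} of $T_{k-1}$ are nonnegative — the squared norms trivially, and $f(x_{k-1})-f^\star\geq 0$ since $x^\star$ is a global minimizer and $\lambda_{k-1}(1+\theta_{k-1})>0$ — so in particular $\norm{x_k-x^\star}^2\leq T_{k-1}$. Chaining the three displays yields the one-step bound
\[
\esp[k-1]{T_k}\leq \bigl(1+32\,L^2\lambda_k^2\bigr)T_{k-1}+16\,\sigma^2\lambda_k^2,\qquad k\geq 2,
\]
which is exactly of Robbins--Siegmund form with $V_k=T_k$, $a_{k-1}=32L^2\lambda_k^2$, $b_{k-1}=16\sigma^2\lambda_k^2$ and $c_{k-1}=0$.

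It remains to check the hypotheses. The sequences $(T_k)$, $(a_k)$, $(b_k)$ are nonnegative; $T_k$ is $\mathcal{F}_k$-measurable, while $a_{k-1},b_{k-1}$ are $\mathcal{F}_{k-1}$-measurable because $\lambda_k$ is, by Assumption~\ref{assump_inde}. Integrability of every $T_k$ follows by induction from finiteness of the first few $\esp{T_k}$, which holds under Assumption~\ref{ass:VarianceControl} as it entails finite second moments of the stochastic gradients along the iterates. Finally the standing hypothesis $\sum_{k\geq 0}\lambda_k^2<+\infty$ almost surely gives $\sum_k a_k<\infty$ and $\sum_k b_k<\infty$ almost surely. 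The Robbins--Siegmund theorem then gives that $(T_k)_{k\in\N}$ converges almost surely (and, vacuously, $\sum_k c_k<\infty$), which is the claim.

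The conceptual content has already been established in Section~\ref{sec:lyap} and Lemma~\ref{lem::uniformControl}, so what is left is essentially bookkeeping; the only points requiring any care are the integrability of the $T_k$'s (needed for the conditional expectations to be meaningful) and, in case \ref{ass:VCiii}, the fact that the constants delivered by Lemma~\ref{lem::uniformControl} can be taken deterministic on the random but bounded trajectory — neither is a genuine obstacle. I would also flag that, since we discard the $(f(x_k)-f^\star)$ contribution instead of keeping it as a $-c_{k-1}$ term, this argument yields only convergence of $(T_k)$, not a rate nor a summability statement on $\sum_k\lambda_k(f(x_k)-f^\star)$; extracting those is the role of the subsequent results.
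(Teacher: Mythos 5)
Your proposal is correct and follows essentially the same route as the paper's proof: combine Proposition~\ref{prop:lyap} with Lemma~\ref{lem::uniformControl}, absorb $\norm{x_k-x^\star}^2\leq T_{k-1}$ into the multiplicative factor to obtain $\esp[k-1]{T_k}\leq(1+32L^2\lambda_k^2)T_{k-1}+16\sigma^2\lambda_k^2$, and invoke Robbins--Siegmund with $U_k=0$. The additional care you take over measurability and integrability of the $T_k$'s is a sound refinement that the paper leaves implicit.
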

	The proof relies on the Robbins-Siegmund theorem \citep{RS71} that is recalled in the Appendix for completeness. 
	Theorem~\ref{thm:convergence} requires the sequence of step-sizes to be square-summable. 
	This is true in the following settings (as proved in Appendix~\ref{sec:model_App}), and possibly others:
	\begin{enumerate}[label=\textbf{Case}-{\arabic*}, ,leftmargin=*]
		\item when there exists $\mu>0$ such that each $f_\xi$ is $\mu$-strongly convex; \label{case::strcvx}
		\item in least-square regression: $f$ is a finite sum \eqref{eq:optim_fullbatch} and $\forall x\in\R^d, f_\xi(x) = (\inner{w_\xi}{x} - y_\xi)^2$, for $w_\xi\in\R^d$ and $y_\xi\in\R$; \label{case::linreg}
		\item more generally when $f$ is a finite sum \eqref{eq:optim_fullbatch} of \emph{ridge} functions: $\forall x\in\R^d, f_\xi(x) = g_\xi(\inner{w_\xi}{x})$, and $g_\xi:\R\to\R$ is strongly convex. \label{case::ridge}
	\end{enumerate}

	We recall, that $f_\xi$ is $\mu$-strongly	convex means $f_\xi - \frac{\mu}{2} \norm{\cdot}^2$ is convex.
	\ref{case::ridge} is a generalization of linear regression that has important applications in the literature. See Section~\ref{sec:model_App} of the appendix for further discussion on these three cases.
	In these settings we obtain the convergence of the algorithm.
	\begin{corollary}
		Consider Algorithm~\ref{algo::AdaSGD} with step-size V-\textbf{II} or \textbf{III} from \eqref{eq:choicestep}. 
		Under Assumption~\ref{ass:VarianceControl}, the sequence $(T_k)_{k\in_N}$ defined in \eqref{eq::Lyap} converges almost surely for any of \ref{case::strcvx}, \ref{case::linreg} and~\ref{case::ridge}.
	\end{corollary}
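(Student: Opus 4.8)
The plan is to reduce the statement to Theorem~\ref{thm:convergence}. Since Assumption~\ref{ass:VarianceControl} is part of the hypotheses, it only remains to check that in each of \ref{case::strcvx}, \ref{case::linreg}, \ref{case::ridge} the step-sizes produced by variant-\textbf{II} or \textbf{III} of \eqref{eq:choicestep} satisfy $\sum_{k\geq 0}\lambda_k^2<+\infty$ almost surely. Both variants enforce $\lambda_k\leq\frac{1}{2\sqrt 2\,\hat L_{k-1}}\,\frac{1}{k^{1/2+\delta}}$ for $k\geq 2$, hence $\lambda_k^2\leq\frac{1}{8\,\hat L_{k-1}^2}\,\frac{1}{k^{1+2\delta}}$. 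As $1+2\delta>1$, the series $\sum_{k\geq 2}k^{-(1+2\delta)}$ converges, so (after absorbing the two finite constants $\lambda_0^2$ and $\lambda_1^2$) it suffices to bound $\hat L_{k-1}$ away from zero uniformly in $k$. I would dispose of the degenerate event $\{x_k=x_{k-1}\}$ at the outset: there $\gf[\xi_{k-1}](x_{k-1})=0$, the iterate is stationary for that sample, $\hat L_{k-1}$ is set to its conventional value, and one checks this does not affect square-summability — this is the routine-but-fiddly edge case.

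For \ref{case::strcvx}, each $f_\xi$ is $\mu$-strongly convex, so $\inner{\gf[\xi](x)-\gf[\xi](y)}{x-y}\geq\mu\norm{x-y}^2$ and Cauchy--Schwarz give $\norm{\gf[\xi](x)-\gf[\xi](y)}\geq\mu\norm{x-y}$. Hence $\hat L_{k-1}\geq\mu$ surely, so $\lambda_k^2\leq\frac{1}{8\mu^2\,k^{1+2\delta}}$ for $k\geq 2$, and $\sum_k\lambda_k^2<+\infty$.

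For \ref{case::ridge} (and \ref{case::linreg} as the special case $g_\xi(t)=(t-y_\xi)^2$, which is $2$-strongly convex), write $\gf[\ell](x)=g_\ell'(\inner{w_\ell}{x})\,w_\ell$, and assume each $w_\ell\neq 0$ (constant summands are immaterial). The recursion \eqref{eq:algosto} gives $x_k-x_{k-1}=-\lambda_{k-1}\,g_{\xi_{k-1}}'(\inner{w_{\xi_{k-1}}}{x_{k-1}})\,w_{\xi_{k-1}}$, so $x_k-x_{k-1}$ is collinear with $w_{\xi_{k-1}}$. Using one-dimensional strong convexity of $g_{\xi_{k-1}}$ and then this collinearity,
\[
\norm{\gf[\xi_{k-1}](x_k)-\gf[\xi_{k-1}](x_{k-1})} = \big|g_{\xi_{k-1}}'(\inner{w_{\xi_{k-1}}}{x_k})-g_{\xi_{k-1}}'(\inner{w_{\xi_{k-1}}}{x_{k-1}})\big|\,\norm{w_{\xi_{k-1}}} \geq \mu_{\xi_{k-1}}\norm{w_{\xi_{k-1}}}^2\,\norm{x_k-x_{k-1}},
\]
so that $\hat L_{k-1}\geq\mu_{\xi_{k-1}}\norm{w_{\xi_{k-1}}}^2\geq m:=\min_{1\leq\ell\leq N}\mu_\ell\norm{w_\ell}^2>0$, the minimum being positive because the sum \eqref{eq:optim_fullbatch} is finite. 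Then $\lambda_k^2\leq\frac{1}{8m^2\,k^{1+2\delta}}$ for $k\geq 2$, which is again summable.

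In all three cases $\sum_{k\geq 0}\lambda_k^2<+\infty$ surely, hence almost surely, and Theorem~\ref{thm:convergence} yields the almost sure convergence of $(T_k)_{k\in\N}$. I expect the main obstacle to be the ridge case: the quantity $\hat L_{k-1}$ is a \emph{directional} gradient variation that may be arbitrarily small for a generic displacement, so the argument genuinely relies on the observation that the update direction is aligned with $w_{\xi_{k-1}}$, which is exactly the direction along which the gradient of a ridge function varies; making this rigorous, together with the careful treatment of null weight vectors and of the stationary event $\{x_k=x_{k-1}\}$, is the delicate part.
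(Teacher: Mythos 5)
Your proof is correct and follows essentially the same route as the paper: reduce to Theorem~\ref{thm:convergence} and verify $\sum_k\lambda_k^2<+\infty$ case by case, via $\hat L_{k-1}\geq\mu$ from strong monotonicity of $\gf[\xi]$ in \ref{case::strcvx}, and via the collinearity of $x_k-x_{k-1}$ with $w_{\xi_{k-1}}$ combined with the one-dimensional strong convexity of $g_{\xi_{k-1}}$ to get $\hat L_{k-1}\geq\mu_{\xi_{k-1}}\norm{w_{\xi_{k-1}}}^2$ in \ref{case::linreg} and \ref{case::ridge} (these are exactly the computations of Appendix~\ref{sec:model_App}). Your explicit handling of the degenerate event $x_k=x_{k-1}$ is a minor tidiness the paper leaves implicit.
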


	We just provided prominent practical cases in which Algorithm~\ref{algo::AdaSGD} converges without requiring any hyper-parameter tuning. We now show that it is actually possible to improve the results and derive rates of convergence in \ref{case::strcvx}.

	\begin{figure*}[t]
		\centering
		\begin{tabular}{ccc}
			\multicolumn{3}{c}{\includegraphics[width=0.8\textwidth]{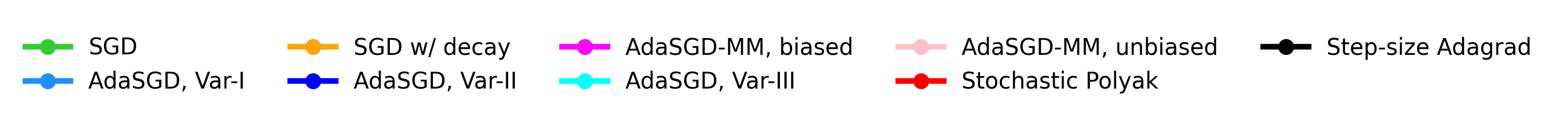}}
			\\
			\makecell{
				\footnotesize Linear -- Diabetes\\
				\includegraphics[width=0.25\textwidth]{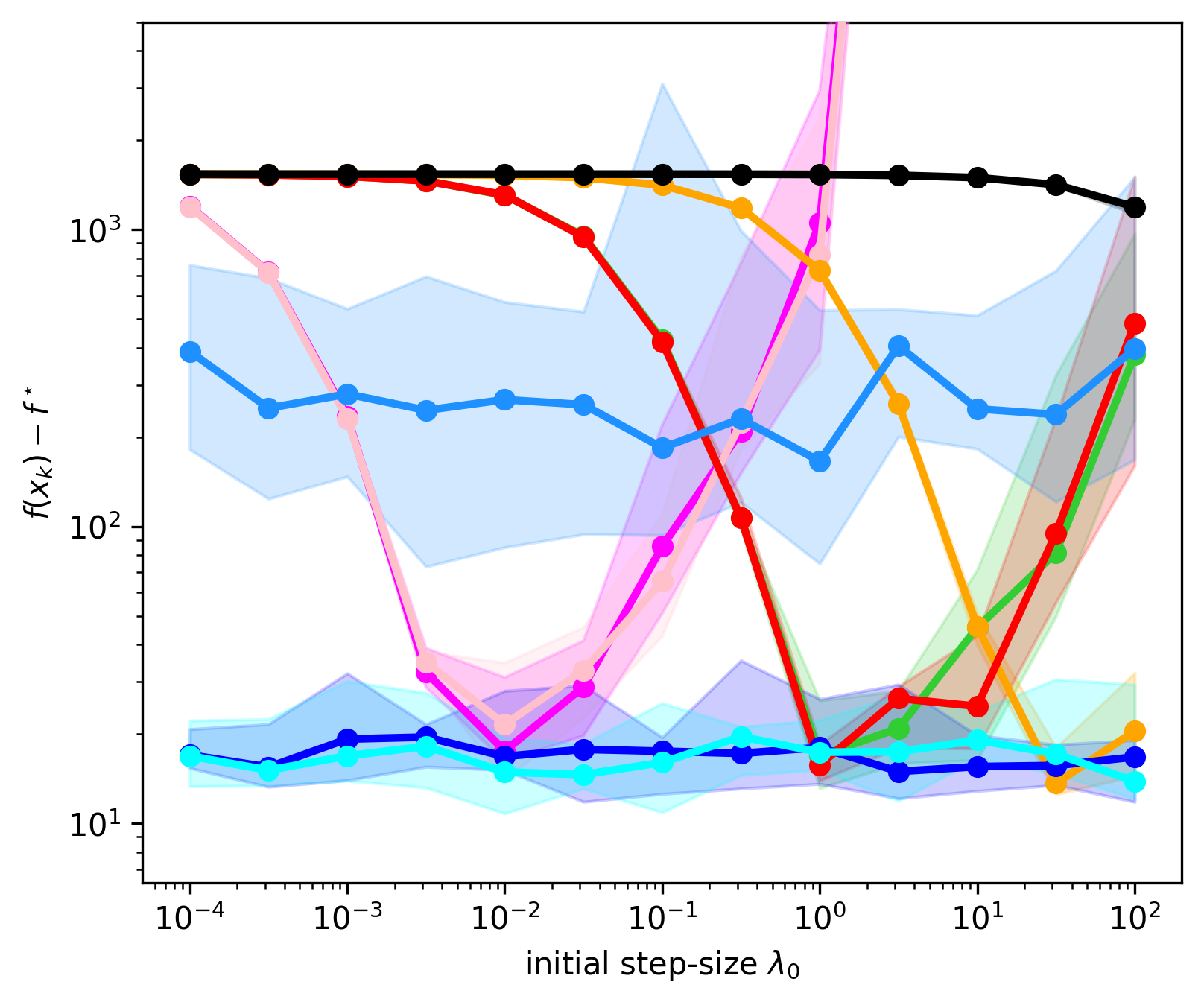} 
			}&
			\makecell{
				\footnotesize Logistic -- 2moons\\
				\includegraphics[width=0.25\textwidth]{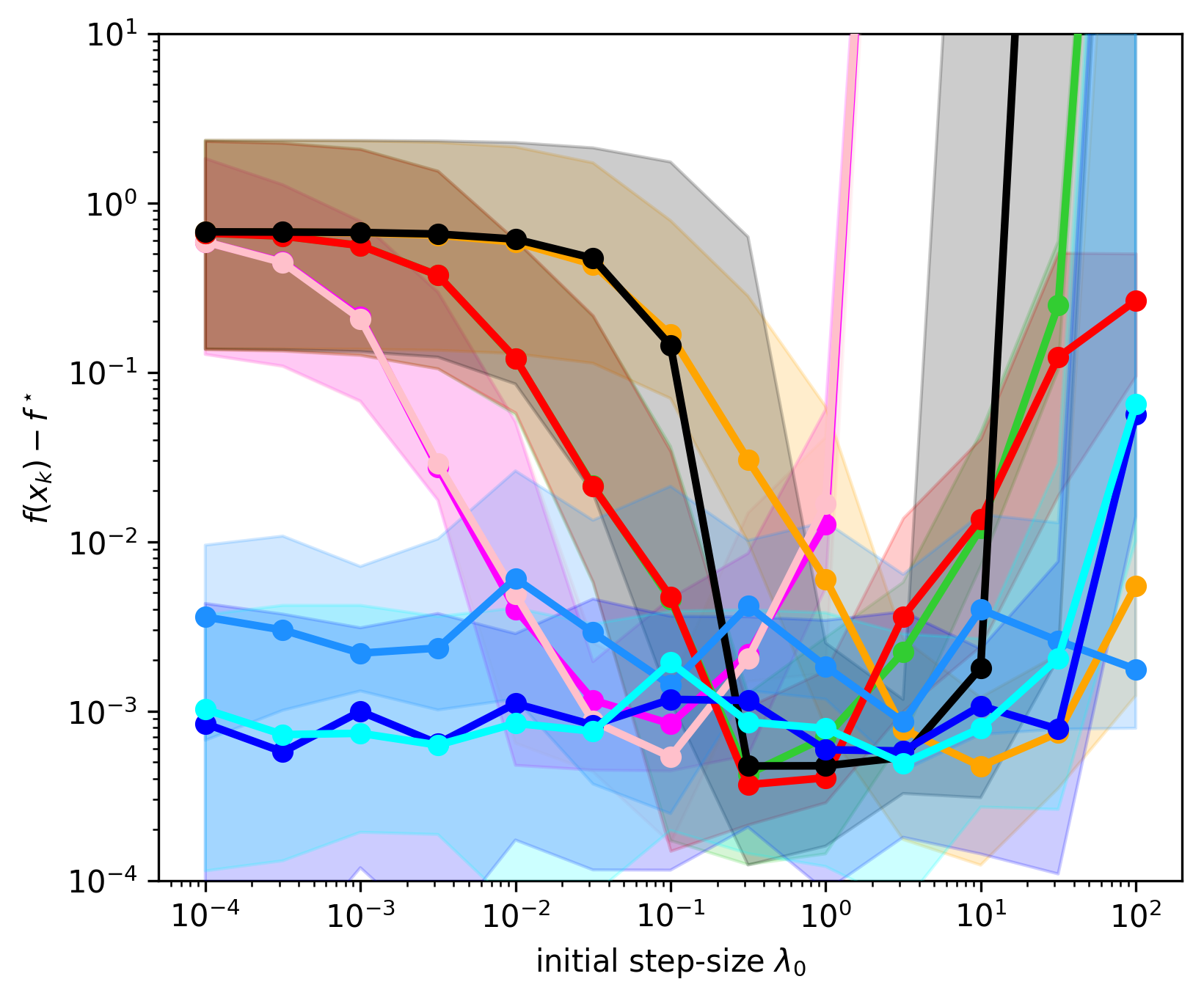}
			}
			&
			\makecell{
				\footnotesize Logistic -- w8a\\
				\includegraphics[width=0.25\textwidth]{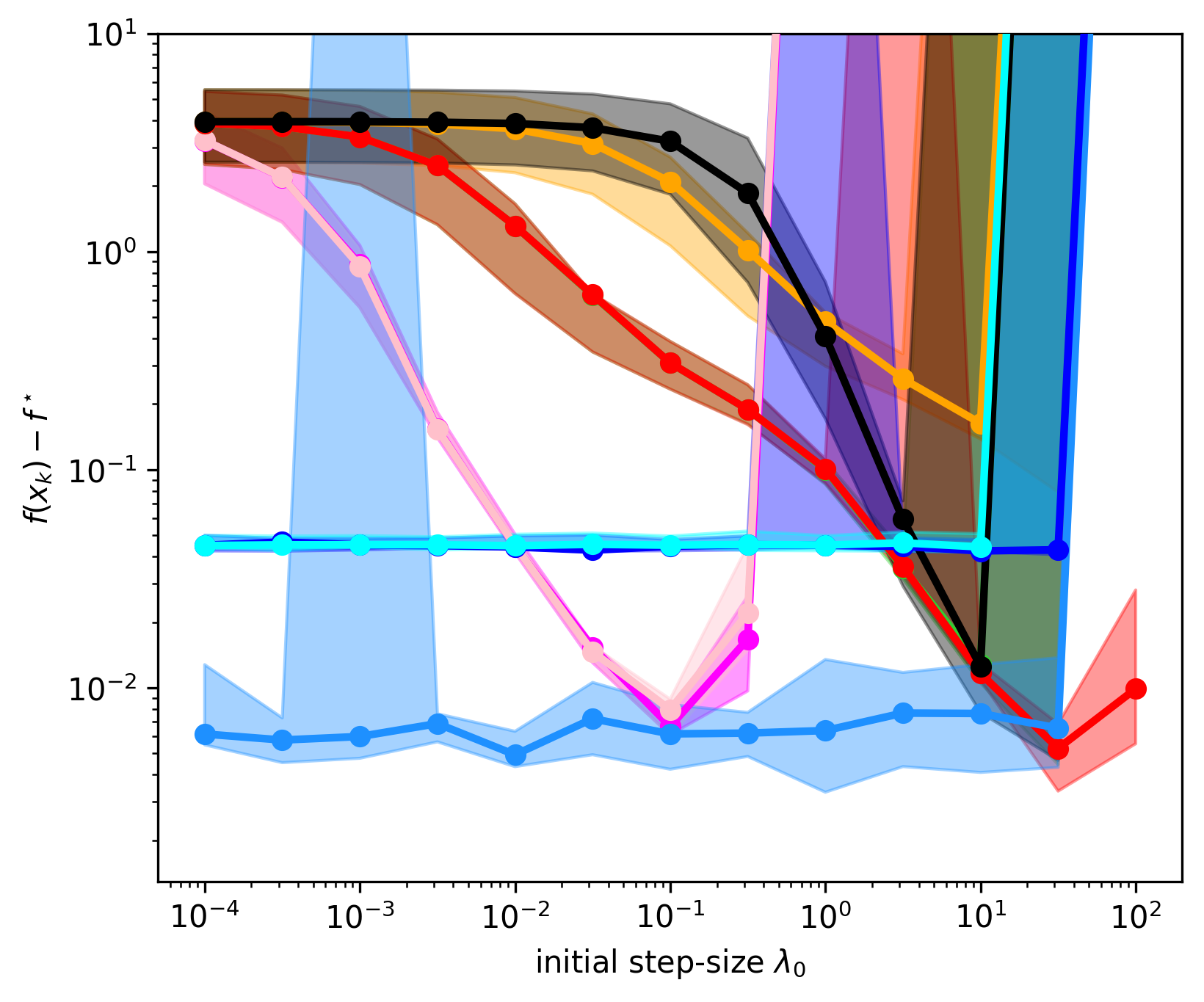}
			}
		\end{tabular}
		\caption{Experiments in the same setting as in Figure~\ref{fig::stepsize_sensitivity} on three other optimization problems. Except for a few outliers, out methods are significantly more stable to the choice of $\lambda_0$ than other algorithms and always work well for small values of $\lambda_0$.
		} \label{fig::detailed_sensitivity}
	\end{figure*}

	\subsection{Convergence rates in expectation}
	When the functions $f_\xi's$ are $\mu$-strongly convex (\ref{case::strcvx}), V-\textbf{III} of step-size \eqref{eq:choicestep} provides the algorithm with a rate of convergence.

	\begin{theorem}\label{thm::CVrates} Consider \ref{case::strcvx} with Assumption~\ref{ass:VarianceControl}. Then for V-\textbf{III} of the step-size \eqref{eq:choicestep} with any $0 < \delta < 1/2$, {\color{black} one has that, for any $k \geq 3$,
			\begin{equation}\label{eq:convrate}
				\esp{T_k}   \leq  2\exp \left( 
				8 \frac{L^2}{\mu^2} \frac{1- (k-2)^{-2 \delta}}{\delta} - \frac{\tau}{16} (k-2)^{1/2-\delta} \right)\left( \esp{T_2} + \frac{\sigma^2}{2 L^2} \right)
				+ \frac{16 \sigma^2}{\tau \mu^2  } (k-2)^{-(1/2 + \delta)},
			\end{equation}
			where $\tau = \min\left\{ \frac{\mu}{2\sqrt{2}L}, \frac{\mu}{\mu+2^\delta\sqrt{2}L}  , \frac{\mu^2}{2^\delta2L^2} \right\}$.
		}
	\end{theorem}
	
	The first term on the right-hand side of \eqref{eq:convrate} is the only one that slightly depends on $\lambda_0$ through $\esp{T_2}$ and it vanishes at the exponential rate $\exp(-(k-2)^{1/2-\delta})$. The asymptotic behavior is thus controlled by the second term and the rate is $\mathcal{O}\left( \frac{1}{k^{1/2 + \delta}} \right)$.
	Moreover, since $\esp{\norm{x_{k+1}-x^\star}^2} \leq \esp{T_k}$,  the  same convergence rate holds for $\esp{\norm{x_{k+1}-x^\star}^2}$ stated in \eqref{eq::ourRate_strcvx}.
	This is the same asymptotic rate as for vanilla SGD and the method of \citet{malitsky2019adaptive} but with much weaker dependence on the parameter $\lambda_0$ as explained in Section~\ref{sec::tuningfree}.
	We recommend using V-\textbf{III} of the step-size  as it provides the strongest guarantees.


	\section{Numerical experiments} \label{sec:num}

	\begin{figure*}[t]
		\centering
		\begin{tabular}{ccc}
			\multicolumn{3}{c}{\includegraphics[width=0.8\textwidth]{Figures/sensitivity_legend.png}}
			\\
			\makecell{
				\footnotesize Linear -- Synthetic\\
				\includegraphics[width=0.25\textwidth]{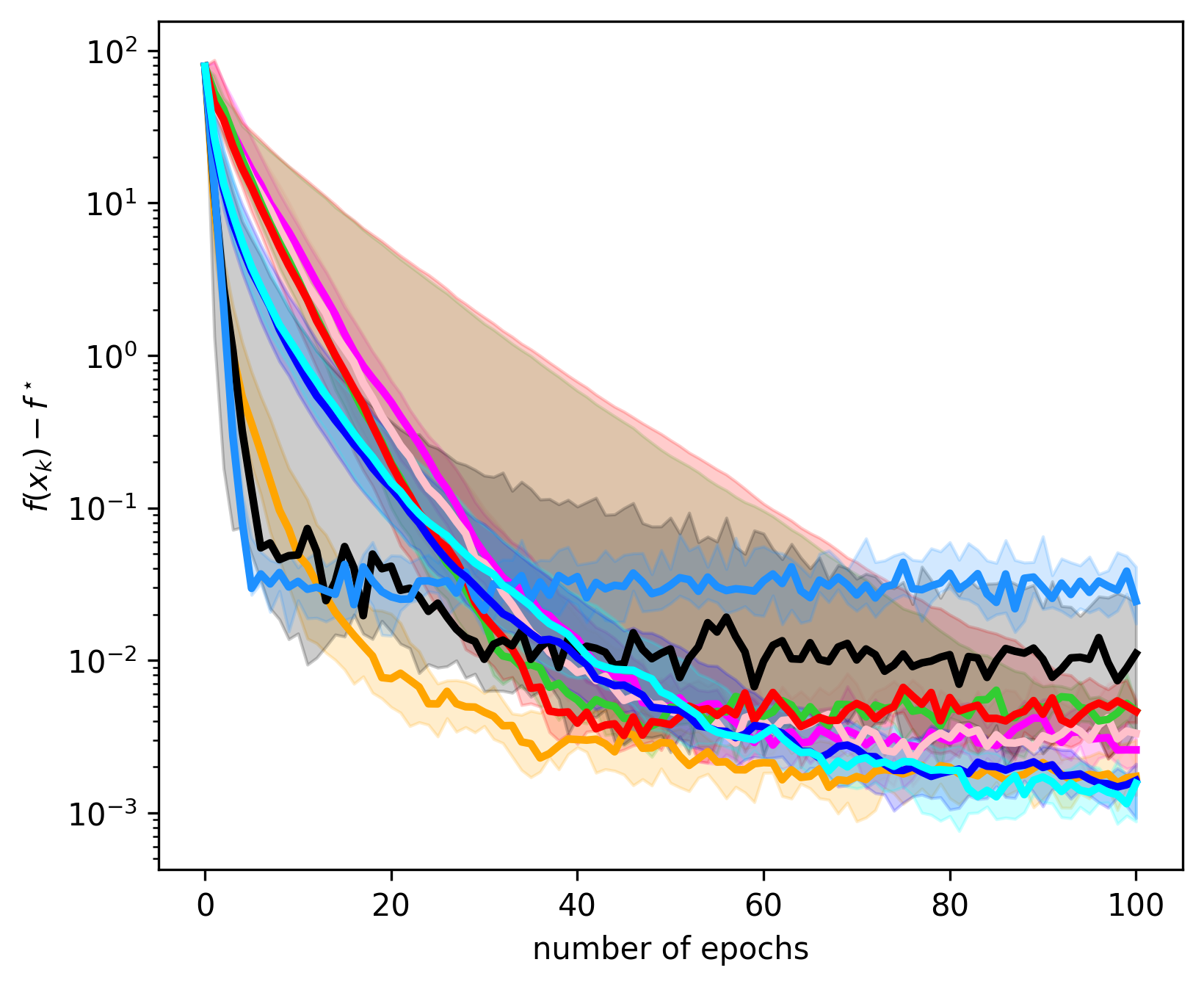}
			}&
			\makecell{
				\footnotesize Linear -- Diabetes\\
				\includegraphics[width=0.25\textwidth]{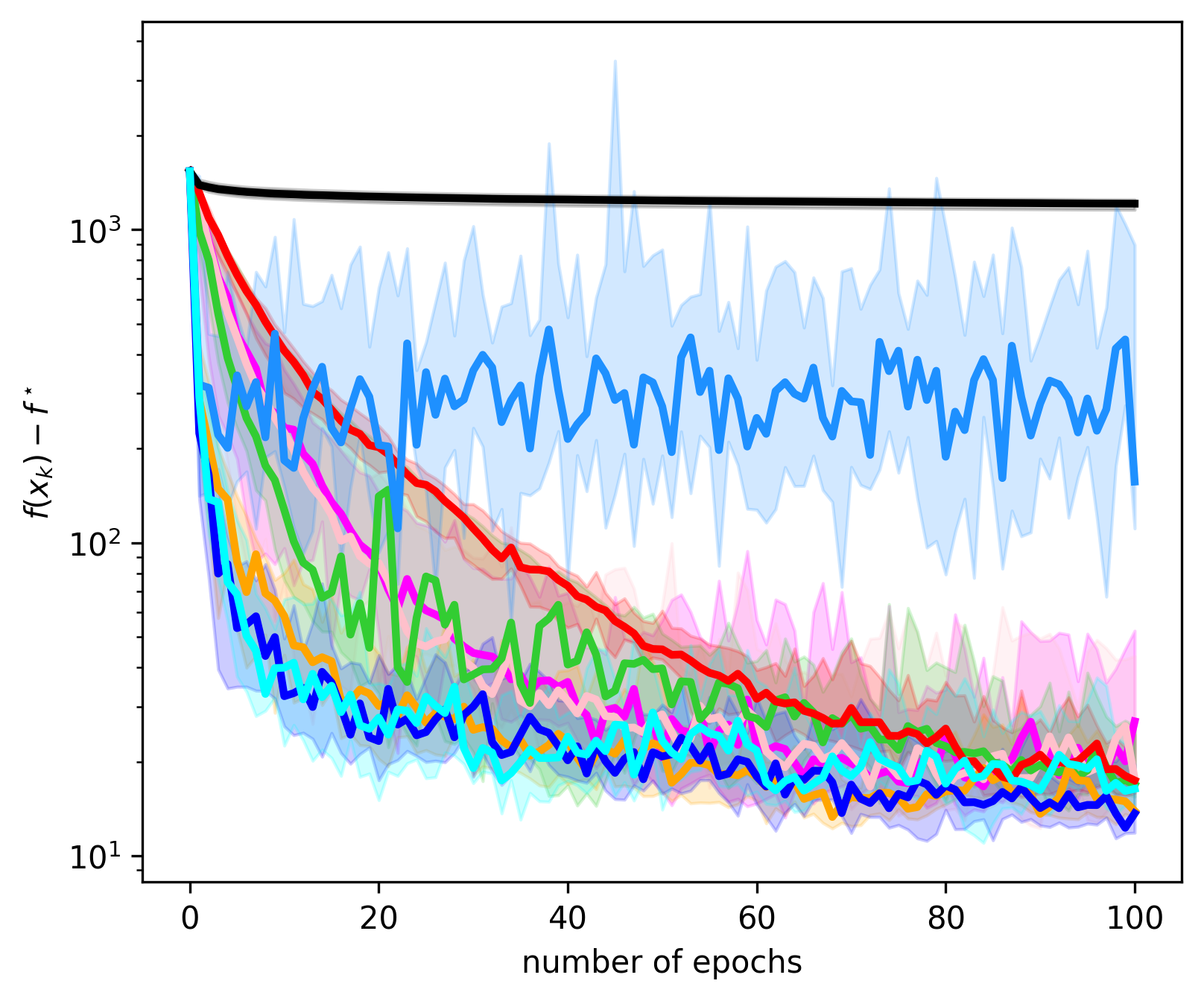} 
			}&
			\makecell{
				\footnotesize Sum-of-Ridges -- Synthetic\\
				\includegraphics[width=0.25\textwidth]{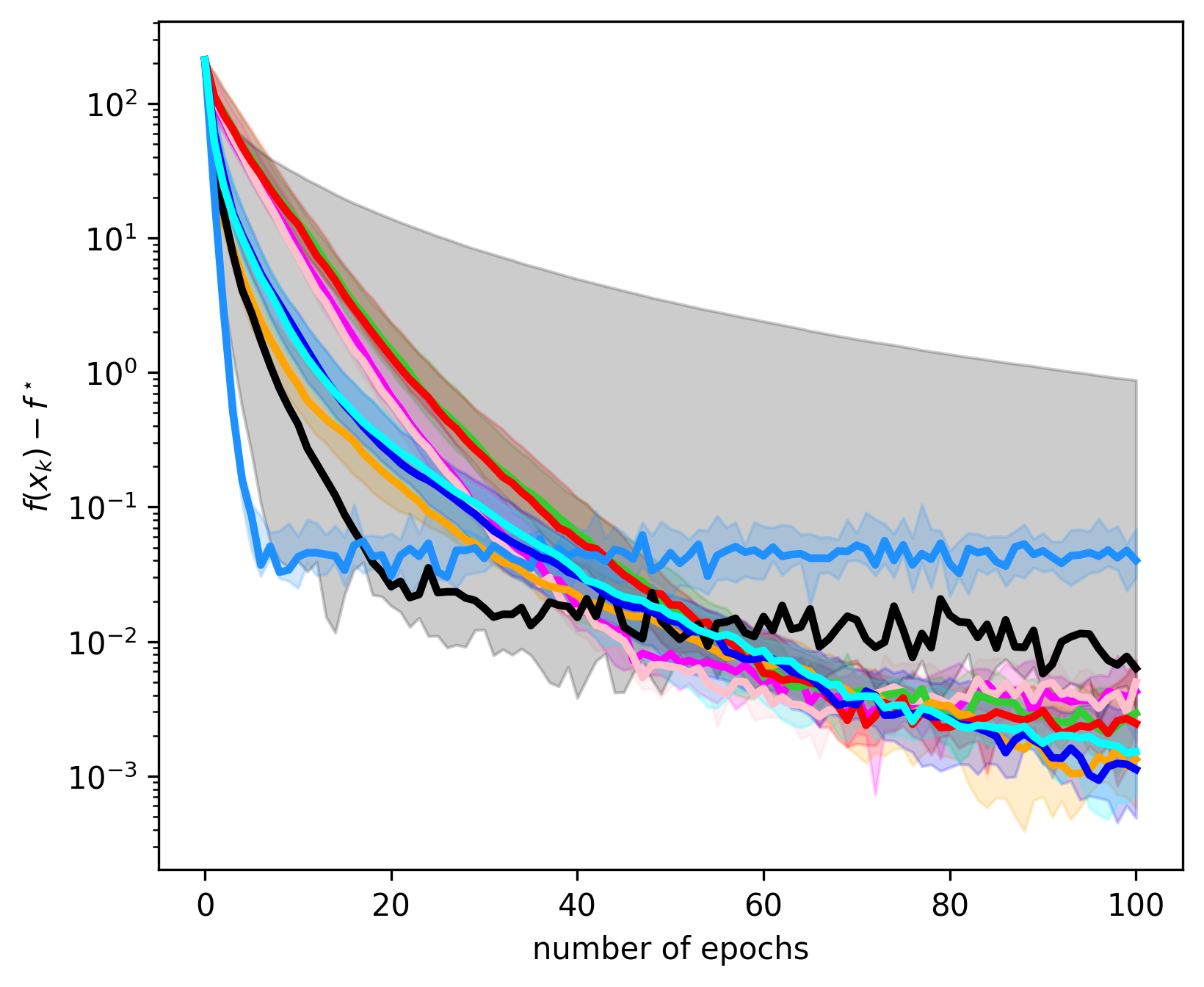} 
			}
			\\
			\makecell{
				\footnotesize Logistic -- 2moons\\
				\includegraphics[width=0.25\textwidth]{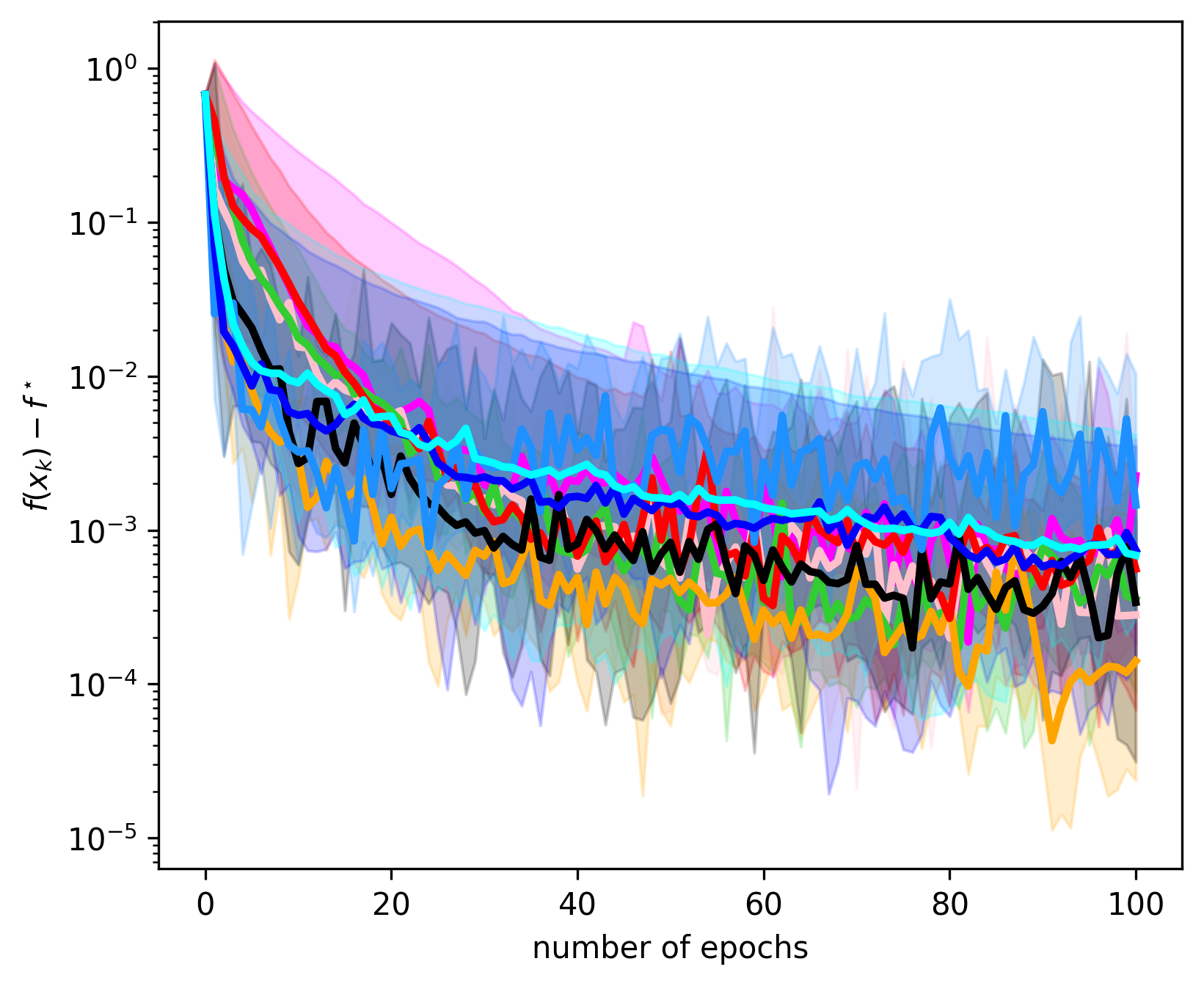}
			}&
			\makecell{
				\footnotesize Logistic -- w8a\\
				\includegraphics[width=0.25\textwidth]{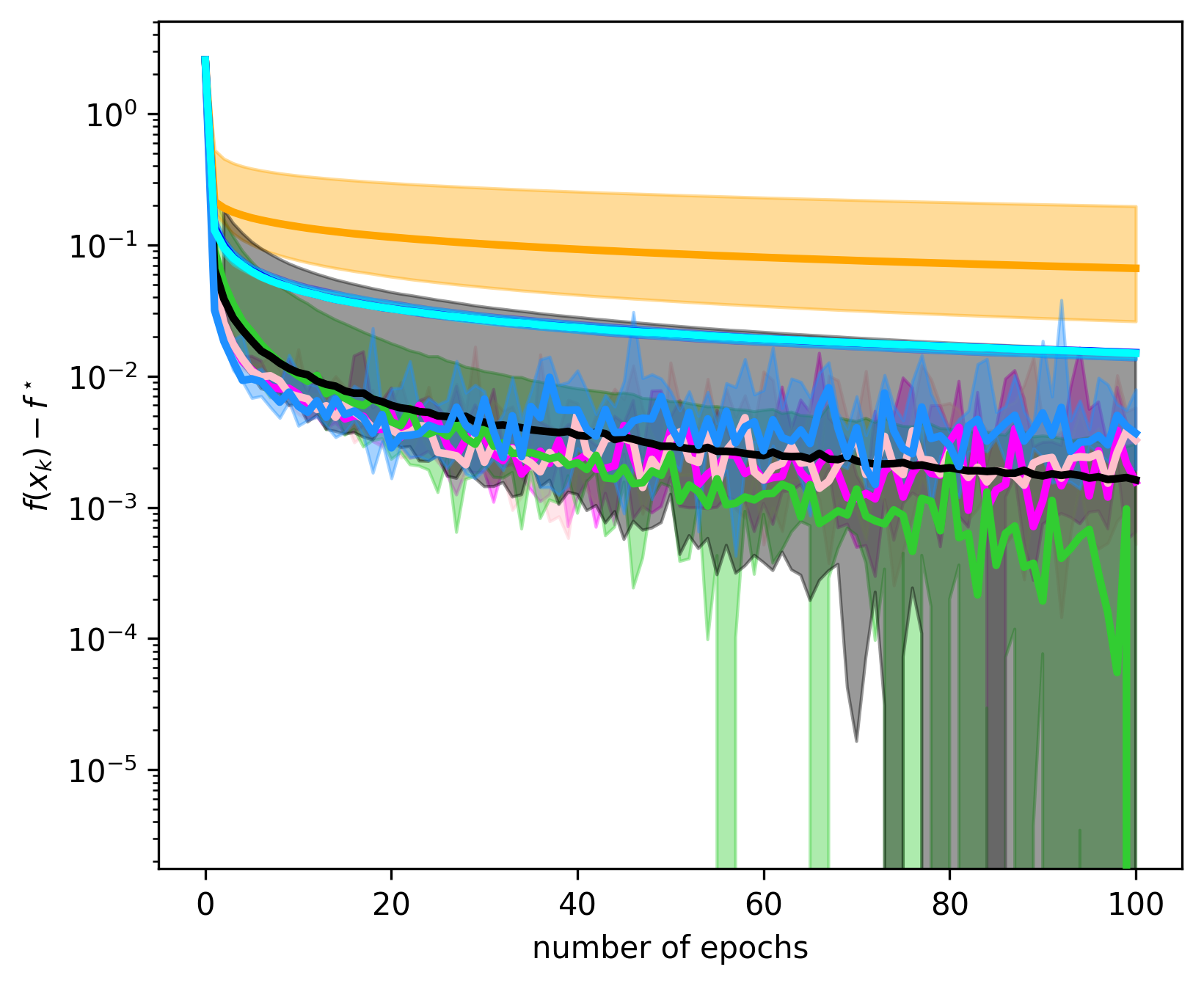}
			}&
			\makecell{
				\footnotesize Poisson -- Synthetic\\
				\includegraphics[width=0.25\textwidth]{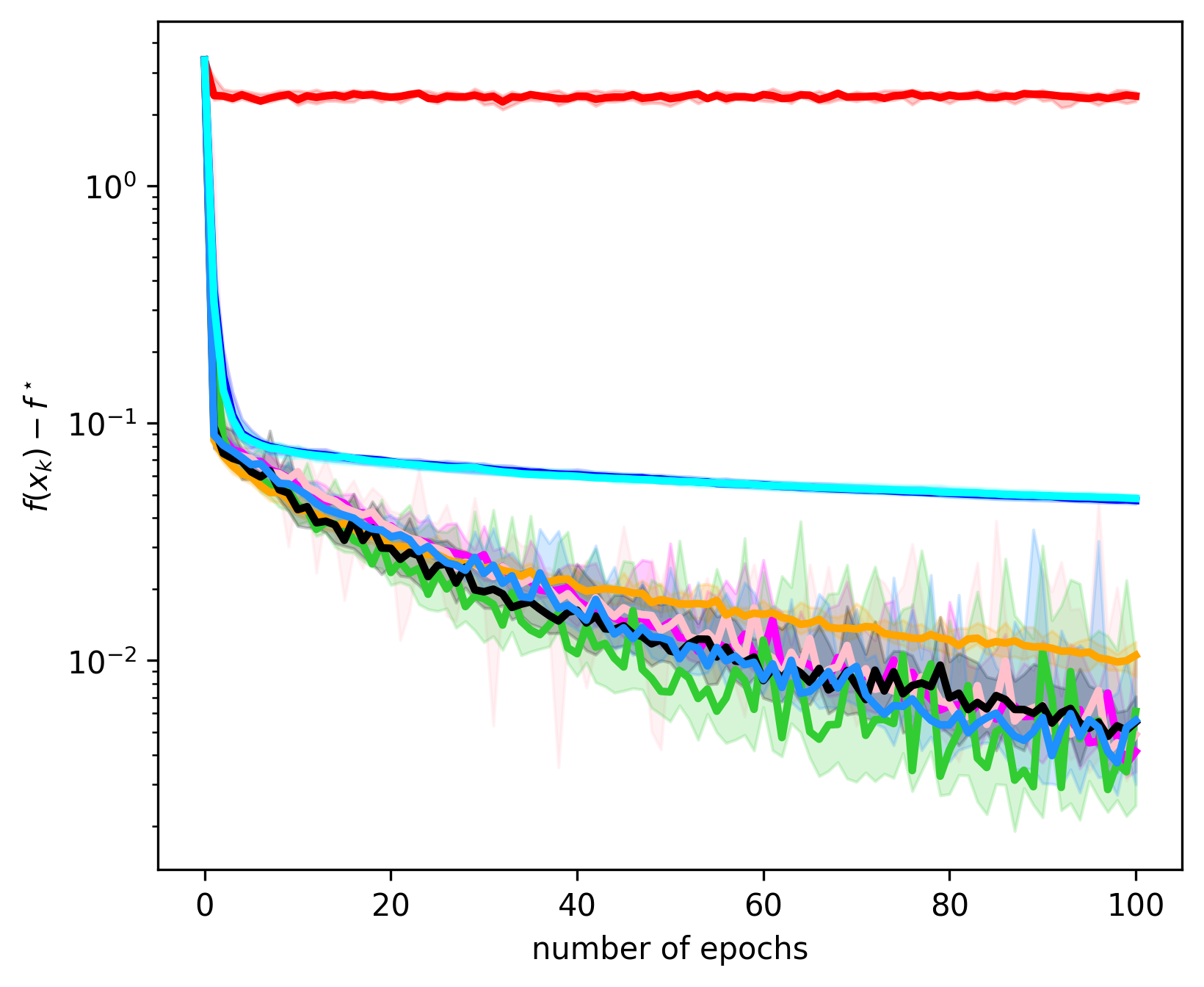}
			}
		\end{tabular}
		\caption{Evolution of the sub-optimality  $f(x_k)-f^\star$ over the epochs. Comparison between the three variants of AdaSGD used \emph{without tuning} and other methods that are tuned via a dense grid-search. Solid curves: median value across ten runs, area: gap between $10\%$ and $90\%$ quantiles.} \label{fig::EXP_loss}
	\end{figure*}

	To assess the performance of AdaSGD, we consider six different stochastic convex differentiable optimization problems of the form \eqref{eq:optim}, with synthetic and real data. We consider four types of loss functions that illustrate the possible different settings from Section~\ref{sec:conv}: linear and ridge regression that fit \ref{case::linreg} and \ref{case::ridge}, respectively; logistic regression that does not fit the cases considered, and Poisson regression where $\gf{}$ is locally Lipschitz continuous but not globally.
	We compare our method to SGD with fixes step-size $\lambda_{0}$ (no decay), SGD with decay: $\lambda_{k} = \frac{\lambda_{0}}{k^{1/2+\delta}}$ ($\delta=10^{-2}$) and the two stochastic versions from \citet{malitsky2019adaptive} (AdaSGD--MM) that also feature a tunable step-size $\lambda_0$ (previously denoted by $\alpha$ in Section~\ref{sec:prev}).
	We also consider two additional methods with adaptive step-sizes: the step-size version of Adagrad and the stochastic Polyak step-size \citep{loizou2021stochastic}. Note that these methods still feature a ``step-size'' like parameter (denoted by $\lambda_0$ in the figures for consistency).
	We provide further details on the experiments in Section~\ref{app:expdetails} of the Appendix and a publicly available code.\footnote{\url{https://github.com/camcastera/AdaSGD}}

	\noindent
	\textbf{Sensitivity to the initial step-size.}
	We first study how the choice of the initial step-size $\lambda_0$ affects each of the seven algorithms. The results for the first three problems on Figure~\ref{fig::stepsize_sensitivity} and the three others on Figure~\ref{fig::detailed_sensitivity}  show that AdaSGD is indeed much less sensitive to $\lambda_0$ than other algorithms (except for a few outliers in logistic regression, but the median remains stable even there). This significantly reduced sensitivity also does not seem to influence the performance as the value of $f$ achieved is comparable or better than that of optimally-tuned SGD. We also remark that despite being adaptive, the step-size Adagrad and Stochastic Polyak are more sensitivities to $\lambda_0$ than our method.
	We note nonetheless that V-\textbf{I} seems to have a significantly different behavior than V-\textbf{II} and V-\textbf{III} (which are relatively close). While V-\textbf{I} comes with less strong theoretical guarantees, it is sometimes better (or worse) than the other two variants depending on the problem. This is not surprising as vanilla SGD also peforms sometimes better with or without decay depending on the problem.
	Figures~\ref{fig::stepsize_sensitivity} and~\ref{fig::detailed_sensitivity} clearly show that our method fixes the issue from the stochastic method of \citet{malitsky2019adaptive} as for the latter, $\lambda_0$ has a crucial impact on performance. This is the main goal and contribution of this work.
	An important take-away for practitioners is that we see that \emph{taking low values of $\lambda_0$ for AdaSGD seem to always be a good choice}, therefore we can use any variant of AdaSGD with (for example) $\lambda_0=10^{-3}$ and avoid tuning the step-size.

	\noindent
	\textbf{Minimization performance.}
	We ran AdaSGD with the default value (\emph{without tuning} $\lambda_0$) and compare it to the other algorithms that were tuned via a dense grid search on six problems. Results are shown on Figure~\ref{fig::EXP_loss}.
	We note that V-\textbf{I} sometimes struggles, which illustrates the importance of the decay introduced in V-\textbf{II} and V-\textbf{III}. They exhibit fewer oscillations due to the control of the noise (the additional term in \eqref{eq::Lyap}). Interestingly, V-\textbf{I} works better than the other two on Poisson regression, which is a problem where there is no global Lipschitz continuity of the gradients.
	We observe that the performance of AdaSGD is comparable to that of the other methods.
	The gain or loss is often marginal but AdaSGD has the significant benefit that no tuning is required to achieve this performance.
	AdaSGD requires two gradient evaluations per iteration but this is compensated by the absence of grid search, which itself requires many gradient oracle calls.

	\noindent
	\textbf{Effect of the batch-size.} To investigate the effect that the size of the mini-batches have on the stability to $\lambda_0$, we repeated some of the experiments in the case of high noise (small mini-batches) and low noise (large mini-batches). The results presented in Figure~\ref{fig::smallerMB} and Figure~\ref{fig::largerMB} in the appendix show that our conclusions remain the same: a small choice of $\lambda_0$ is always good for our three methods and they exhibit minimization performances comparable to those of most optimally-tuned methods considered.

	\section{Conclusion}
	We introduced AdaSGD, an adaptation of AdaGD to the stochastic setting that coincides with AdaGD in the full-batch case. 
	AdaSGD transfers the advantages of AdaGD to the stochastic setting, some of which were lost in the original stochastic adaptation of AdaGD. Our method does not rely on any heuristics and is rather derived via a careful Lyapunov analysis, resulting in three variants step-size strategies.
	We conducted a thorough analysis, showing convergence and rates for our methods. These guarantees however rely on stronger assumptions than in the deterministic setting, which is due to the noise induced by stochastic gradients. 
	We show promising experiments that evidence that our primary goal is achieved: AdaSGD is very robust to the choice of the initial step-size without sacrificing minimization performance.
	
	The main directions for further research revolve around relaxing the requirements related to convexity. 
	In Appendix~\ref{app:convex}, we provide partial theoretical results that suggest that the convergence results of Section~\ref{sec:conv} could possibly be extended to all convex functions. On the other-hand, adapting AdaSGD to non-convex functions would allow training neural networks but represents a significant challenge since the derivation of Section~\ref{sec:lyap} relies heavily on the convexity inequality.

	
	\FloatBarrier 
	
	
	\appendix
	
	\section{Ensuring Square-summability of the Step-sizes} \label{sec:model_App}

	Theorem~\ref{thm:convergence} holds true when the sequence of step-sizes $(\lambda_k)_{k\in\N}$ is square-summable. 
	We can guarantee this in \ref{case::strcvx}, \ref{case::linreg} and \ref{case::ridge} as discussed in the main text. We now prove this and provide further discussion on these three settings.

	\subsection{\ref{case::strcvx}: Strong Convexity}\label{app:strcvx}
	
	In \ref{case::strcvx}, we assume that all the functions $f_\xi$ are strongly convex, which is a sub-class of convex functions. We recall that this means that there exists $\mu>0$, such that for all $\xi$, and $x,y\in\R^d$,
	$$
	f_{\xi}(y) \geq  f_{\xi}(x) +   \langle \gf[\xi](x) , y - x \rangle +  \frac{\mu}{2} \| x - y\|^2,
	$$
	or equivalently
	\begin{equation}\label{eq::gradStrMonotone}
		\langle  \gf[\xi](x) - \gf[\xi](y) , x - y \rangle \geq \mu \| x - y\|^2.
	\end{equation}
	Note that we assume that all the functions $f_\xi$ have the same strong convexity parameter $\mu$. We could have equivalently assumed that there exists $\mu>0$ such that each $f_\xi$ is $\mu_\xi$ strongly convex with $\mu_\xi\geq \mu>0$. In the case where $f$ is a finite-sum of functions this is always possible (by taking $\mu=\min_\xi \mu_\xi$).
	
	Strong convexity allows upper-bounding the step-sizes $(\lambda_k)_{k\in\N}$. Indeed, by using Cauchy-Schwarz inequality on \eqref{eq::gradStrMonotone}, we get,
	$$
	\norm{\gf[\xi](x) - \gf[\xi](y)} \norm{x - y} \geq \mu \| x - y\|^2.
	$$
	So in particular, using this on the iterates $(x_k)_{k\in\N}$ and mini-batches $(\xi_k)_{k\in\N}$ in Algorithm~\ref{ass:VarianceControl}, for all $k\geq 1$
	\begin{equation}
		\frac{\norm{x_k-x_{k-1}}}{ \norm{\gf[\xi_{k-1}](x_k) - \gf[\xi_{k-1}](x_{k-1})}} \leq \frac{1}{\mu}. \label{eq:boundMu}
	\end{equation}
	Hence, from the definition of the step-size $\lambda_k$ in \eqref{eq:choicestep}, we deduce that,
	\begin{equation*} 
		\lambda_k  \leq \begin{cases}
			\frac{1}{2\sqrt{2}\mu},
			\quad &\text{(V-\textbf{I})}
			\\
			\frac{1}{k^{1/2+\delta}}\frac{1}{2\sqrt{2}\mu}
			\quad &\text{(V-\textbf{II} and V-\textbf{III})}
		\end{cases} 
	\end{equation*}
	Therefore, for Variants \textbf{II} and \textbf{III} of the step-size, we can ensure that,
	$$
	\sum_{k=0}^{+\infty} \lambda_k^2 \leq \sum_{k=0}^{+\infty} \frac{1}{k^{1+2\delta}} < +\infty.
	$$
	Therefore, Theorem~\ref{thm:convergence} holds true in \ref{case::strcvx}. Why strong convexity may seem to be the natural assumption for ensuring square-summability, a similar result can be derived in non-strongly convex settings.
	

	\subsection{\ref{case::linreg}: Linear Regression.} 
	We now consider the classical machine learning problem of linear regression, where $f$ is a finite sum \eqref{eq:optim_fullbatch} and the $f_{\ell}$'s need not be strongly convex. We assume that for all $x\in\R^d$, the functions $(f_\ell)_{\ell\in\{1,\ldots,N\}}$ take the following form:
	$$
	f_\ell(x) = \frac{1}{2}(y_\ell - \inner{w_\ell}{x})^2,
	$$
	where $w_\ell$ is a $d$-dimensional vector of predictors (with $w_\ell \neq 0$) and $y_\ell$ is a scalar response variable. The discussion can easily be extended to the case where $w_\ell$ are matrices and $y_\ell$ are vectors, which we do not consider for the sake of simplicity.
	
	For all $\ell\in\{1,\ldots,N\}$, and $x\in\R^d$, we have
	\begin{equation}
		\gf[\ell](x) = (\inner{w_\ell}{x} - y_\ell) w_\ell, \label{eq:grad_lin}
	\end{equation}
	and the Hessian of $f_{\ell}$ is given by
	$$
	\nabla^2 f_{\ell}(x) =  w_\ell  w_\ell^{T}, \quad \mbox{ for all } x \in \R^d.
	$$
	Since $w_\ell  w_\ell^{T}$ is a rank-1 matrix, the Hessian admits a zero eigenvalue (actually it even has $d-1$) so the function $f_\ell$ is not strongly convex.

	We now use this on the iterates of Algorithm~\ref{algo::AdaSGD}. For the sake of simplicity, we assume, without loss of generality, that the mini-batches are single valued, \textit{i.e.} for all $k \in \NN$, $\xi_k\in\{1,\ldots,N\}$. 
	Then remark that for any $k \in \NN$ of Algorithm~\ref{algo::AdaSGD}, \eqref{eq:grad_lin} implies that
	\begin{align}\label{eq:decomplinreg}
		\begin{split}
			\norm{\gf[\xi_{k-1}](x_k) - \gf[\xi_{k-1}](x_{k-1})}^2  
			&=  \norm{ (y_{\xi_{k-1}} - \inner{w_{\xi_{k-1}}}{x_{k}} ) w_{\xi_{k-1}} - (y_{\xi_{k-1}} - \inner{w_{\xi_{k-1}}}{x_{k-1}} ) w_{\xi_{k-1}}}^2 \nonumber \\
			& =  \norm{   \inner{w_{\xi_{k-1}}}{x_{k} - x_{k-1}}  w_{\xi_{k-1}}}^2 =   \inner{w_{\xi_{k-1}}}{x_{k} - x_{k-1}}^2 \norm{w_{\xi_{k-1}}}^2. 
		\end{split}
	\end{align}
	But remark that from the definition of Algorithm~\ref{algo::AdaSGD},
	$$
	x_{k} - x_{k-1} = -\lambda_{k-1}w_{\xi_{k-1}} =  - \lambda_{k-1} \alpha_{k-1} w_{\xi_{k-1}},
	$$ 
	where $\alpha_{k-1} = (\inner{w_{\xi_{k-1}}}{x} - y_{\xi_{k-1}})$.
	So inserting the above in \eqref{eq:decomplinreg},
	\begin{equation}\label{eq:decomplinreg}
		\norm{\gf[\xi_{k-1}](x_k) - \gf[\xi_{k-1}](x_{k-1})}^2  
		=  \lambda_{k-1}^2\alpha_{k-1}^2\norm{w_{\xi_{k-1}}}^6. 
	\end{equation}
	Then similarly,
	$$
	\norm{x_{k} - x_{k-1}}^2 = \lambda_{k-1}^2 \alpha_{k-1}^2 \|w_{\xi_{k-1}}\|^2,
	$$
	So we obtain that
	$$
	\frac{\norm{x_k-x_{k-1}}}{ \norm{\gf[\xi_{k-1}](x_k) - \gf[\xi_{k-1}](x_{k-1})}} = \frac{1}{ \|w_{\xi_{k-1}}\|^2}.
	$$
	Finally, by using the finite-sum structure, we denote by $\hat{\mu} = \max_{\ell \in\{1,\ldots,N\}}{\|w_{\ell}\|^2}
	$ and we arrive to the same inequality as in \eqref{eq:boundMu}, \emph{without} relying on strong convexity. Therefore the same reasoning as in \ref{case::strcvx} applies here and we deduce that for 
	Variants \textbf{II} and \textbf{III} of the step-size,
	$
	\sum_{k=0}^{+\infty} \lambda_k^2 < +\infty.
	$

	\subsection{\ref{case::ridge}: Sum of Ridge Functions}
	As a third setting, we consider a generalization of linear regression  where the function $f$ is a finite sum of convex ridge functions. 
	More precisely, we assume that, for each $\ell\in\{1,\ldots,N\}$, the function $f_\ell$ takes the form
	\begin{equation*}
		f_\ell(x) = g_\ell\left(\inner{w_\ell}{x}\right), 
	\end{equation*}
	where $w_\ell$ is defined as in \ref{case::linreg}, and $g_\ell\colon \R\to \R$ is a known function that is differentiable and $\mu_\ell$-strongly convex on $\R$. This setting includes the previous example of linear regression with $g_\ell(u) = \frac{1}{2} (u-y_\ell)^2$, for all $u\in\R$.
	
	Linear combinations of ridge functions constitute a functional class that is a central tool in the approximation theory of multivariate functions, and for the study of neural networks with a single or two hidden layers.  For a survey on these topics  we refer to \citep{MR4390796}. Using sums of ridge functions has also proved to be relevant for the regularization of inverse problems \citep{MR4635236}, the study of convex optimization in the space of measures \citep{MR4529991}, and for adversarial bandit problems \citep{10.1214/24-AOS2395} to name a few.

	Note that assuming that as for linear regression, the $f_\ell$ are not strongly convex (they are only in one direction given by $w_\ell$), and that $f$ needs not to be strongly convex as well. Hence, this setting remains different from \ref{case::strcvx}.

	Similarly to \ref{case::linreg}, for all $\ell\in\{1,\ldots,N\}$ and $x\in\R^d$,
	\begin{equation}
		\gf[\ell](x) = g_{\ell}'(\inner{w_\ell}{x}) w_\ell. \label{eq:grad_ridge}
	\end{equation}
	
	Again, assuming without loss of generality, that $k \in \NN$, $\xi_k\in\{1,\ldots,N\}$, using the strong convexity of $g_{\xi}$, the above implies for Algorithm~\ref{algo::AdaSGD} that, for all $k\geq 1$, 
	\begin{multline*}
		\norm{\gf[\xi_{k-1}](\xk) - \gf[\xi_{k-1}](\xkm)}^2 
		=  \left(g'_{\xi_{k-1}}\left( \inner{w_{\xi_{k-1}}}{\xk}\right) - g'_{\xi_{k-1}}\left( \inner{w_{\xi_{k-1}}}{\xkm}\right)\right)^2 \norm{w_{\xi_{k-1}}}^2 \\
		\geq   \mu_{\xi_{k-1}}^2 \left(\inner{w_{\xi_{k-1}}}{\xk-\xkm}
		\right)^2 \norm{w_{\xi_{k-1}}}^2 =    \mu_{\xi_{k-1}}^2 \left(\inner{w_{\xi}}{ \lambda_{k-1}\gf[\xikm](\xkm)}\right)^2 \norm{w_{\xi}}^2.
	\end{multline*}
	Moreover from \eqref{eq:grad_ridge}, one has that
	$$ \inner{w_{\xi}}{\gf[\xikm](\xkm)} = g'_{\xi}\left( w_{\xi}^T x\right) \norm{w_{\xi}}^2,$$
	consequently
	$$
	\norm{\gf[\xikm](\xk) - \gf[\xikm](\xkm)}^2 \geq  \mu_\xikm^2\lambda_{k-1}^2 g'_{\xikm}\left( w_{\xikm}^T x\right) \norm{w_{\xikm}}^6,
	$$
	and similarly to \ref{case::strcvx}, we obtain the following upper bound
	\begin{eqnarray*}
		\frac{\norm{\xk-\xkm}}{\norm{\gf[\xikm](\xk) - \gf[\xikm](\xkm)}} 
		& \leq &
		\frac{ 1 }{\mu_\xikm \norm{w_{\xikm}}^2}.
	\end{eqnarray*}
	From that point we repeat the same argument as in the previous section to deduce that 
	$
	\sum_{k=0}^{+\infty} \lambda_k^2 < +\infty.
	$

	\section{Missing Proofs and Convergence Results}
	
	We start by showing the lemmas used to control the ``noisy'' term in the beginning of Section~\ref{sec:conv}.
	\subsection{Variance Transfer}\label{sec::varTransfer}
	
	We first recall the definition of Lipschitz continuity of the gradient.
	\begin{definition}\label{def::Lsmooth}
		A function $g\colon \R^d\to\R$ is said to be locally smooth or to have locally Lipschitz continuous gradient, if $g$ is differentiable and for all $x\in\R^d$, there exists $L>0$ and a neighborhood $V$ of $x$ such that, for all $x,y\in V$,
		\begin{equation*}
			\norm{\gf(x)-\gf(y)}\leq L \norm{x-y}.
		\end{equation*}
		If $V=\R^d$, the function is simply said to be (globally) $L$-smooth or to have $L$-Lipschitz continuous gradient.
	\end{definition}

	\begin{lemma} \label{lemma_bound}
		For all $x\in\R^d$, and random variables ${\xi_1},{\xi_2}$ as defined in Section~\ref{sec::intro},
		\begin{multline*}
			\esp{\norm{\gf[\xi_1](x) - \gf[\xi_{2}](x)}^2} 
			\leq 4 \esp{\norm{\gf[\xi_1](x) - \gf[\xi_1](x^\star)}^2} 
			+ 4 \esp{\norm{\gf[\xi_{2}](x) - \gf[\xi_{2}](x^\star)}^2}
			\\
			+ 4 \esp{\norm{\gf[\xi_1](x^\star)}^2} +  4\esp{\norm{\gf[\xi_{2}](x^\star)}^2}.
		\end{multline*}
	\end{lemma}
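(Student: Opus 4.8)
The plan is to use $\gf[\xi_1](x^\star)$ and $\gf[\xi_2](x^\star)$ as pivot points and then to group the resulting terms so that two applications of the elementary inequality $\norm{a+b}^2\le 2\norm{a}^2+2\norm{b}^2$ yield exactly the constant $4$ claimed in the statement. No probabilistic input is needed beyond linearity of expectation: the whole estimate is a pointwise (deterministic) inequality that is then averaged.

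First I would fix $x\in\R^d$ and a realization of $(\xi_1,\xi_2)$ and write the exact algebraic identity
\[
\gf[\xi_1](x)-\gf[\xi_2](x)
= \big[(\gf[\xi_1](x)-\gf[\xi_1](x^\star)) - (\gf[\xi_2](x)-\gf[\xi_2](x^\star))\big]
+ \big[\gf[\xi_1](x^\star)-\gf[\xi_2](x^\star)\big].
\]
Denoting by $u$ and $v$ the two bracketed vectors, I apply $\norm{u+v}^2\le 2\norm{u}^2+2\norm{v}^2$. Then I bound $\norm{u}^2\le 2\norm{\gf[\xi_1](x)-\gf[\xi_1](x^\star)}^2 + 2\norm{\gf[\xi_2](x)-\gf[\xi_2](x^\star)}^2$ using the same inequality, and likewise $\norm{v}^2\le 2\norm{\gf[\xi_1](x^\star)}^2+2\norm{\gf[\xi_2](x^\star)}^2$. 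Combining these three bounds gives the pointwise estimate in which all four coefficients equal $4$; taking the expectation with respect to $(\xi_1,\xi_2)$ and using linearity then delivers the claim.

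There is essentially no real obstacle here; the only point worth care is the choice of grouping. Pairing the two ``curvature'' differences $\gf[\xi_i](x)-\gf[\xi_i](x^\star)$ together inside $u$ — rather than splitting $\gf[\xi_1](x)-\gf[\xi_2](x)$ into three separate pieces, which would produce the constant $3$ — is what yields precisely the constant $4$ that is needed downstream. One should also keep, and not attempt to cancel, the terms $\esp{\norm{\gf[\xi_i](x^\star)}^2}$: although $x^\star$ minimizes $f$ so that $\gf(x^\star)=0$, the individual stochastic gradients $\gf[\xi](x^\star)$ need not vanish (they do so only in the interpolation regime), and these are exactly the quantities that reappear as $\sigma_k^2$ in \eqref{eq::transfer}.
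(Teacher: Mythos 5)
Your proof is correct and is essentially the paper's own argument: the paper also obtains the constant $4$ from two nested applications of $\norm{a+b}^2\le 2\norm{a}^2+2\norm{b}^2$ with $x^\star$ as pivot, merely grouping in the opposite order (first splitting $\gf[\xi_1](x)-\gf[\xi_2](x)$ into the two gradients, then inserting $\gf[\xi_i](x^\star)$ into each). The four resulting terms and coefficients coincide exactly, so the difference is purely one of bookkeeping.
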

	
	\begin{proof}
		For all $x\in\R^d$, we have
		\begin{multline*}
			\norm{\gf[\xi_1](x) - \gf[\xi_{2}](x)}^2 \leq 2 \norm{\gf[\xi_1](x)}^2  + 2 \norm{\gf[\xi_{2}](x)}^2
			\\  =  2 \norm{\gf[\xi_1](x) - \gf[\xi_1](x^\star) + \gf[\xi_1](x^\star)}^2  + 2\norm{\gf[\xi_{2}](x) - \gf[\xi_2](x^\star) + \gf[\xi_2](x^\star)}^2
			\\  \leq 4 \norm{\gf[\xi_1](x) - \gf[\xi_1](x^\star)}^2 + 4 \norm{\gf[\xi_1](x^\star)}^2  + 4\norm{\gf[\xi_{2}](x) - \gf[\xi_2](x^\star)}^2 + 4\norm{\gf[\xi_2](x^\star)}^2.
		\end{multline*}
		The result is then obtained by taking expectations on both sides of the above.
	\end{proof}

	We can now prove Lemma~\ref{lem::uniformControl}.
	
	\begin{proof}[Proof of Lemma~\ref{lem::uniformControl}]
		First remark that if Assumption~\ref{ass:VCi} holds, then 
		there is a finite number of functions $f_\xi$, according to \eqref{eq:optim_fullbatch}. Thus, there exists a global Lipschitz constant $L=\max_{\ell \in \{1,\ldots,N\}} L_\ell$ on all the $\gf[\xi]$'s. If we rather make Assumption~\ref{ass:VCiii}, then the iterates lie in a compact subset $\mathsf{K}$ of $\R^d$ and each $f_\xi$is globally\footnote{This is because $\gf[\xi]$ is locally Lipschitz continuous and $\mathsf{K}$ can be covered by a finite union of open sets.} smooth on $\mathsf{K}$. Then by using the same argument as above, there exists a uniform Lipschitz constant $L$ on all the $\gf[\xi]$'s. Overall, we have shown in particular that under any of the three set of assumptions, there exists $L$, such that $\forall k\geq 1$, 
		$$\norm{\gf[\xi](x_k) - \gf[\xi](x^\star)} \leq L \norm{x_k - x^\star}.$$

		For all $k\geq 1$, let us consider $(x_k)_{k\in\N}$, $(\xi_k)_{k\in\N}$ and $(\lambda_k)_{k\in\N}$ as defined in Algorithm~\ref{algo::AdaSGD}. Then by direct application of Lemma~\ref{lemma_bound}, we have
		\begin{multline*}
			\esp[k-1]{\norm{\gf[\xi_k](x_k) - \gf[\xi_{k-1}](x_k)}^2} 
			\\
			\leq 4 \esp[k-1]{\norm{\gf[\xi_{k}](x_k) - \gf[\xi_{k}](x^\star)}^2} + 4 \esp[k-1]{\norm{\gf[\xi_{k-1}](x_k) - \gf[\xi_{k-1}](x^\star)}^2} \\
			+ 4 \esp[k-1]{\norm{\gf[\xi_{k}](x^\star)}^2} +  4\esp[k-1]{\norm{\gf[\xi_{k-1}](x^\star)}^2}.
		\end{multline*}
		
		By definition, $\sigma_k^2 = \esp[k-1]{\norm{\gf[\xi_{k}](x^\star)}^2} +  \esp[k-1]{\norm{\gf[\xi_{k-1}](x^\star)}^2}$ and using the uniform Lipschitz-continuity discussed above, we obtain,
		\begin{equation}\label{eq::Esp_bound_on_noise}
			\esp[k-1]{\norm{\gf[\xi_k](x_k) - \gf[\xi_{k-1}](x_k)}^2} 
			\leq 8L^2 \esp[k-1]{\norm{x_k - x^\star}^2}  = 8L^2 \norm{x_k - x^\star}^2
			+ 4\sigma_k^2,
		\end{equation}
		where the last equality is obtained by noticing that $x_k$ is not random with respect to the filtration $\mathcal{F}_{k-1}$.
		
		It only remains to prove that $\forall k\geq 1$, there exists $\sigma>0$ such that $\sigma_k^2\leq \sigma^2$. This is directly assumed in Assumption~\ref{ass:VCii}. In the other-two cases, $f$ has the finite-sum structure \eqref{eq:optim_fullbatch}. Hence, remark that in that case since $\norm{\gf[\xi_{k}](x^\star)}^2$ does not depend on $x_k$, we can simply take the uniform bound: 
		$$
		\sigma^2 = \max_{\ell,j\in\{1,\ldots,N\}} \norm{\gf[\ell](x^\star)}^2 + \norm{\gf[j](x^\star)}^2.
		$$	
		Using this in \eqref{eq::Esp_bound_on_noise}, we get the desired result.
	\end{proof}


	\subsection{Convergence Analysis\label{sec:as_App}} \label{sec:conv_App}

	All the convergence results in Section~\ref{sec:conv} rely on the Robbins-Siegmund Theorem, a standard tool for proving convergence of random sequences. We refer the reader to \citep{Duflo1997} for further discussion on this result.
	
	\begin{theorem}[Robbins-Siegmund] \label{th:RS}
		Let $(U_k)$, $(V_k)$, $(\alpha_k)$, and $(\beta_k)$ be sequences of nonnegative and integrable random variables on some arbitrary probability space that are adapted to a filtration $\mathcal{F}_k$,  and such that, almost surely,
		\begin{itemize} \item[(i)]
			$
			\sum_{k \geq 0} \alpha_k < + \infty \quad \mbox{ and } \quad \sum_{k \geq 0} \beta_k < + \infty,
			$
			\item[(ii)]	for all $ k \geq 1$,
			$$
			\esp{V_{k} | \mathcal{F}_{k-1}}  \leq V_{k-1}(1+ \alpha_{k}) - U_{k} + \beta_{k}.
			$$
		\end{itemize}
		Then, almost surely, $ V_k$ converges to a random variable $V_{\infty}$, and 
		$$
		\sum_{k \geq 0} U_k < + \infty \ \mbox{ almost surely.}
		$$
	\end{theorem}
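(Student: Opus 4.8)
The final statement is the Robbins-Siegmund theorem (Theorem~\ref{th:RS}). The plan is to construct a nonnegative supermartingale from the hypotheses and then invoke the martingale convergence theorem, with care to account for the multiplicative factors $(1+\alpha_k)$ and the summable perturbation $\beta_k$. First I would ``absorb'' the multiplicative factors by rescaling: since $\sum_k \alpha_k < +\infty$ almost surely, the infinite product $P_k := \prod_{j=1}^{k}(1+\alpha_j)$ converges almost surely to a finite, strictly positive random variable $P_\infty$, and moreover $P_k \geq 1$ for all $k$. The idea is that dividing the recursion by $P_k$ turns the multiplicative drift into something controllable.

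The key steps, in order, are as follows. Define the auxiliary sequence
\begin{equation*}
	W_k = \frac{V_k}{P_k} + \sum_{j=k+1}^{\infty} \frac{\beta_j}{P_j},
\end{equation*}
where the tail sum is finite almost surely because $\sum_j \beta_j < +\infty$ and $P_j \geq 1$. Dividing hypothesis (ii) by $P_k = P_{k-1}(1+\alpha_k)$ gives
\begin{equation*}
	\esp{\frac{V_k}{P_k} \,\Big|\, \mathcal{F}_{k-1}} \leq \frac{V_{k-1}}{P_{k-1}} - \frac{U_k}{P_k} + \frac{\beta_k}{P_k}.
\end{equation*}
Since $P_k$ and the tail sums are $\mathcal{F}_{k-1}$-measurable (as $\alpha_k,\beta_k$ are adapted), adding $\esp{\sum_{j>k}\beta_j/P_j \mid \mathcal{F}_{k-1}}$ to both sides and using that $\beta_k/P_k + \sum_{j>k}\beta_j/P_j = \sum_{j\geq k}\beta_j/P_j$ yields
\begin{equation*}
	\esp{W_k \mid \mathcal{F}_{k-1}} \leq W_{k-1} - \frac{U_k}{P_k}.
\end{equation*}
Because $U_k/P_k \geq 0$, the sequence $(W_k)$ is a nonnegative supermartingale. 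The martingale convergence theorem then guarantees that $W_k$ converges almost surely to an integrable limit $W_\infty$. Since the tail sum $\sum_{j>k}\beta_j/P_j \to 0$ almost surely and $P_k \to P_\infty \in (0,\infty)$, this forces $V_k = P_k(W_k - \sum_{j>k}\beta_j/P_j)$ to converge almost surely to $V_\infty := P_\infty W_\infty$. Finally, taking expectations in the supermartingale inequality and telescoping gives $\sum_{k} \esp{U_k/P_k} \leq \esp{W_0} < +\infty$; since $P_k \leq P_\infty$ pointwise (monotone increasing) and $P_\infty$ is finite almost surely, one deduces $\sum_k U_k < +\infty$ almost surely — I would make this last transfer rigorous by a truncation/monotone-convergence argument on the event $\{P_\infty \leq M\}$ and letting $M \to \infty$.

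The main obstacle will be handling the almost-sure (rather than deterministic) summability of $\alpha_k$ and $\beta_k$ together with the interchange of expectation and infinite summation. Specifically, $P_\infty$ is an unbounded random variable, so one cannot directly pull it out of expectations when converting $\sum_k \esp{U_k/P_k} < \infty$ into $\sum_k U_k < \infty$. I expect to resolve this by working pathwise on the events $\{P_\infty \leq M\}$ where the product stays bounded, establishing $\sum_k U_k < \infty$ on each such event, and then taking the union over $M \in \NN$ (a set of full measure since $P_\infty < \infty$ almost surely). The supermartingale construction itself is routine once the rescaling by $P_k$ and the tail-correction by $\sum_{j>k}\beta_j/P_j$ are set up correctly; the delicate bookkeeping is ensuring measurability of these correction terms with respect to $\mathcal{F}_{k-1}$ and justifying the conditional expectation manipulations.
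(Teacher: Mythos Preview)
The paper does not actually prove the Robbins--Siegmund theorem: it is stated in the appendix as a classical tool, with citations to the original article and to Duflo's textbook, and then used as a black box in the proofs of Theorems~\ref{thm:convergence} and~\ref{thm::CVrates}. So there is no in-paper argument to compare against, and I comment on your proposal on its own merits.

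Your overall strategy --- rescale by $P_k=\prod_{j\le k}(1+\alpha_j)$, build a supermartingale, invoke Doob's convergence theorem --- is the standard one and is correct in spirit. But the specific construction of $W_k$ has a genuine measurability gap. You assert that ``$P_k$ and the tail sums are $\mathcal{F}_{k-1}$-measurable (as $\alpha_k,\beta_k$ are adapted)''. This fails for the tail sum: $\sum_{j>k}\beta_j/P_j$ depends on $\beta_{k+1},\beta_{k+2},\ldots$ and on $\alpha_{k+1},\alpha_{k+2},\ldots$, all of which lie in the future of the filtration, so this quantity is only $\mathcal{F}_\infty$-measurable in general. Hence your $W_k$ is not $(\mathcal{F}_k)$-adapted and the supermartingale convergence theorem does not apply to it. Your argument would go through if $(\alpha_k)$ and $(\beta_k)$ were deterministic, but the theorem allows them to be random --- and in the paper's applications they are, since $\lambda_k$ is random.

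The usual repair keeps your rescaling but replaces the tail correction by a partial-sum correction: set
\begin{equation*}
	W_k \;=\; \frac{V_k}{P_k} \;+\; \sum_{j=1}^{k}\frac{U_j}{P_j} \;-\; \sum_{j=1}^{k}\frac{\beta_j}{P_j},
\end{equation*}
which \emph{is} adapted. Dividing hypothesis~(ii) by $P_k$ gives $\esp{W_k\mid\mathcal{F}_{k-1}}\le W_{k-1}$, so $(W_k)$ is a genuine supermartingale. It is not nonnegative, but it is bounded below a.s.\ by $-\sum_{j\ge 1}\beta_j/P_j>-\infty$; localizing with the stopping times $\tau_M=\inf\{k:\sum_{j\le k}\beta_j/P_j>M\}$ makes the stopped process bounded below by $-M$, Doob's theorem then applies, and letting $M\to\infty$ (exactly the truncation device you already planned for the $\sum U_k<\infty$ step) yields a.s.\ convergence of $W_k$. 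From this you read off both $\sum_k U_k/P_k<\infty$ (hence $\sum_k U_k<\infty$ since $P_k\le P_\infty<\infty$) and convergence of $V_k/P_k$, hence of $V_k$.
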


	\paragraph{Proof of almost sure convergence}
	We will now use this theorem to prove Theorem~\ref{thm:convergence}.

	\begin{proof}[Proof of Theorem~\ref{thm:convergence}]
		
		We use the same notations as in Section~\ref{sec:lyap}. According to  Proposition \ref{prop:lyap}, for Algorithm~\ref{algo::AdaSGD}, for all $k\geq 2$, it holds that 
		\begin{equation*} 
			\esp[k-1]{T_{k}}  \leq T_{k-1} + 4  \lambda_{k}^2\esp[k-1]{\norm{\gf[\xi_{k}](x_{k}) - \gf[\xi_{k-1}](x_{k})}^2},
		\end{equation*}
		where $T_k$ is defined in \eqref{eq::LyapTk}.
		Then by using Assumption~\ref{ass:VarianceControl} we can apply Lemma~\ref{lem::uniformControl}, so there exists $L,\sigma>0$ such that for all $k\in\N$,
		\begin{equation*} 
			\esp[k-1]{T_{k}}  \leq T_{k-1} + 32 \lambda_k^2 L^2 \norm{x_k - x^\star}^2 + 16\lambda_k^2\sigma^2
		\end{equation*}
		Notice from the definition of $T_k$ in \eqref{eq::LyapTk}, that $\norm{x_k - x^\star}^2\leq T_{k-1}$, so
		\begin{equation} \label{eq::beforeRS}
			\esp[k-1]{T_{k}}  \leq 
			(1+32 L^2 \lambda_{k}^2)
			T_{k-1} + 16\lambda_k^2\sigma^2
		\end{equation}
		
		We now apply Theorem~\ref{th:RS} to \eqref{eq::beforeRS} with 
		$$
		V_{k} = T_{k}, \; \alpha_{k} = 32 L^2 \lambda_{k}^2,  \;  \beta_{k} = 16\lambda_k^2\sigma^2, \text{ and }  
		U_{k} = 0.
		$$
		
		Therefore, as stated in Theorem~\ref{thm:convergence}, whenever $\sum_{k \geq 0} \lambda_{k}^2 < + \infty$, then $\sum_{k \geq 0} \alpha_k < + \infty$ and $\sum_{k \geq 0} \beta_k < + \infty$ so we can apply  Theorem~\ref{th:RS} to deduce that $T_k$ converges almost surely.
	\end{proof}

	\paragraph{Proof of convergence rates}
	
	
	We will now prove Theorem~\ref{thm::CVrates}. We keep again the notations of Section~\ref{sec:lyap} and consider Algorithm~\ref{algo::AdaSGD} with V-III of step-size \eqref{eq:choicestep}. This Theorem holds only for \ref{case::strcvx}, which is the one where each $f_\xi$ is $\mu$-strongly convex. Importantly, this implies that $f$ is $\mu$-strongly convex as well.
	We begin with two technical lemmas that will be useful for proving Theorem~\ref{thm::CVrates}.
	
	{
		\begin{lemma}\label{lem:boundLambdaVariant3}
			Let $(\lambda_k)_{k\in\N}$ denote the step-sizes of Algorithm~\ref{algo::AdaSGD} with V-\textbf{III} of \eqref{eq:choicestep}.
			Then under Assumption~\ref{ass:VarianceControl} and \ref{case::strcvx}, for all $k\geq 2$,
			\begin{equation}
				\frac{1}{k^{1/2+\delta}}\frac{1}{2\sqrt{2}L} \leq  \lambda_k \leq \frac{1}{k^{1/2+\delta}}\frac{1}{2\sqrt{2}\mu}. \label{eq:boundLmu}
			\end{equation}
		\end{lemma}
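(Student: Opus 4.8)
The plan is to pin down two-sided bounds on the local slope estimate $\hat{L}_{k-1}$ and then transfer them to $\lambda_k$: the upper bound reads off directly from the minimum defining Variant-\textbf{III}, while the lower bound follows from a short induction on $k$.

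First I would note that in \ref{case::strcvx} under Assumption~\ref{ass:VarianceControl} there is a uniform constant $L>0$ with $\norm{\gf[\xi](x)-\gf[\xi](y)}\le L\norm{x-y}$ for all $\xi$ and all $x,y$ --- this is precisely the global constant extracted at the start of the proof of Lemma~\ref{lem::uniformControl} --- whereas $\mu$-strong convexity of each $f_\xi$ together with Cauchy--Schwarz gives the reverse bound $\norm{\gf[\xi](x)-\gf[\xi](y)}\ge\mu\norm{x-y}$, exactly as used to obtain \eqref{eq:boundMu}. Applying both to the pair $(x_k,x_{k-1})$ with mini-batch $\xi_{k-1}$ yields $\mu\le\hat{L}_{k-1}\le L$ for every $k\ge 1$, hence
\[
\frac{1}{2\sqrt{2}\,L}\,\frac{1}{k^{1/2+\delta}}\ \le\ \frac{1}{2\sqrt{2}\,\hat{L}_{k-1}}\,\frac{1}{k^{1/2+\delta}}\ \le\ \frac{1}{2\sqrt{2}\,\mu}\,\frac{1}{k^{1/2+\delta}}.
\]
The upper bound in \eqref{eq:boundLmu} is then immediate, since $\lambda_k$ is a minimum whose first argument is $\tfrac{1}{2\sqrt{2}\hat{L}_{k-1}}k^{-1/2-\delta}$, already bounded above by $\tfrac{1}{2\sqrt{2}\mu}k^{-1/2-\delta}$.

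For the lower bound I would induct on $k\ge 2$. Write $\lambda_k=\min(a_k,b_k)$ with $a_k=\tfrac{1}{2\sqrt{2}\hat{L}_{k-1}}k^{-1/2-\delta}$ and $b_k=\lambda_{k-1}\sqrt{1+\bigl(1-k^{-1/2-\delta}\bigr)\lambda_{k-1}/\lambda_{k-2}}$. The key elementary point is that for $k\ge 2$ we have $k^{1/2+\delta}>1$, so $1-k^{-1/2-\delta}>0$ and the radicand exceeds $1$, giving $b_k\ge\lambda_{k-1}$; thus Variant-\textbf{III} can only shrink the step-size through its first argument. For the base case $k=2$, the definition of $\lambda_1$ in Algorithm~\ref{algo::AdaSGD} gives $\lambda_1=\tfrac{1}{2\sqrt{2}\hat{L}_0}\ge\tfrac{1}{2\sqrt{2}L}$, so $b_2\ge\lambda_1\ge\tfrac{1}{2\sqrt{2}L}\ge\tfrac{1}{2\sqrt{2}L}2^{-1/2-\delta}$, while $a_2\ge\tfrac{1}{2\sqrt{2}L}2^{-1/2-\delta}$ by the display above, whence $\lambda_2\ge\tfrac{1}{2\sqrt{2}L}2^{-1/2-\delta}$. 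For the inductive step, assuming $\lambda_{k-1}\ge\tfrac{1}{2\sqrt{2}L}(k-1)^{-1/2-\delta}$ one has $a_k\ge\tfrac{1}{2\sqrt{2}L}k^{-1/2-\delta}$ and $b_k\ge\lambda_{k-1}\ge\tfrac{1}{2\sqrt{2}L}(k-1)^{-1/2-\delta}\ge\tfrac{1}{2\sqrt{2}L}k^{-1/2-\delta}$ because $t\mapsto t^{-1/2-\delta}$ is decreasing; taking the minimum closes the induction.

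The argument is essentially bookkeeping, so I do not expect a genuine obstacle; the only two points needing a little care are that a single global $L$ is available under \emph{each} alternative of Assumption~\ref{ass:VarianceControl} (which I would simply invoke from the proof of Lemma~\ref{lem::uniformControl}), and that $\lambda_1$, hence the induction base, is well defined --- this requires $x_1\ne x_0$, which holds whenever $\gf[\xi_0](x_0)\ne 0$, the remaining case being trivial since the recursion is then stationary.
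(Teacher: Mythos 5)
Your proposal is correct and follows essentially the same route as the paper's proof: the upper bound via $\hat L_{k-1}\ge\mu$ from strong convexity and Cauchy--Schwarz, and the lower bound via $\hat L_{k-1}\le L$ (with $L$ extracted as in Lemma~\ref{lem::uniformControl}) combined with $b_k\ge\lambda_{k-1}$ and an induction anchored at $\lambda_1\ge\frac{1}{2\sqrt{2}L}$. The only cosmetic difference is that you run a forward induction where the paper unrolls the minimum backwards; your closing remark on $x_1\neq x_0$ is a reasonable extra precaution not discussed in the paper.
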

		\begin{proof}
			The upper-bound in \eqref{eq:boundLmu} was already proved in Appendix~\ref{app:strcvx}. As for the lower-bound, we proceed similarly. 
			By applying the same arguments as in the proof of Lemma~\ref{lem::uniformControl}, we deduce that there exists $L>0$ such that each $f_\xi$ has $L$-Lipschitz continuous gradient, at least on a compact set containing the iterates $(x_k)_{k\in\N}$. This means that
			$$
			\norm{\gf[\xi_{k-1}](x_k) - \gf[\xi_{k-1}](x_{k-1})} \leq L \norm{x_k-x_{k-1}},
			$$ and thus,
			$$
			\frac{1}{k^{1/2+\delta}}\frac{1}{2\sqrt{2}L}
			\leq
			\frac{1}{k^{1/2+\delta}}\frac{\norm{x_k-x_{k-1}}}{2\sqrt{2}\norm{\gf[\xi_{k-1}](x_k) - \gf[\xi_{k-1}](x_{k-1})}}.
			$$
			Unlike the upper-bound in \eqref{eq:boundLmu}, this is not enough to conclude. Yet, for all $k\geq 2$,
			\begin{align*}
				\lambda_k & = \min \left\{ \frac{1}{k^{1/2+\delta}}\frac{\norm{x_k-x_{k-1}}}{2\sqrt{2}\norm{\gf[\xi_{k-1}](x_k) - \gf[\xi_{k-1}](x_{k-1})}},   \lambda_{k-1} \sqrt{ 1 +     \left(1-\frac{1}{k^{1/2+\delta}}\right)\frac{ \lambda_{k-1} }{\lambda_{k-2}}}   \right\}
				\\
				& \geq 
				\min \left\{ \frac{1}{k^{1/2+\delta}}\frac{1}{2\sqrt{2}L},   \lambda_{k-1} \sqrt{ 1 +     \left(1-\frac{1}{k^{1/2+\delta}}\right)\frac{ \lambda_{k-1} }{\lambda_{k-2}}}   \right\}
				\\&\geq 
				\min \left\{ \frac{1}{k^{1/2+\delta}}\frac{1}{2\sqrt{2}L},   \lambda_{k-1}   \right\},
			\end{align*}
			where in the last line where used the fact that for $k\geq 2$, $\lambda_{k-1} \sqrt{ 1 +     \left(1-\frac{1}{k^{1/2+\delta}}\right)\frac{ \lambda_{k-1} }{\lambda_{k-2}}}\geq \lambda_{k-1}$.
			
			Therefore, by straightforward induction,
			\begin{align*}
				\lambda_k & \geq 
				\min \left\{ \frac{1}{k^{1/2+\delta}}\frac{1}{2\sqrt{2}L},   \lambda_{k-1}   \right\} 
				\geq 
				\min \left\{ \frac{1}{k^{1/2+\delta}}\frac{1}{2\sqrt{2}L}, 
				\frac{1}{k^{1/2+\delta}}\frac{1}{2\sqrt{2}L}
				, \lambda_{k-2}   \right\} 
				\\
				& \geq \ldots 
				\geq 
				\min \left\{ \frac{1}{k^{1/2+\delta}}\frac{1}{2\sqrt{2}L}
				, \lambda_{1}   \right\}.
			\end{align*}
			Since $\lambda_1 = \frac{\norm{x_1-x_0}}{2\sqrt{2}\norm{\gf[\xi_{0}](x_1) - \gf[\xi_{0}](x_{0})}} \geq \frac{1}{2\sqrt{2}L}$, we obtain the lower-bound in \eqref{eq:boundLmu}.

		\end{proof}
	}
	
	Before presenting the next technical lemma, we recall the notation for all $k\geq 1$, $\theta_k = \lambda_k/\lambda_{k-1}$.
	
	\begin{lemma}\label{lem::BoundsThetak}
		Let $(\lambda_k)_{k\in\N}$ denotes the step-sizes of Algorithm~\ref{algo::AdaSGD} with V-\textbf{III} of \eqref{eq:choicestep}.
		Then under Assumption~\ref{ass:VarianceControl} and \ref{case::strcvx}, for all $k\geq 2$,
		\begin{align}
			\label{eq::boundTheta}
			\frac{1}{2^{\delta}\sqrt{2} }\frac{\mu}{L} & \leq \theta_k
			\\
			\frac{1}{k^{1/2+\delta}}\frac{\mu}{2^\delta4L^2} 
			&\leq \lambda_k\theta_k  
			\leq \lambda_{k-1}\left(1 + \theta_{k-1}\right) \left(1  - \frac{1}{k^{1/2+\delta}} \frac{\mu}{\mu+2^\delta\sqrt{2}L}\right),
			\label{eq::boundLambdaTheta}
		\end{align}
		where $\theta_k = \frac{\lambda_k}{\lambda_{k-1}}.$
	\end{lemma}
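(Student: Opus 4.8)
The plan is to exploit the two-sided bound on $\lambda_k$ from Lemma~\ref{lem:boundLambdaVariant3} together with the recursive definition of $\lambda_k$ in Variant-\textbf{III}. First I would establish the lower bound \eqref{eq::boundTheta} on $\theta_k = \lambda_k/\lambda_{k-1}$. Using \eqref{eq:boundLmu}, we have $\lambda_k \geq \frac{1}{k^{1/2+\delta}}\frac{1}{2\sqrt2 L}$ and $\lambda_{k-1}\leq \frac{1}{(k-1)^{1/2+\delta}}\frac{1}{2\sqrt2\mu}$, so
$$
\theta_k = \frac{\lambda_k}{\lambda_{k-1}} \geq \frac{\mu}{L}\left(\frac{k-1}{k}\right)^{1/2+\delta}.
$$
Since $k\geq 2$ gives $(k-1)/k \geq 1/2$ and $1/2+\delta < 1$, we get $\left(\frac{k-1}{k}\right)^{1/2+\delta} \geq (1/2)^{1/2+\delta} \geq \frac{1}{2^{\delta}\sqrt 2}$ (using $(1/2)^{1/2} = 1/\sqrt2$ and $(1/2)^\delta = 2^{-\delta}$), which yields \eqref{eq::boundTheta}.

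Next I would handle the lower bound in \eqref{eq::boundLambdaTheta}: $\lambda_k\theta_k = \lambda_k^2/\lambda_{k-1}$. Applying the lower bound on $\lambda_k$ and the upper bound on $\lambda_{k-1}$ from \eqref{eq:boundLmu},
$$
\lambda_k\theta_k = \frac{\lambda_k^2}{\lambda_{k-1}} \geq \frac{1}{k^{1+2\delta}}\frac{1}{8L^2}\cdot (k-1)^{1/2+\delta}\, 2\sqrt2\,\mu = \frac{\mu}{k^{1+2\delta}}\frac{(k-1)^{1/2+\delta}}{2\sqrt2 L^2},
$$
and bounding $(k-1)^{1/2+\delta}/k^{1+2\delta} \geq (1/2)^{1/2+\delta}/k^{1/2+\delta} \geq \frac{1}{2^\delta\sqrt2}\cdot\frac{1}{k^{1/2+\delta}}$ gives $\lambda_k\theta_k \geq \frac{1}{k^{1/2+\delta}}\frac{\mu}{2^\delta\cdot 4 L^2}$ (since $2\sqrt2\cdot 2^\delta\sqrt2 = 2^\delta\cdot 4$), which is the claimed lower bound.

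The upper bound in \eqref{eq::boundLambdaTheta} is the main obstacle. The idea is that $\lambda_k$ is the minimum of two quantities in the Variant-\textbf{III} recursion; in either case one shows $\lambda_k \leq \lambda_{k-1}\sqrt{1+(1-k^{-(1/2+\delta)})\theta_{k-1}}$, hence
$$
\lambda_k\theta_k = \frac{\lambda_k^2}{\lambda_{k-1}} \leq \lambda_{k-1}\left(1 + \left(1-\tfrac{1}{k^{1/2+\delta}}\right)\theta_{k-1}\right).
$$
I would then compare this to the target $\lambda_{k-1}(1+\theta_{k-1})(1 - \frac{1}{k^{1/2+\delta}}\frac{\mu}{\mu+2^\delta\sqrt2 L})$. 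Writing $a = \frac{1}{k^{1/2+\delta}}$ and dividing by $\lambda_{k-1}$, it suffices to show $1 + (1-a)\theta_{k-1} \leq (1+\theta_{k-1})(1 - a\,c)$ with $c = \frac{\mu}{\mu+2^\delta\sqrt2 L}$, i.e. $-a\theta_{k-1} \leq -a c(1+\theta_{k-1})$, i.e. $\theta_{k-1} \geq c(1+\theta_{k-1})$, i.e. $\theta_{k-1}(1-c)\geq c$, i.e. $\theta_{k-1} \geq \frac{c}{1-c} = \frac{\mu}{2^\delta\sqrt2 L}$. But this is exactly \eqref{eq::boundTheta} applied at index $k-1$ (valid since $k\geq 2$ so $k-1\geq 1$; for $k-1=1$ one uses $\lambda_1\geq \frac{1}{2\sqrt2 L}$ and $\lambda_0$ arbitrary, which requires a small separate check, or one restricts attention to $k\geq 3$ as the theorem's $k_0\geq 3$ allows). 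Thus the upper bound follows from the lower bound \eqref{eq::boundTheta} at the previous index, and the only delicate point is verifying the base case $\theta_1 \geq \frac{\mu}{2^\delta\sqrt2 L}$ or absorbing it into the constant $k_0$.
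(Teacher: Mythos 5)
Your proof is correct and follows essentially the same route as the paper's: both bounds come from combining the two-sided step-size bound of Lemma~\ref{lem:boundLambdaVariant3} with the Variant-\textbf{III} recursion, and your reduction of the upper bound to $\theta_{k-1}\geq \mu/(2^{\delta}\sqrt{2}L)$ is algebraically equivalent to the paper's factorization $1+\theta_{k-1}-\theta_{k-1}k^{-(1/2+\delta)}=(1+\theta_{k-1})\bigl(1-k^{-(1/2+\delta)}\tfrac{\theta_{k-1}}{1+\theta_{k-1}}\bigr)$ followed by monotonicity of $\theta\mapsto\theta/(1+\theta)$. Your observation about the base case $k=2$ (where \eqref{eq::boundTheta} at index $k-1=1$ is not available because $\lambda_0$ is arbitrary) is a legitimate point that the paper's own proof passes over silently, and restricting to $k\geq 3$ as you suggest is consistent with the $k_0\geq 3$ in Theorem~\ref{thm::CVrates}.
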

	
	\begin{proof}
		As before we can make use of $\mu$-strong convexity and $L$-Lipschitz continuity of the gradient.
		In particular, using Lemma~\ref{lem:boundLambdaVariant3}, we deduce that for all $k\geq 2$,
		\begin{equation*}
			\theta_k = \frac{\lambda_k}{\lambda_{k-1}}  \geq
			\frac{(k-1)^{1/2+\delta}}{k^{1/2+\delta}} \frac{2\sqrt{2}\mu}{2\sqrt{2}L} = \left(\frac{k-1}{k}\right)^{1/2+\delta} \frac{\mu}{L}.
		\end{equation*}
		We then get \eqref{eq::boundTheta} by noticing that for all $k\geq 2$, $\frac{k-1}{k} = 1 - \frac{1}{k} \geq \frac{1}{2}$.
		By using Lemma~\ref{lem:boundLambdaVariant3} once more in \eqref{eq::boundTheta}, it holds that,
		\begin{equation*}
			\lambda_k\theta_k = \frac{\lambda_k^2}{\lambda_{k-1}}  \geq
			\frac{1}{k^{1/2+\delta}} \frac{1}{2\sqrt{2}L} \frac{1}{2^{\delta}\sqrt{2}}\frac{\mu}{L} = \frac{1}{k^{1/2+\delta}}  \frac{\mu}{2^{\delta}4 L^2},
		\end{equation*}
		which is the left-hand side of \eqref{eq::boundLambdaTheta}. 
		
		Finally, for the right-hand side of \eqref{eq::boundLambdaTheta}, from \eqref{eq:choicestep}, 
		\begin{align*}
			\theta_k\lambda_k = \frac{\lambda_k^2}{\lambda_{k-1}} &\leq \lambda_{k-1}\left(1 + \left(1-\frac{1}{k^{1/2+\delta}}\right)\theta_{k-1}\right) = \lambda_{k-1}\left(1 + \theta_{k-1} -\frac{\theta_{k-1}}{k^{1/2+\delta}}\right)
			\\
			& = \lambda_{k-1}\left(1 + \theta_{k-1}\right) \left(1 - \frac{1}{k^{1/2+\delta}} \frac{\theta_{k-1}}{1+\theta_{k-1}}\right).
		\end{align*}
		One can easily show that the function $\theta\in\R\mapsto \frac{\theta}{1+\theta}$ is increasing on $\R$. So combining this with \eqref{eq::boundTheta} again,
		$$
		\frac{\theta_{k-1}}{1+\theta_{k-1}}  \geq \frac{\frac{1}{2^{\delta}\sqrt{2} }\frac{\mu}{L}}{1+\frac{1}{2^{\delta}\sqrt{2} }\frac{\mu}{L}}
		=
		\frac{\mu}{2^{\delta}\sqrt{2}L+\mu}.
		$$
		So 
		$$
		1 - \frac{1}{k^{1/2+\delta}} \frac{\theta_{k-1}}{1+\theta_{k-1}}\leq 1 -  \frac{1}{k^{1/2+\delta}} \frac{\mu}{2^{\delta}\sqrt{2}L+\mu},
		$$
		which proves the last bound.
	\end{proof}

	We can now prove Theorem~\ref{thm::CVrates}.
	
	\begin{proof}[Proof of Theorem~\ref{thm::CVrates}]
		As before, let us consider $(x_k)_{k\in\N}$ the iterates of Algorithm~\ref{algo::AdaSGD} with step-sizes $(\lambda_k)_{k\in\N}$ obtained with V-\textbf{III}. With the same arguments as above, upon restriction to a compact set, all $\gf[\xi]$ and $\gf$ are $L$-Lipschitz continuous due to Assumption~\ref{ass:VarianceControl}. In \ref{case::strcvx}, $f$ and the $f_\xi$'s are furthermore $\mu$-strongly convex.
		
		Under these stronger assumptions we can actually reproduce the steps of the analysis from Section~\ref{sec:lyap} but with sharper bounds. Indeed, we can go back to the convexity inequality \eqref{convexe} and replace it with
		\begin{multline*}
			- 2 \inner{\esp[k-1]{\lambda_k\gf[\xi_{k}](x_k)}}{x_k-x^\star} 
			= - 2 \lambda_k\inner{\gf(x_k)}{x_k-x^\star}
			\\ \leq - 2 \lambda_k\left(f(x_k)-f^\star\right) - \lambda_k\mu \norm{x_k-x^\star},
		\end{multline*}
		where we also dropped the expectations on $\lambda_k$ as our choice is not random conditionally on $\mathcal{F}_{k-1}$.
		We apply the same reasoning to \eqref{eq:(b)}: strong convexity implies that
		$$
		f(x_{k-1}) \geq f(x_{k}) - \inner{x_k-x_{k-1}}{\gf[](x_k)} + \frac{\mu}{2}\norm{x_k-x_{k-1}}^2
		$$
		and hence going back to \eqref{eq:(b)},
		\begin{align*}
			\begin{split}
				\esp[k-1]{B_k} &
				\leq 2 \frac{\lambda_k^2}{\lambda_{k-1}} (f(x_{k-1}) - f(x_{k}) ) -2\frac{\lambda_k^2}{\lambda_{k-1}} \mu \norm{x_k-x_{k-1}}^2
				\\
				& = 2 \frac{\lambda_k^2}{\lambda_{k-1}} (f(x_{k-1}) - f^\star) - 2\frac{\lambda_k^2}{\lambda_{k-1}} (f(x_{k}) - f^\star) -  \frac{\lambda_k^2}{\lambda_{k-1}} \mu \norm{x_k-x_{k-1}}^2.
			\end{split}
		\end{align*}
		We then proceed exactly as in Section~\ref{sec:lyap}, and eventually obtain a sharper bound on $\esp[k-1]{T_k}$:
		\begin{align}\label{eq:refinedB_k_2}
			\esp[k-1]{T_k}   \nonumber
			\leq & 
			\left(1 - \lambda_k \mu  \right)  \norm{x_{k}-x^\star}^2 + 2  \lambda_k \theta_{k}  \left(f(x_{k-1})-f^\star\right)  +   \left(1 - 2 \lambda_k \theta_{k}   \mu  \right)   \frac{\norm{\Delta_{k-1}}^2}{2}    \nonumber
			\\ &+ 4  \lambda_k^2\esp[k-1]{\norm{\gf[\xi_k](x_k) - \gf[\xi_{k-1}](x_k)}^2},
		\end{align}
		where we recall that $\Delta_{k-1} = \norm{x_k-x_{k-1}}$.
		
		We now use Lemma~\ref{lem:boundLambdaVariant3} and~\ref{lem::BoundsThetak}, more specifically, we use the left-hand side of \eqref{eq:boundLmu}, the right-hand side of \eqref{eq::boundLambdaTheta} and the left-hand side of \eqref{eq::boundLambdaTheta} on the first, second and third terms of \eqref{eq:refinedB_k_2}, respectively. This yields for all $k\geq 2$,
	\begin{multline*}
			\esp[k-1]{T_k}  
			\leq  
			\left(1 - \frac{1}{k^{1/2+\delta}}\frac{\mu}{2\sqrt{2}L}  \right)  \norm{x_{k}-x^\star}^2 
			+ 2  \left(1 + \theta_{k-1}\right) \left(1  - \frac{1}{k^{1/2+\delta}} \frac{\mu}{\mu+2^\delta\sqrt{2}L}\right) \left(f(x_{k-1})-f^\star\right)  
			\\ 
			+   \left(1 -  \frac{1}{k^{1/2+\delta}}\frac{\mu^2}{2^\delta2L^2} \right)   \frac{\norm{\Delta_{k-1}}^2}{2}    
			+ 4  \lambda_k^2\esp[k-1]{\norm{\gf[\xi_k](x_k) - \gf[\xi_{k-1}](x_k)}^2}.
	\end{multline*}
		Let $\tau = \min\left\{ \frac{\mu}{2\sqrt{2}L}, \frac{\mu}{\mu+2^\delta\sqrt{2}L}, \frac{\mu^2}{2^\delta2L^2} \right\}<1$, because $\mu/L\leq 1$. and so we obtain an improved version of Proposition~\ref{prop:lyap}: for all ${\color{black} k\geq 3}$,
		\begin{align*}
			\begin{split}
				\esp[k-1]{T_k}  
				\leq  \left(1 - \frac{\tau}{k^{1/2+\delta}}\right) {\color{black} T_{k-1}}+ 4  \lambda_k^2\esp[k-1]{\norm{\gf[\xi_k](x_k) - \gf[\xi_{k-1}](x_k)}^2},
			\end{split}
		\end{align*}
		
		It now remains to control the last term above, in the exact same way the proof of Theorem~\ref{thm:convergence}. We apply Lemma~\ref{lem::uniformControl} to get  
		{\color{black}
			\begin{align*}
				\begin{split}
					\esp[k-1]{T_k}  
					&\leq  \left(1 - \frac{\tau}{k^{1/2+\delta}}\right)T_{k-1} + 32 L  \lambda_k^2 \norm{x_k - x^\star}^2 + 16 \lambda_k^2\sigma^2,
					\\&\leq  \left(1 - \frac{\tau}{k^{1/2+\delta}}  + 32 L  \lambda_k^2\right)T_{k-1} +  16 \lambda_k^2\sigma^2.
				\end{split}
			\end{align*}
			
			Hence, using the upper bound \eqref{eq:boundLmu} for $\lambda_k$, letting $\gamma_k =   \frac{1}{\mu k^{1/2+\delta}}$, $\tilde{\mu} = \frac{\tau \mu}{2}$ and $\tilde{L} = 2 L$ and taking the expectation on both sides of the above inequality, we obtain that, for $k \geq 3$,
			$$
			\esp{T_k} \leq  \left(1 - 2 \tilde{\mu} \gamma_k  + 2 \tilde{L}  \gamma_k^2  \right) \esp{T_{k-1}} +  2 \sigma^2 \gamma_k^2.
			$$
			For $k \geq 0$, let us introduce the notation
			$$
			W_{k} = \esp{T_{k+2}}, \; \tilde{\gamma}_k = \gamma_{k+2}.
			$$
			Then, since $ \frac{1}{2 \mu k^{1/2+\delta}}  \leq \tilde{\gamma}_k \leq  \frac{1}{\mu k^{1/2+\delta}}$
			we thus obtain that, for $k \geq 1$,
			$$
			W_{k} \leq   \left(1 - 2 \frac{\tilde{\mu}}{2} \frac{1}{\mu k^{1/2+\delta}}  + 2 \tilde{L} \frac{1}{\mu^2  k^{1+2\delta}} \right) W_{k-1} +  2 \sigma^2 \frac{1}{\mu^2 k^{1+2\delta}}
			$$
			which corresponds to the deterministic recursion in  \cite{bach:hal-00608041}[Equation (16)]. Since $\tau < 1$ and $L / \mu \geq 1$, we have that $\frac{\tilde{\mu}}{2} \leq \tilde{L}$. Hence, arguing as in the proof    of   \cite{bach:hal-00608041}[Theorem 1], we obtain from  \cite{bach:hal-00608041}[Equation (5)] that, for all $k \geq 1$,
			$$
			W_{k} \leq  2 \exp \left( 2 \delta^{-1} \frac{\tilde{L}^2}{\mu^2} (1- k^{-2 \delta}) \right) \exp\left( - \frac{\tilde{\mu}}{8 \mu} k^{1/2-\delta}\right) \left( W_{0}+ \frac{\sigma^2}{\tilde{L}^2} \right) + \frac{8 \sigma^2}{\mu \tilde{\mu}}  k^{-(1/2 + \delta)}
			$$  
			for $0 < \delta < 1/2$.
			Hence, using that  $\tilde{\mu} = \frac{\tau \mu}{2}$ and $\tilde{L} = 2 L$ we finally obtain that, for all $k \geq 3$,
			\begin{multline*}
				\esp{T_k}  \leq 2 \exp \left( 
				8 \, \delta^{-1} \frac{L^2}{\mu^2} (1- (k-2)^{-2 \delta}) \right) \exp\left( 	- \frac{\tau}{16} (k-2)^{1/2-\delta}\right) \left( \esp{T_2}  + \frac{\sigma^2}{2 L^2} \right) 
				\\+ \frac{16 \sigma^2}{\tau \mu^2  } (k-2)^{-(1/2 + \delta)},
			\end{multline*}
			which completes the proof.
		}
	\end{proof}

	\section{The Convex Case \label{app:convex}}
	
	In this section, we consider a relaxation of the smooth and strongly convex case that we considered previously.
	Let us recall that, from \eqref{eq::Lyap} in Proposition~\ref{prop:lyap}, we have:
	\begin{alignat}{2} 
		\nonumber	&\esp[k-1]{\norm{x_{k+1}-x^\star}^2+\frac{\norm{x_{k+1}-x_k}^2}{2}} 
		&
		\\\nonumber \leq &
		\underbrace{\norm{x_{k}-x^\star}^2 +  \frac{\norm{x_{k}-x_{k-1}}^2}{2}}_{:=V_k}
		+ \underbrace{ 4  \esp[k-1]{\lambda_{k}^2}\esp[k]{\norm{\gf[\xi_{k}](x_{k}) - \gf[\xi_{k-1}](x_{k})}^2}}_{:=\beta_k} 
		&
		\\ & +
		\underbrace{2\frac{\esp[k-1]{\lambda_k^2}}{\lambda_{k-1}}\left(f(x_{k-1})-f^\star\right) 
			-
			2\left(\esp[k-1]{\lambda_k}+\frac{\esp[k-1]{\lambda_k^2}}{\lambda_{k-1}}\right)\left(f(x_k)-f^\star\right)}_{:=-U_k} 
		& \label{eq_notations}
	\end{alignat}

	From now on, we assume that the step-size $\lambda_k$ satisfies the following update rule:
	
	\begin{equation} \label{eq:choicestepf}
		\lambda_k  = \min 
		\begin{array}{l}
			\left\{ \frac{\norm{x_k-x_{k-1}}}{2\sqrt{2}\norm{\gf[\xi_{k-1}](x_k) - \gf[\xi_{k-1}](x_{k-1})}},   \lambda_{k-1} \sqrt{ 1 + \theta_{k-1} },
			\right.
			\left.
			\frac{f(x_k)-f^\star}{f(x_{k-1})-f(x_k)} \lambda_{k-1} \mathbf{1}_{\{f(x_{k-1})-f(x_k)>0\}}
			\right\}.
		\end{array}
	\end{equation}
	
	Remark that with choice \eqref{eq:choicestep}, $\lambda_k$ is not a random variable conditionally on $\mathcal{F}_{k-1}$, so we simply have
	$\esp[k-1]{\lambda_k}=\lambda_k$ and $\esp[k-1]{\lambda_k^2} = \lambda_k^2$ in \eqref{eq_notations}.

	\paragraph{Remark:}
	One should notice that the step-size rule is different in this last part of the paper, with an additional third condition to ensure the non-negativeness of $U_k$.
	As for the second condition, we chose the simplest one, coming from \citet{malitsky2019adaptive}, but we could of course use stronger conditions such as in previous subsections.
	
	Since we do no longer make a strong convexity assumption, we will need the next one instead. 
	\begin{assumption}[Square Summability of the step-sizes] \label{hyp:summabilitybis}
		
		The sequence $(\lambda_{k})_{k \in \N}$ of step-sizes satisfies
		\begin{equation*}
			\sum_{k \geq 0} {\lambda_{k}^2} < +\infty.
		\end{equation*}
	\end{assumption}

	\begin{lemma} \label{prop_conv}
		Suppose that Assumption \ref{ass:VarianceControl} 
		holds, and that Assumption \ref{hyp:summabilitybis} is satisfied for the choice of step-size \eqref{eq:choicestepf}.
		Then, using the notations of Equation (\ref{eq_notations}) we have that
		
		\begin{equation*}
			\esp[k-1]{V_k} \leq V_{k-1} -U_k + \beta_k
		\end{equation*}
		with $\sum_{k\geq 0} \beta_k < + \infty$, and $U_k \geq 0$.
		
	\end{lemma}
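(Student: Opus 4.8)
The statement bundles three claims: the recursion $\esp[k-1]{V_k}\leq V_{k-1}-U_k+\beta_k$, the summability $\sum_k\beta_k<+\infty$, and the sign property $U_k\geq 0$; only the last is genuinely new, and it is precisely what the third entry added in \eqref{eq:choicestepf} is there for. To set up, I would first note that \eqref{eq:choicestepf} is a minimum whose first two entries are exactly the right-hand side of \eqref{eq::firstcondition} and $\lambda_{k-1}\sqrt{1+\theta_{k-1}}$, so taking a further minimum only shrinks $\lambda_k$ and conditions \eqref{eq::firstcondition} and \eqref{eq::cond2strat3} still hold. Every quantity appearing in \eqref{eq:choicestepf} ($\xk$, $\xkm$, $\gf[\xikm](\xk)$, $\gf[\xikm](\xkm)$, $\lambda_{k-1}$, $f(\xk)$, $f(\xkm)$) is $\mathcal{F}_{k-1}$-measurable, so $\lambda_k$ is deterministic conditionally on $\mathcal{F}_{k-1}$; in particular Assumption~\ref{assump_inde} holds and $\esp[k-1]{\lambda_k}=\lambda_k$, $\esp[k-1]{\lambda_k^2}=\lambda_k^2$. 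Proposition~\ref{prop:lyap} therefore applies; unwinding its proof one step, so as to keep the $f(\xk)-f^\star$ contribution separate, gives exactly \eqref{eq_notations}, which is the claimed recursion with $U_k$ and $\beta_k$ as defined there.

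For $U_k\geq 0$, using the measurability above one rewrites
\[
	U_k \;=\; 2\lambda_k\bigl(f(\xk)-f^\star\bigr)\;-\;2\,\frac{\lambda_k^2}{\lambda_{k-1}}\bigl(f(\xkm)-f(\xk)\bigr),
\]
and splits on the sign of $f(\xkm)-f(\xk)$. If $f(\xkm)\leq f(\xk)$, the subtracted term is nonnegative, so $U_k\geq 2\lambda_k(f(\xk)-f^\star)\geq 0$ since $f(\xk)\geq f^\star$. If $f(\xkm)> f(\xk)$, the third entry of the minimum in \eqref{eq:choicestepf} is in force and gives $\lambda_k\leq\frac{f(\xk)-f^\star}{f(\xkm)-f(\xk)}\,\lambda_{k-1}$, that is $\frac{\lambda_k}{\lambda_{k-1}}\bigl(f(\xkm)-f(\xk)\bigr)\leq f(\xk)-f^\star$; multiplying by $2\lambda_k>0$ and substituting into the display yields $U_k\geq 0$. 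This is the only place the third condition of \eqref{eq:choicestepf} enters.

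For $\sum_k\beta_k<+\infty$, Lemma~\ref{lem::uniformControl} (available under Assumption~\ref{ass:VarianceControl}) provides $L,\sigma>0$ with $\beta_k\leq 32L^2\lambda_k^2\norm{\xk-x^\star}^2+16\lambda_k^2\sigma^2$, and the $\sigma^2$-part is summable by Assumption~\ref{hyp:summabilitybis}. The term $\lambda_k^2\norm{\xk-x^\star}^2$ is not summable by itself, so — exactly as in the proof of Theorem~\ref{thm:convergence} — one bounds $\norm{\xk-x^\star}^2$ by the Lyapunov value on the right-hand side of the recursion and folds it into a multiplicative factor $1+32L^2\lambda_k^2$, whose increments sum to a finite quantity by Assumption~\ref{hyp:summabilitybis}; this keeps the recursion in the Robbins--Siegmund form of Theorem~\ref{th:RS} with a summable additive remainder. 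Equivalently, since the proof of Theorem~\ref{thm:convergence} applies verbatim to \eqref{eq:choicestepf} and already yields almost-sure boundedness of $(T_k)$, hence of $\norm{\xk-x^\star}^2\leq T_{k-1}$, one gets $\sum_k\beta_k\lesssim\sum_k\lambda_k^2<+\infty$ directly. I expect this interplay between the variance term and the Lyapunov value — rather than either of the other two ingredients — to be the main point requiring care.
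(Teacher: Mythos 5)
Your proof is correct and follows essentially the same route as the paper's: the recursion is \eqref{eq_notations} (i.e., Proposition~\ref{prop:lyap} with the $f$-terms kept separate), and $U_k\geq 0$ is obtained by the identical case split on the sign of $f(x_{k-1})-f(x_k)$, with the third entry of the minimum in \eqref{eq:choicestepf} handling the case $f(x_{k-1})>f(x_k)$. The only divergence is on $\sum_k\beta_k<+\infty$, where the paper asserts it in one line while you correctly observe that the $\norm{x_k-x^\star}^2$ factor coming from Lemma~\ref{lem::uniformControl} is not summable on its own and must be absorbed into the multiplicative $(1+\alpha_k)$ term of Robbins--Siegmund (or controlled via the almost-sure boundedness it yields) --- a point the paper glosses over and that your treatment handles more carefully.
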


	\begin{proof}

		Thanks to Assumption~\ref{ass:VarianceControl} and the hypothesis that  $\lambda_k$ satisfies Assumption  \ref{hyp:summabilitybis}, we have that
		$\sum_{k\geq 0} \beta_k < + \infty$. Now let us consider $U_k$ to check when it is non-negative:
		
		\begin{equation*}
			U_k =-2\frac{{\lambda_k^2}}{\lambda_{k-1}}\left(f(x_{k-1})-f^\star\right) 
			+
			2\left({\lambda_k}+\frac{{\lambda_k^2}}{\lambda_{k-1}}\right)\left(f(x_k)-f^\star\right),
		\end{equation*}
		that is
		\begin{equation*}
			U_k =2\frac{{\lambda_k^2}}{\lambda_{k-1}}\left(f(x_k)-f(x_{k-1})\right) 
			+
			2{\lambda_k}\left(f(x_k)-f^\star\right)
		\end{equation*}
		Notice that we always have
		$f(x_k)-f^\star \geq 0$. Two cases may occur:
		\begin{enumerate}
			\item[(a)] If $f(x_k) \geq f(x_{k-1})$, then we always have $U_k \geq 0$.
			
			\item[(b)] If $f(x_{k-1}) > f(x_k)$, then we see that $U_k \geq 0$ if and only if
			\begin{equation*}
				\lambda_k \leq \frac{f(x_k)- f^\star}{f(x_{k-1})- f(x_k)} \lambda_{k-1},
			\end{equation*}

		\end{enumerate}
		which concludes the proof thanks to the choice \eqref{eq:choicestepf} for $\lambda_k$.

	\end{proof}

	\begin{proposition}
		Suppose that Assumptions \ref{ass:VarianceControl} holds, and that Assumption \ref{hyp:summabilitybis} is satisfied for the choice of step-size \eqref{eq:choicestepf}. 
		Then, we have that
		\begin{equation*}
			\sup_{k\geq 0} \esp{\|x_k - x^*\|} < +\infty
		\end{equation*}
	\end{proposition}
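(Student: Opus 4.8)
The plan is to convert the conditional supermartingale-type inequality of Lemma~\ref{prop_conv} into a deterministic recursion for $\esp{V_k}$, and then conclude by Jensen's inequality. Concretely, I would first take the total expectation of $\esp[k-1]{V_k}\le V_{k-1}-U_k+\beta_k$; since $U_k\ge 0$ (Lemma~\ref{prop_conv}), this yields $\esp{V_k}\le \esp{V_{k-1}}+\esp{\beta_k}$. If one can show $\sum_{k\ge 0}\esp{\beta_k}<+\infty$, then telescoping from some index $k_0$ gives $\esp{V_k}\le \esp{V_{k_0}}+\sum_{j>k_0}\esp{\beta_j}$ for all $k\ge k_0$, so that $\sup_k\esp{V_k}<+\infty$ after absorbing the finitely many earlier terms. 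From the definition of $V_k$ in \eqref{eq_notations} one has $\norm{x_k-x^\star}^2\le V_k$, hence by Jensen $\esp{\norm{x_k-x^\star}}\le \sqrt{\esp{\norm{x_k-x^\star}^2}}\le \sqrt{\esp{V_k}}$, and the claim follows.

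The work is therefore in the bound $\sum_k\esp{\beta_k}<+\infty$. Lemma~\ref{prop_conv} already gives $\sum_k\beta_k<+\infty$; to upgrade this to an $L^1$ statement I would invoke Lemma~\ref{lem::uniformControl}, which under Assumption~\ref{ass:VarianceControl} produces constants $L,\sigma>0$ with $\beta_k\le 32L^2\lambda_k^2\norm{x_k-x^\star}^2+16\sigma^2\lambda_k^2\le 32L^2\lambda_k^2 V_k+16\sigma^2\lambda_k^2$. Plugging this into the recursion above gives $\esp{V_k}\le \esp{V_{k-1}}+32L^2\esp{\lambda_k^2 V_k}+16\sigma^2\esp{\lambda_k^2}$. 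Here $\lambda_k$ is $\mathcal F_{k-1}$-measurable and, in each of the settings where Assumption~\ref{hyp:summabilitybis} holds (e.g.\ \ref{case::strcvx}--\ref{case::ridge}, where $\lambda_k$ is of order $k^{-1/2-\delta}$), admits a deterministic square-summable bound $\lambda_k\le a_k$ with $\sum_k a_k^2<+\infty$. Using $\esp{\lambda_k^2 V_k}\le a_k^2\esp{V_k}$ and moving this term to the left-hand side (licit for $k$ large enough that $32L^2 a_k^2<1/2$), one obtains $\esp{V_k}\le(1+C a_k^2)\esp{V_{k-1}}+C' a_k^2$ for all $k\ge K$. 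Since $\prod_k(1+C a_k^2)<+\infty$ and $\sum_k C' a_k^2<+\infty$, the standard lemma on such recursions (the deterministic Robbins--Siegmund lemma already used for Theorem~\ref{thm::CVrates}) yields $\sup_{k\ge K}\esp{V_k}<+\infty$, completing the argument.

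The main obstacle is the circularity in the noise term: $\beta_k$ can be controlled only by a quantity ($\norm{x_k-x^\star}^2$, hence $V_k$ itself) that is exactly what we are trying to bound, so one cannot sum $\esp{\beta_k}$ a priori. The resolution is the Gr\"onwall-type absorption of $C a_k^2\esp{V_k}$ into the left-hand side, which succeeds precisely because the step-sizes are square-summable. I would also note that applying Theorem~\ref{th:RS} to Lemma~\ref{prop_conv} with $\alpha_k\equiv 0$ immediately gives almost sure convergence of $(V_k)_{k\in\N}$, but this does not by itself deliver a uniform bound on $\esp{\norm{x_k-x^\star}}$: the passage through total expectations described above is what produces the stated $L^1$ control.
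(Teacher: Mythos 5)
Your proof takes a genuinely different route from the paper's, which disposes of the proposition in one line: it applies the Robbins--Siegmund theorem (Theorem~\ref{th:RS}) to the inequality of Lemma~\ref{prop_conv} with $V_k=\norm{x_{k}-x^\star}^2+\frac{\norm{x_{k}-x_{k-1}}^2}{2}$ and stops there. You instead pass to total expectations and run a discrete Gr\"onwall argument. Your closing remark is well taken: Theorem~\ref{th:RS} as stated delivers almost sure convergence of $(V_k)$, hence $\sup_k V_k<+\infty$ almost surely, but not by itself the uniform bound on $\esp{\norm{x_k-x^\star}}$ that the proposition asserts (that would require, e.g., uniform integrability or a parallel recursion in expectation). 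Your strategy of extracting the $L^1$ control directly from the conditional inequality, combined with $\esp{\norm{x_k-x^\star}}\le\sqrt{\esp{V_k}}$ by Jensen, addresses exactly the point where the paper's argument is thinnest.

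That said, your own argument has a gap at the absorption step. To write $\esp{\lambda_k^2V_k}\le a_k^2\,\esp{V_k}$ you need a \emph{deterministic} envelope $\lambda_k\le a_k$ with $\sum_k a_k^2<+\infty$; both $\lambda_k$ and $V_k$ are $\mathcal{F}_{k-1}$-measurable, so no conditioning trick will factor this expectation without such an envelope. Assumption~\ref{hyp:summabilitybis} only asserts square-summability of the realized (random) sequence, which gives no uniform bound across sample paths. Moreover, the step-size in force in this section is \eqref{eq:choicestepf}, which, unlike Variants \textbf{II} and \textbf{III} of \eqref{eq:choicestep}, carries no $1/k^{1/2+\delta}$ damping; in \ref{case::strcvx}, for instance, the first term of \eqref{eq:choicestepf} only yields the constant bound $\lambda_k\le\frac{1}{2\sqrt{2}\mu}$, so your parenthetical claim that $\lambda_k$ is of order $k^{-1/2-\delta}$ whenever Assumption~\ref{hyp:summabilitybis} holds does not apply to this step-size rule. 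To close your argument one must either strengthen Assumption~\ref{hyp:summabilitybis} to a deterministic square-summable majorant (in which case your Gr\"onwall recursion goes through verbatim) or find another way to bound $\sum_k\esp{\beta_k}$.
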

	
	\begin{proof}
		Thanks to Lemma~\ref{prop_conv}, we can apply the Robbins-Siegmund theorem (Theorem~\ref{th:RS}) with $V_k=\norm{x_{k}-x^\star}^2 +  \frac{\norm{x_{k}-x_{k-1}}^2}{2}$ which concludes the proof.
	\end{proof}

	We can now give convergence results.

	\begin{proposition} \label{prop_convex}
		
		Let us define
		\begin{equation*}
			\hat{x}_K=\frac{ \left( {\lambda_K}+\frac{{\lambda_K^2}}{\lambda_{K-1}} \right) x_K + \sum_{k=1}^{K-1} w_k x_k}{S_K}
		\end{equation*}
		with
		\begin{equation*}
			w_k = {\lambda_k}+\frac{{\lambda_k^2}}{\lambda_{k-1}} - \frac{{\lambda_{k+1}^2}}{\lambda_{k}}
		\end{equation*}
		and 
		\begin{equation*}
			S_K =  {\lambda_K}+\frac{{\lambda_K^2}}{\lambda_{K-1}} + \sum_{k=1}^{K-1}w_k
			=\frac{{\lambda_1^2}}{\lambda_{0}} + \sum_{k=1}^{K} {\lambda_k}
		\end{equation*}
		Suppose that Assumption \ref{ass:VarianceControl} holds, and that Assumption \ref{hyp:summabilitybis} is satisfied for the choice of step-size \eqref{eq:choicestepf}.
		Then, there exists $C>0$ such that 
		we have
		\begin{equation*}
			f(\hat{x}_K)-f^* \leq \frac{C}{S_K} \mbox{ a.s.}
		\end{equation*}
	\end{proposition}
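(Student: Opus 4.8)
The plan is to run the standard weighted-averaging argument for stochastic (sub)gradient methods, using the ``potential decrease'' of Lemma~\ref{prop_conv} as the engine. First I would invoke Lemma~\ref{prop_conv}: under the stated assumptions it gives, for every $k\geq 1$, $\esp[k-1]{V_k}\leq V_{k-1}-U_k+\beta_k$ with $U_k\geq 0$ and $\sum_{k\geq 0}\beta_k<+\infty$. Hence the hypotheses of the Robbins--Siegmund Theorem~\ref{th:RS} are met with $\alpha_k\equiv 0$, so $V_k$ converges almost surely and, crucially, $\mathcal{U}_\infty:=\sum_{k\geq 1}U_k<+\infty$ almost surely. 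It is this last almost-sure (but path-dependent) finiteness that will supply the constant $C$.

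Next I would rewrite the partial sums $\sum_{k=1}^K U_k$. Write $a_k:=f(x_k)-f^\star\geq 0$; since $\lambda_k$ is deterministic conditionally on $\mathcal F_{k-1}$ we have $\esp[k-1]{\lambda_k}=\lambda_k$ and $\esp[k-1]{\lambda_k^2}=\lambda_k^2$, so the expression for $-U_k$ in \eqref{eq_notations} reduces (as already noted in the proof of Lemma~\ref{prop_conv}) to $U_k=2\lambda_k a_k+2\frac{\lambda_k^2}{\lambda_{k-1}}(a_k-a_{k-1})$. Summing over $k=1,\dots,K$ and performing an Abel/telescoping rearrangement on the $a_{k-1}$ term, the coefficient of $a_k$ for $1\leq k\leq K-1$ collapses to exactly $2w_k=2\big(\lambda_k+\frac{\lambda_k^2}{\lambda_{k-1}}-\frac{\lambda_{k+1}^2}{\lambda_k}\big)$, the coefficient of $a_K$ is $2\big(\lambda_K+\frac{\lambda_K^2}{\lambda_{K-1}}\big)$, and a single boundary term $-2\frac{\lambda_1^2}{\lambda_0}a_0$ is left over, yielding
\[
\Big(\lambda_K+\tfrac{\lambda_K^2}{\lambda_{K-1}}\Big)a_K+\sum_{k=1}^{K-1}w_k a_k=\tfrac12\sum_{k=1}^{K}U_k+\tfrac{\lambda_1^2}{\lambda_0}a_0 .
\]
Along the way I would check the two elementary facts that make $\hat x_K$ a legitimate convex combination: (i) $w_k\geq 0$, which follows from the second branch of the step-size rule \eqref{eq:choicestepf}, namely $\lambda_{k+1}^2\leq\lambda_k^2(1+\theta_k)=\lambda_k^2+\frac{\lambda_k^3}{\lambda_{k-1}}$, i.e. $\frac{\lambda_{k+1}^2}{\lambda_k}\leq\lambda_k+\frac{\lambda_k^2}{\lambda_{k-1}}$; and (ii) the telescoping identity $\lambda_K+\frac{\lambda_K^2}{\lambda_{K-1}}+\sum_{k=1}^{K-1}w_k=\frac{\lambda_1^2}{\lambda_0}+\sum_{k=1}^{K}\lambda_k=S_K$, so the weights defining $\hat x_K$ are nonnegative and sum to $S_K$.

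Finally, by convexity of $f$ (Jensen's inequality applied to the convex combination $\hat x_K$, whose weights sum to $S_K$),
\[
f(\hat x_K)-f^\star\leq\frac{1}{S_K}\Big[\Big(\lambda_K+\tfrac{\lambda_K^2}{\lambda_{K-1}}\Big)a_K+\sum_{k=1}^{K-1}w_k a_k\Big]=\frac{1}{S_K}\Big(\tfrac12\sum_{k=1}^{K}U_k+\tfrac{\lambda_1^2}{\lambda_0}a_0\Big),
\]
and since each $U_k\geq 0$ we bound $\sum_{k=1}^{K}U_k\leq\mathcal U_\infty<+\infty$ almost surely. Taking $C:=\tfrac12\mathcal U_\infty+\tfrac{\lambda_1^2}{\lambda_0}\big(f(x_0)-f^\star\big)$ (an almost surely finite, nonnegative random variable; replace it by $\max(C,1)$ if a strictly positive constant is desired) gives $f(\hat x_K)-f^\star\leq C/S_K$ almost surely, for all $K$. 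The argument is not deep; the main obstacle is purely the bookkeeping of the summation-by-parts step — extracting precisely the weights $w_k$ and the lone boundary term — together with verifying $w_k\geq 0$ from the step-size update. The one conceptual point worth stating explicitly is that $C$ is a path-dependent (but a.s. finite) constant arising from $\sum_k U_k<+\infty$ a.s., not from any uniform deterministic bound; an analogous computation taking full expectations instead would only yield a rate for $\esp{f(\hat x_K)-f^\star}$.
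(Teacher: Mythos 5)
Your proof is correct and follows essentially the same route as the paper's: sum the $U_k$, apply Robbins--Siegmund via Lemma~\ref{prop_conv} to get $\sum_k U_k<+\infty$ almost surely, telescope to extract the weights $w_k$ (nonnegative by the second branch of \eqref{eq:choicestepf}), and conclude by Jensen. Your treatment is in fact slightly more explicit than the paper's on two points it leaves implicit --- the boundary term $\frac{\lambda_1^2}{\lambda_0}(f(x_0)-f^\star)$ absorbed into $C$, and the fact that $C$ is a path-dependent, almost surely finite random quantity rather than a deterministic constant --- but these are refinements, not a different argument.
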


	\begin{proof}
		
		We have:
		\begin{eqnarray*}
			\sum_{k=1}^K U_k & = & 2\left({\lambda_K}+\frac{{\lambda_K^2}}{\lambda_{K-1}}\right)\left(f(x_K)-f^\star\right)
			-2 \frac{\esp[0]{\lambda_1^2}}{\lambda_{0}}\left(f(x_{0})-f^\star\right) \\
			& &
			+ 2 \sum_{k=1}^{K-1} \left(
			\underbrace{\left({\lambda_k}+\frac{{\lambda_k^2}}{\lambda_{k-1}} - \frac{{\lambda_{k+1}^2}}{\lambda_{k}}\right)}_{=w_k}
			\left(f(x_k)-f^\star\right) \right)
		\end{eqnarray*}
		Thanks to Lemma~\ref{prop_conv}, we can apply  Robbins-Siegmund theorem (Theorem~\ref{th:RS}) to deduce that  $\sum_{k\geq 0} U_k < +\infty$ a.s. Then, notice that from the update rule \eqref{eq:choicestepf} of $\lambda_k$, we have that $w_k \geq 0$.
		We thus deduce that there exists a constant $C>0$ such that:
		\begin{equation*}
			\left({\lambda_K}+\frac{{\lambda_K^2}}{\lambda_{K-1}}\right)\left(f(x_K)-f^\star\right)
			+
			\sum_{k=1}^{K-1}
			w_k
			\left(f(x_k)-f^\star\right)
			\leq C
			\mbox{ a.s.}
		\end{equation*}
		Since $f$ is convex, we now want to use Jensen inequality, that states that if $\sum_{k=1}^K \rho_k =1$:
		\begin{equation*}
			f\left( \sum_{k=1}^K \rho_k x_k \right) \leq \sum_{k=1}^K \rho_k f(x_k)
		\end{equation*}

		So that we get
		\begin{equation} \label{eq_conv}
			f(\hat{x}_K)-f^* \leq \frac{C}{S_K} \mbox{ a.s.}
		\end{equation}
		which concludes the proof.
	\end{proof}

	We finally add a last assumption.
	\begin{assumption}[Divergence of the steps] \label{hyp:divergence}
		The random sequence $(\lambda_{k})_{k \in \N}$ of step-sizes satisfies
		
		\begin{equation*}
			\sum_{k \geq 0} {\lambda_{k}} = + \infty 
		\end{equation*}
	\end{assumption}
	
	Notice that  Assumption \ref{hyp:divergence} is satisfied  in the smooth and strongly convex case.
	Notice also that this assumption would be directly satisfied if we only had the two first conditions in the update rule of $\lambda_k$.
	
	\begin{corollary}
		
		Suppose that Assumptions \ref{ass:VarianceControl} holds, and that Assumption \ref{hyp:summabilitybis} and \ref{hyp:divergence}  are satisfied for the choice of step-size \eqref{eq:choicestepf}.
		Then, we have that
		\begin{equation*}
			f(\hat{x}_K)-f^* \to 0 \mbox{ a.s.}
		\end{equation*}
	\end{corollary}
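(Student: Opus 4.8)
The statement is an immediate consequence of Proposition~\ref{prop_convex} combined with Assumption~\ref{hyp:divergence}, so the plan is simply to assemble these two ingredients carefully. Proposition~\ref{prop_convex} already gives, under Assumptions~\ref{ass:VarianceControl} and~\ref{hyp:summabilitybis} and for the step-size rule \eqref{eq:choicestepf}, the bound
\[
f(\hat{x}_K) - f^\star \leq \frac{C}{S_K} \quad \text{a.s.},
\]
where $S_K = \frac{\lambda_1^2}{\lambda_0} + \sum_{k=1}^{K}\lambda_k$ and $C$ is an a.s.\ finite (random) constant that does not depend on $K$ — it arises from the almost-sure summability $\sum_{k\geq 0}U_k < +\infty$ obtained via the Robbins--Siegmund theorem (Theorem~\ref{th:RS}) applied to the inequality in Lemma~\ref{prop_conv}, together with the non-negativity of the weights $w_k$.

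First I would record that $\hat{x}_K$ is well defined since $S_K \geq \frac{\lambda_1^2}{\lambda_0} > 0$, and that $f(\hat{x}_K) - f^\star \geq 0$ because $f^\star = \min_{x}f(x)$. Then I would observe that $(S_K)_{K}$ is non-decreasing in $K$ (all $\lambda_k>0$) and that, by Assumption~\ref{hyp:divergence}, $\sum_{k\geq 0}\lambda_k = +\infty$, hence $S_K \to +\infty$ as $K\to\infty$ (deterministically, once the $\lambda_k$ are realized). On the almost-sure event $\{C < +\infty\}$ this yields $C/S_K \to 0$, and a squeeze with the two-sided bound $0 \leq f(\hat{x}_K) - f^\star \leq C/S_K$ gives $f(\hat{x}_K) - f^\star \to 0$ almost surely, which is the claim.

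There is essentially no genuine obstacle here: the heavy lifting (the Robbins--Siegmund argument, the averaging/Jensen step, and the a.s.\ finiteness of $C$) has already been carried out in Lemma~\ref{prop_conv} and Proposition~\ref{prop_convex}. The only points worth stressing in the write-up are (i) that $C$ is a random variable and the convergence holds on the full-measure event where it is finite, and (ii) that Assumptions~\ref{hyp:summabilitybis} and~\ref{hyp:divergence} are mutually compatible — for instance they are both satisfied by the strongly convex case via Lemma~\ref{lem:boundLambdaVariant3}, where $\lambda_k \asymp k^{-1/2-\delta}$ so that $\sum \lambda_k^2 < +\infty$ while $\sum \lambda_k = +\infty$ — so that the corollary is not vacuous.
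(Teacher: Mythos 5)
Your proof is correct and follows exactly the paper's own argument: invoke Proposition~\ref{prop_convex} to get $f(\hat{x}_K)-f^\star \leq C/S_K$ almost surely, then use Assumption~\ref{hyp:divergence} to conclude $S_K \to +\infty$ and hence the claim. The extra remarks on the randomness of $C$ and the non-vacuousness of the assumptions are sensible but not needed beyond what the paper states.
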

	
	\begin{proof}
		Notice that 
		\begin{equation*}
			S_K = 
			\frac{{\lambda_1^2}}{\lambda_{0}} + \sum_{k=1}^{K} {\lambda_k} \geq \sum_{k=1}^{K} {\lambda_k} 
		\end{equation*}
		Hence, by using Assumption \ref{hyp:divergence} in \eqref{eq_conv}, we get the result of the corollary.
		
	\end{proof}
	%


	\section{Details on the Experiments\label{app:expdetails}}

	\begin{figure}[t]
		\centering
		\begin{tabular}{ccc}
			\multicolumn{3}{c}{\includegraphics[width=0.8\textwidth]{Figures/sensitivity_legend.png}}
			\\
			\makecell{
				\footnotesize Linear -- Synthetic\\
				\includegraphics[width=0.25\textwidth]{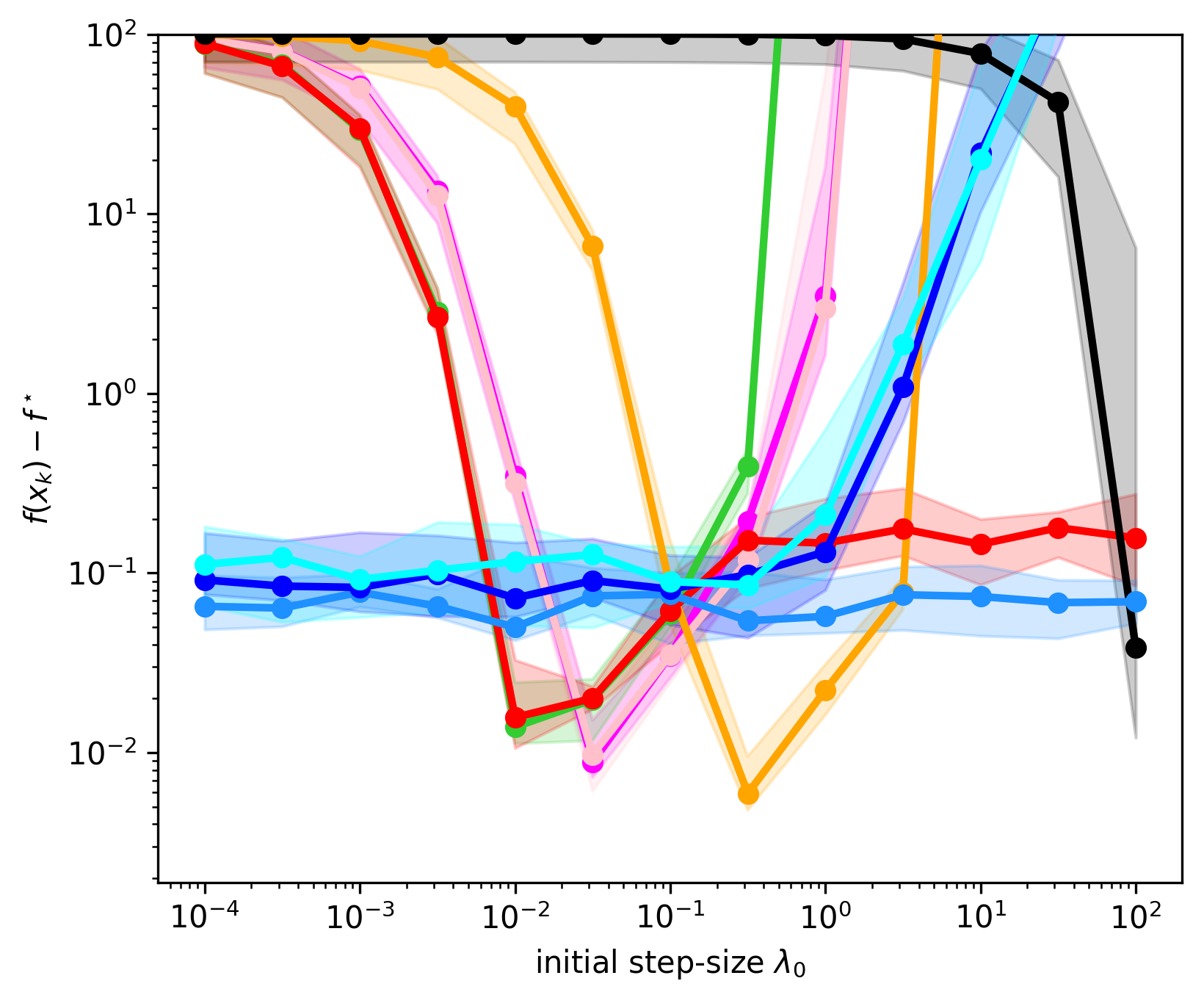}
			}&
			\makecell{
				\footnotesize Sum-of-Ridges -- Synthetic\\
				\includegraphics[width=0.25\textwidth]{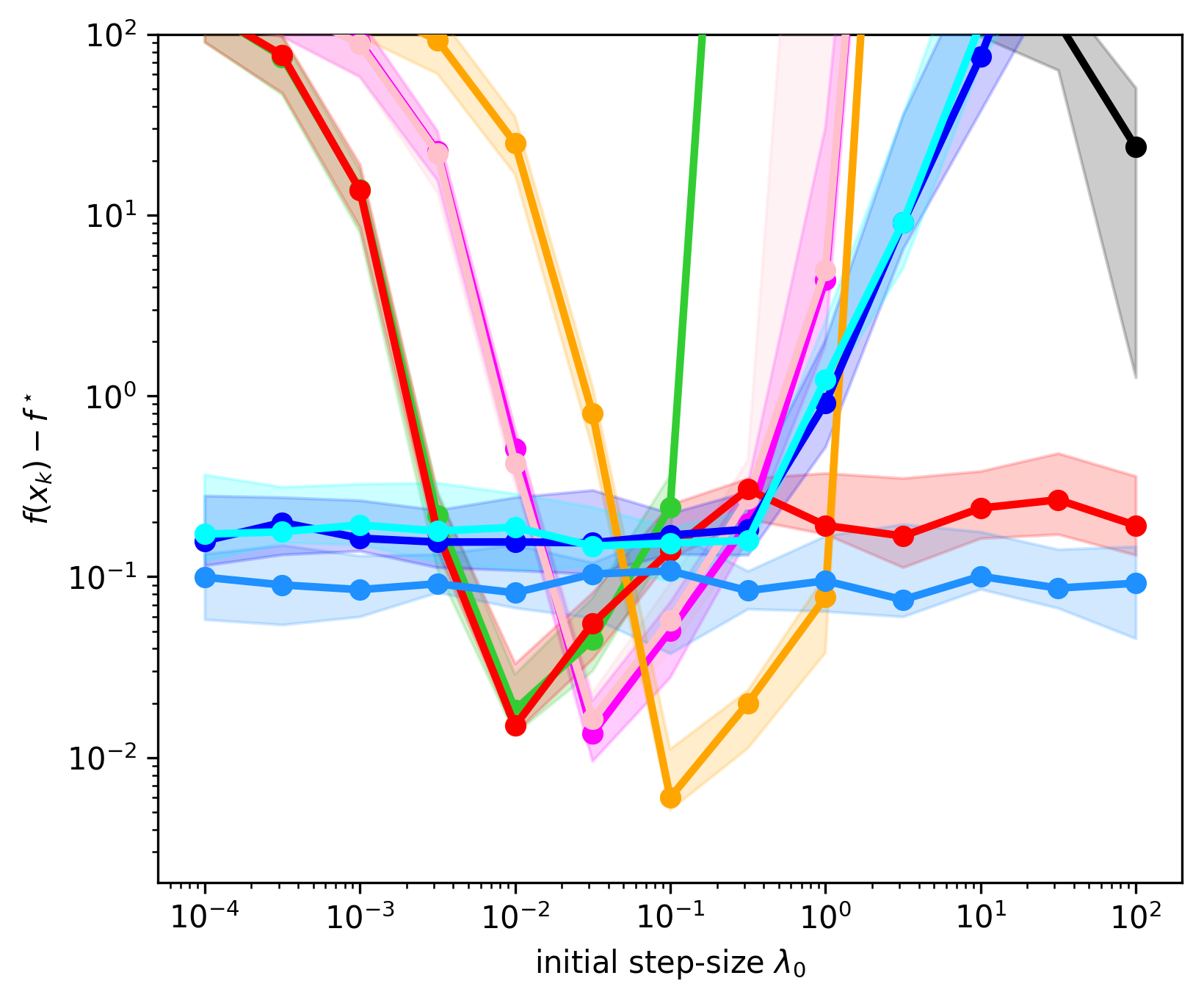}
			}&
			\makecell{
				\footnotesize Logistic -- 2moons\\
				\includegraphics[width=0.25\textwidth]{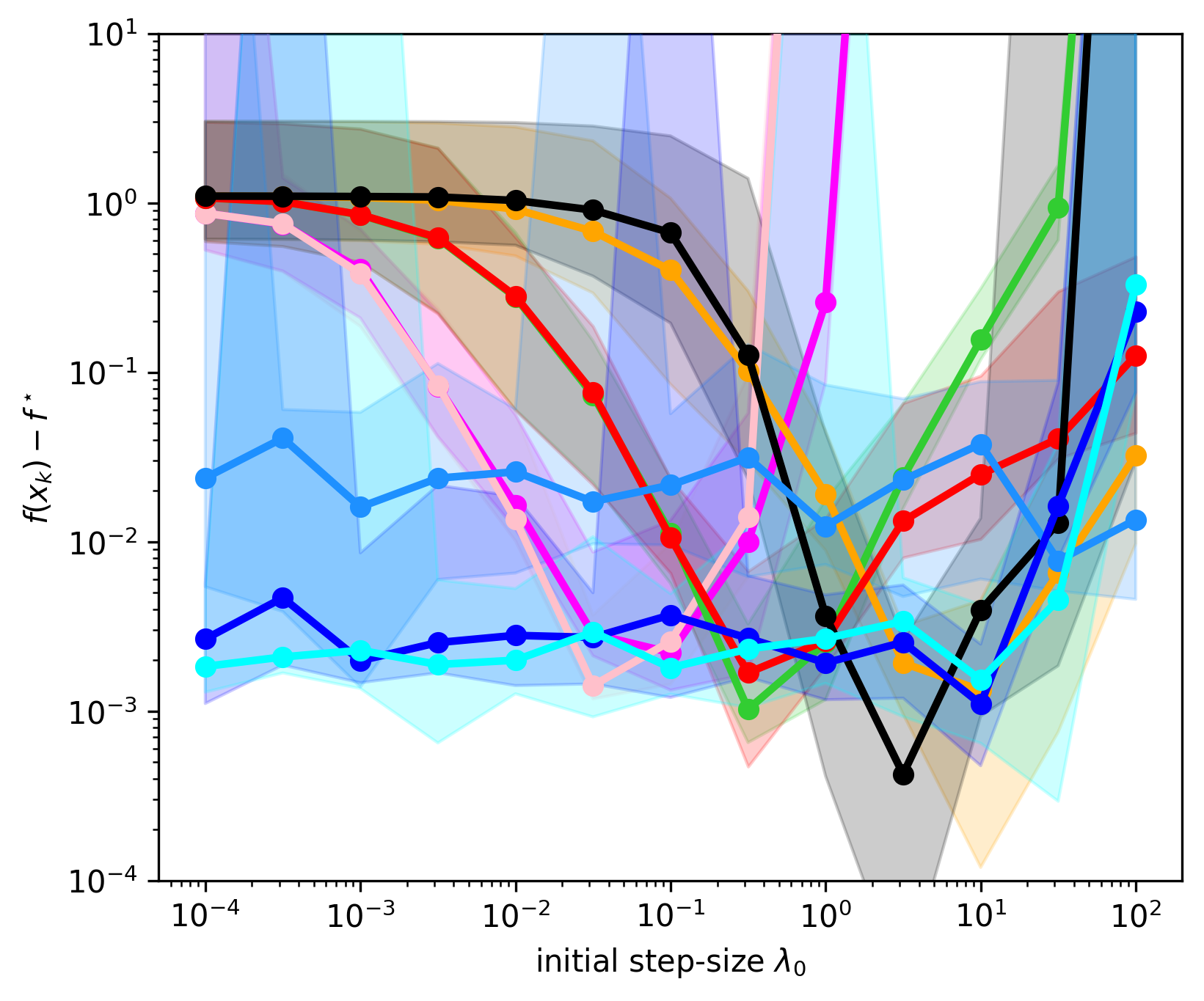} 
			}
			\\
			\makecell{
				\includegraphics[width=0.25\textwidth]{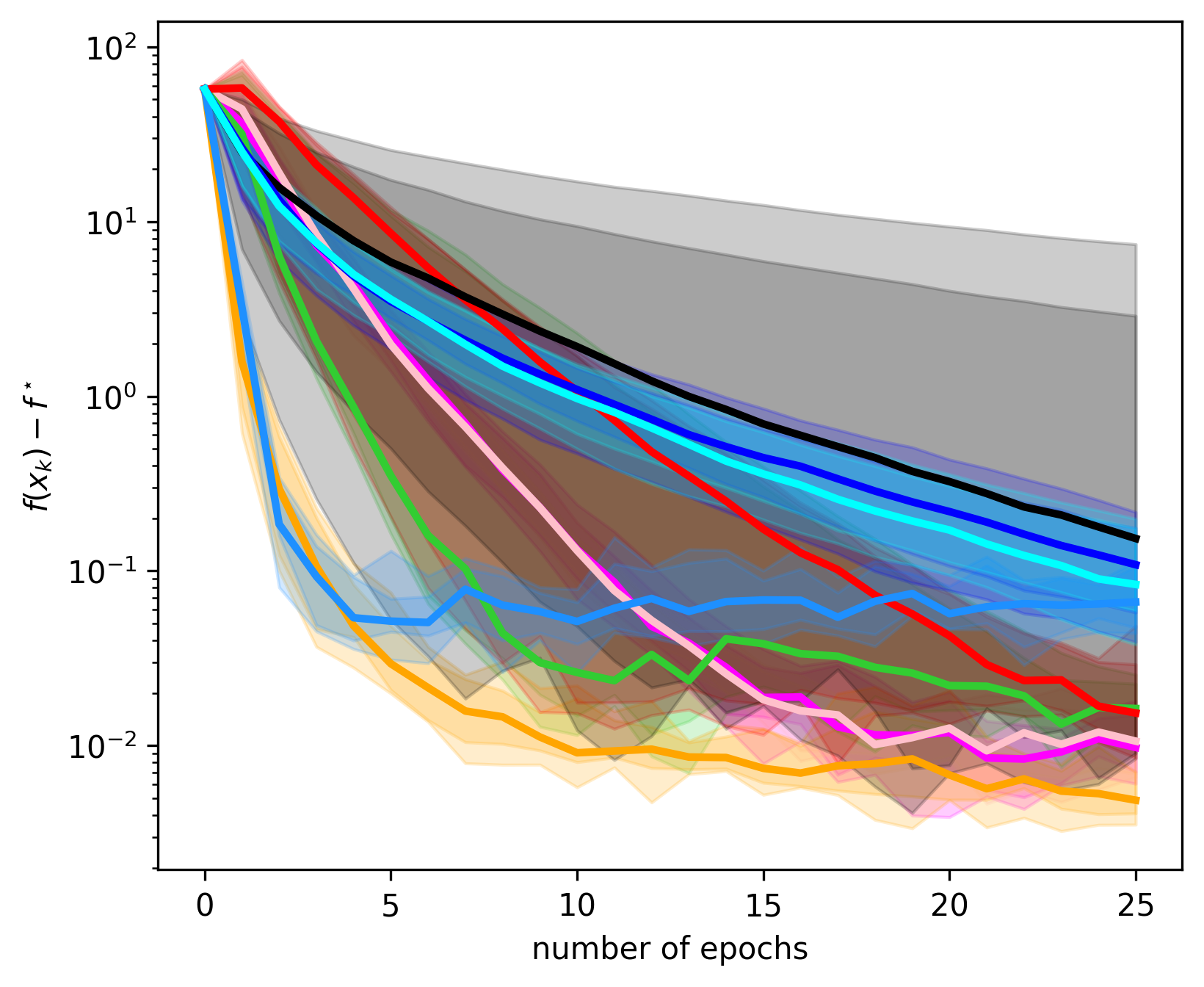}
			}&
			\makecell{
				\includegraphics[width=0.25\textwidth]{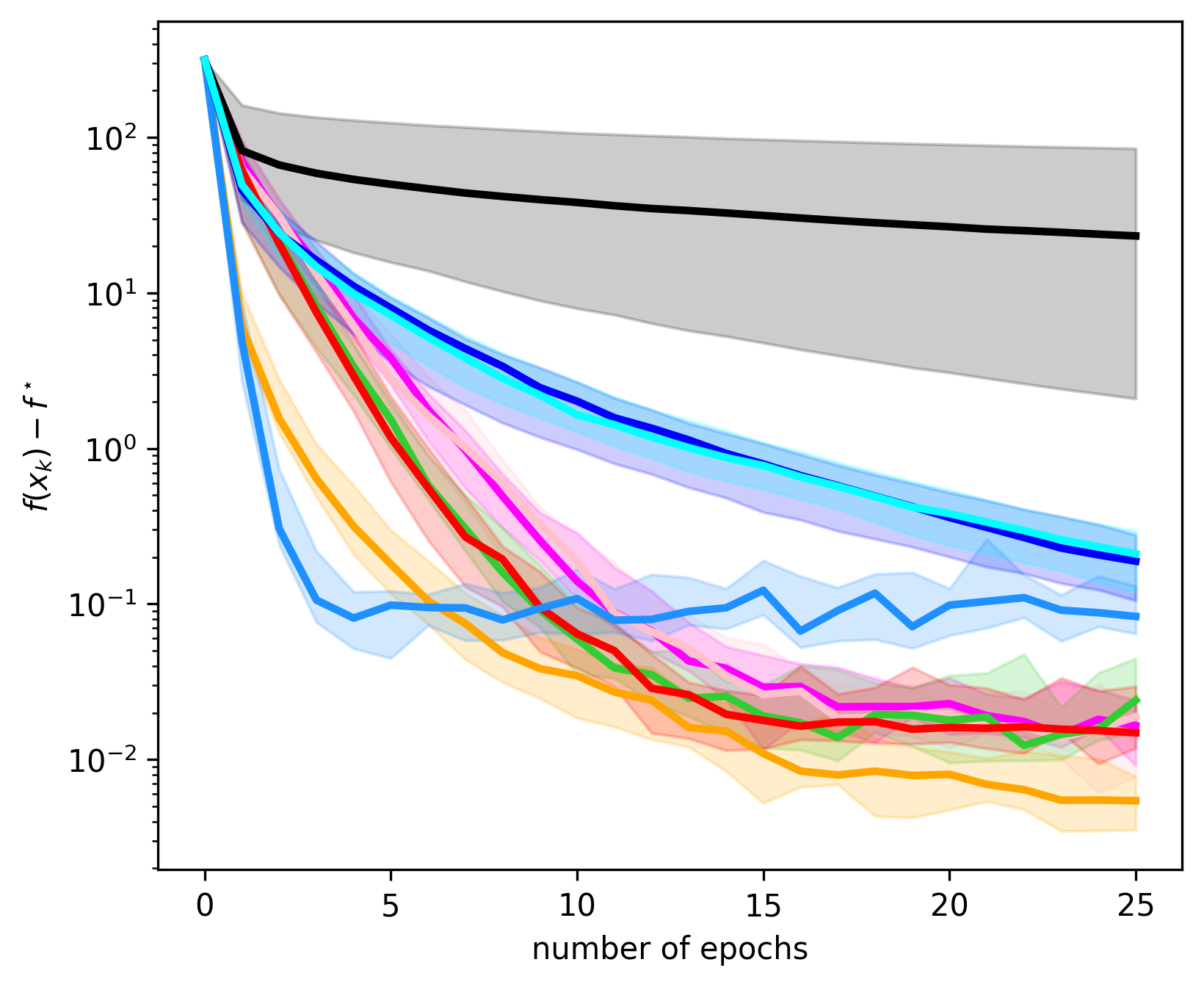} 
			}&
			\makecell{
				\includegraphics[width=0.25\textwidth]{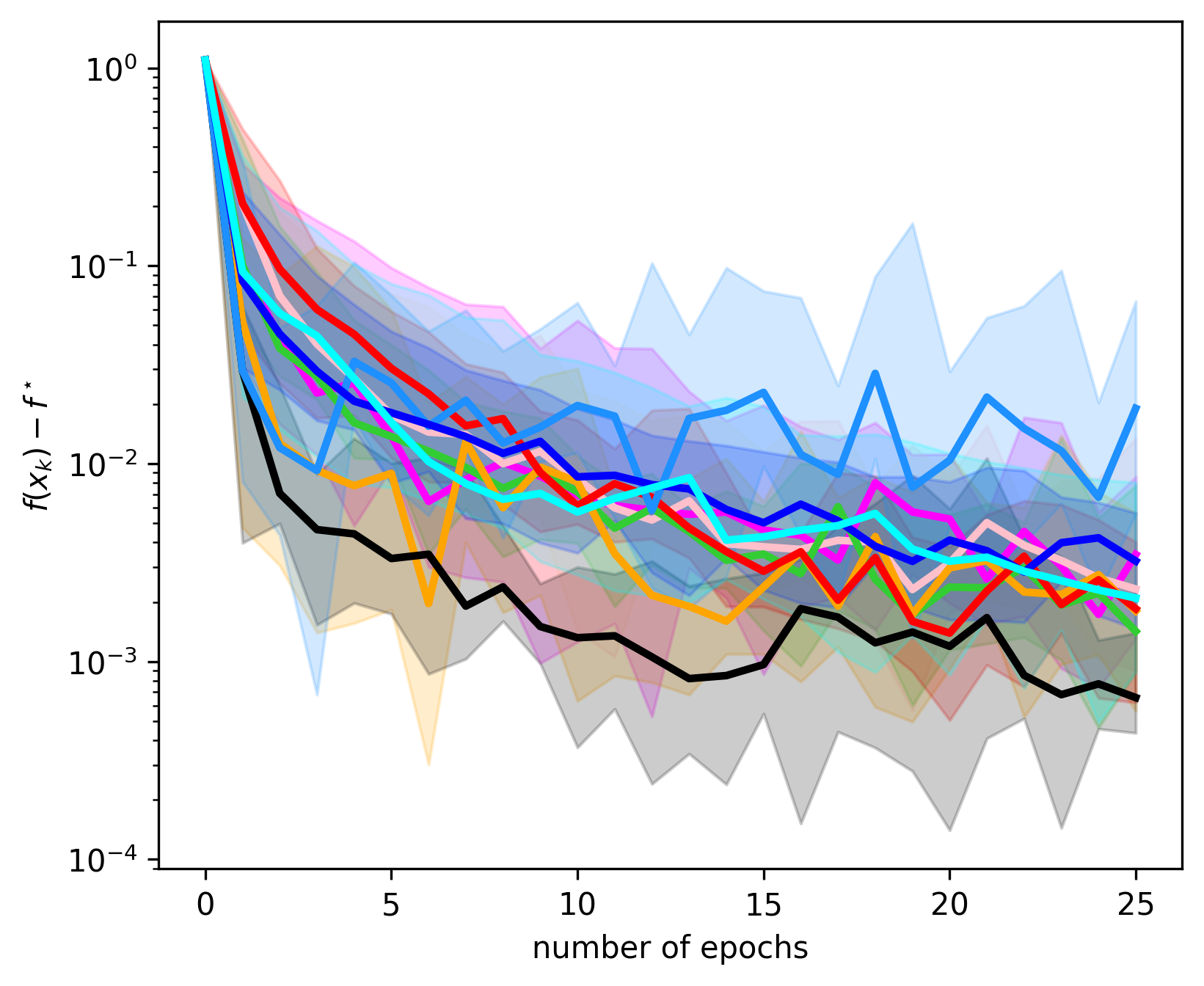} 
			}
			\\
		\end{tabular}
		\caption{Sensitivity to $\lambda_0$ and minimization performance after tuning for \textbf{smaller} mini-batches (of size $8$).
			Same experimental setting as in Figures~\ref{fig::stepsize_sensitivity}, \ref{fig::detailed_sensitivity} and~\ref{fig::EXP_loss}.\label{fig::smallerMB}}
	\end{figure}

	\paragraph{Objective functions.}
	We performed numerical experiments on problems of the form \eqref{eq:optim_fullbatch} \ie where $f = \frac{1}{N}\sum_{\ell = 1}^{N}  f_{\ell}$. We considered four types of loss functions defined respectively for all $x\in\R^d$ by:
	\begin{itemize}
		\item Linear regression: $f_{\ell}(x) = \frac{1}{2}(y_{\ell} - w_{\ell}^Tx)^2$
		\item Ridge regression: $f_{\ell}(x) = g_{\ell} \left(y_{\ell} - w_{\ell}^T x\right) $ where $g_{\ell}(t) = \frac{t^4}{1+t^2} + 10^{-2} t^2$ is $10^{-2}$-strongly convex.
		\item Logistic regression: $f_{\ell}(x) = \log\left( 1 + \exp(-y_{\ell} w_{\ell}^T x )  \right)$ 
		\item Poisson regression: $f_{\ell}(x) =  \log\left( -y_{\ell} w_{\ell}^T x   +  1 + \exp( w_{\ell}^T x ) \right)$
	\end{itemize}
	In the above, the dataset  is made of couples $(w_{\ell},y_{\ell}) \in \R^{d} \times \R$ for $\ell \in \{1, \ldots, N\}$. Below we describe the data as a matrix $W\in\R^{N\times d}$ and a vector $y\in\R^N$.

	\paragraph{Datasets}
	Based on the above, we used synthetic and real-world datasets of the form $(W,y)$ with $W\in\R^{N\times d}$ and $y\in\R^N$, where $N$ is the number of data samples.
	\begin{itemize}
		\item Synthetic datasets were generated by drawing data at random. Each element of $W$ is i.i.d.\ $\mathcal{N}(0, 1)$. We do the same for the vector $y$. We used $N=200$ and $d=20$. We used batch-size of 32 (\ie $\mathrm{Card}(\xi_k)=32$).
		\item The \textit{2moons} dataset is available via the scikit learn library \citep{scikit-learn}. It is made of $N=200$  samples in dimension $d=3$. We used a batch-size of $32$.
		\item The \textit{w8a} dataset\footnote{\url{https://www.csie.ntu.edu.tw/~cjlin/libsvmtools/datasets/binary.html}} is a publicly available dataset classically used for logistic regression. It has a significantly larger number $N=49749$ of samples, and is also larger-dimensional as $d=300$. We used a batchsize of $309$.
	\end{itemize}
	All the algorithms described below are ran for the same number of epochs, where one epoch consists in using $N$ data samples. That is, one epoch corresponds to $\frac{N}{\mathrm{batch-size}}$ iterations. At each iteration, we sample the mini-batches of data $(\xi_k)_{k\in\N}$ uniformly at random with replacement.
	
	\paragraph{Algorithms}
	\begin{itemize}
		\item SGD is the algorithm described in \eqref{eq:algosto}. It is either used with no decay ($\lambda_k=\lambda_0$) or decay $\lambda_k = \frac{\lambda_0}{k^{1/2+\delta}}$.
		\item We implemented the three variants of AdaSGD exactly as described in Algorithm~\ref{algo::AdaSGD} and \eqref{eq:choicestep}. 
		\item AdaSGD-MM is implemented as proposed by \citet{malitsky2019adaptive}. As previously discussed, it is SGD with a step-size that is computed for $k\geq 2$ as $
		\lambda_k  = \min \left\{ \alpha \Lambda_k,   \lambda_{k-1} \sqrt{ 1 + \theta_{k-1} } \right\}
		$
		with either
		$
		\Lambda_k =  \frac{\norm{x_k-x_{k-1}}}{\norm{\gf[\xi_k](x_k) - \gf[\xi_k](x_{k-1})}}$ (referred to as biased)
		or
		$
		\Lambda_k =  \frac{\norm{x_k-x_{k-1}}}{\norm{\gf[\zeta_k](x_k) - \gf[\zeta_k](x_{k-1})}}$ (unbiased), where $\zeta_k$ is an independent copy of $\xi_k$,
		and where $\alpha > 0$. To ease the discussion and comparison, in Section~\ref{sec:num}, we call $\lambda_0$ this parameter $\alpha$ as it is the tunable parameter of the method.
		\item Step-size Adagrad uses the step-size $\lambda_k = \frac{\lambda_0}{\sqrt{v_k}}$ where $v_k$ is updated iteratively: $v_{k} = v_{k-1} + \norm{\nabla f_{\xi_k}(x_k)}^2$, starting with $v_{-1}=0$.
		\item The stochastic Polyak-stepsize is defined by $\lambda_k =\min\left(
		\frac{f_{\xi_{k}}(x_k) - f_{\xi_{k}}^\star}{\norm{\gf[\xi_k](x_k)}^2}
		, \lambda_0\right)$.
		It is important to note that this step-size rule additionally requires evaluating $f_{\xi_{k}}(x_k)$ at each iteration and is  far from being tuning free. Indeed, it features a step-size $\lambda_0$ that needs to be tuned but also assumes knowledge of all the optimal values $f_\ell$, $n\in\{1,\ldots, N\}$. For fair comparisons with other methods, here we replace the optimal value $f_\ell^\star$ by $0$ for all $n$, since one cannot assume to know these values \textit{a priori}.
	\end{itemize}

	\begin{figure}[t]
		\centering
		\begin{tabular}{ccc}
			\multicolumn{3}{c}{\includegraphics[width=0.8\textwidth]{Figures/sensitivity_legend.png}}
			\\
			\makecell{
				\footnotesize Linear -- Synthetic\\
				\includegraphics[width=0.25\textwidth]{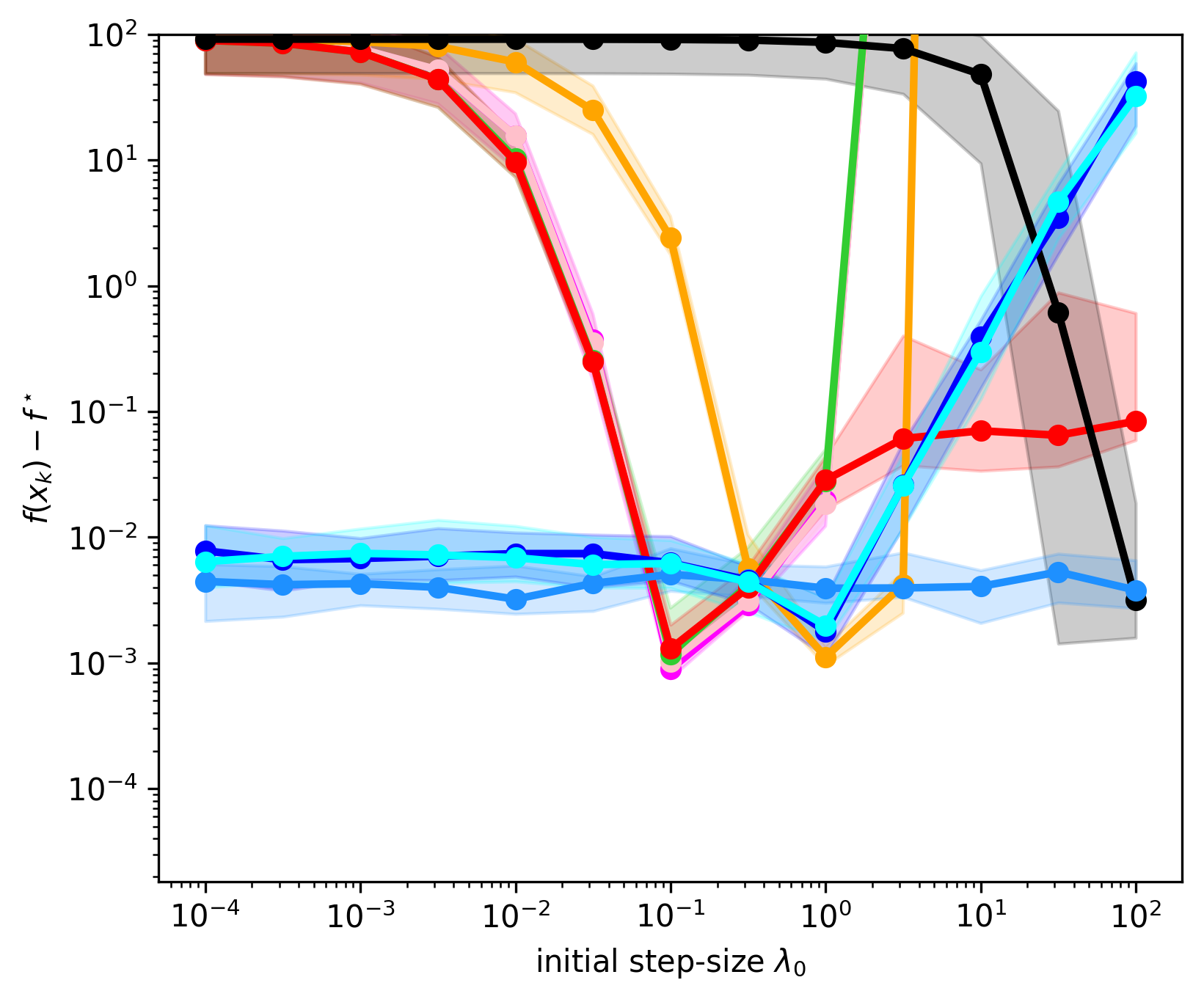}
			}&
			\makecell{
				\footnotesize Sum-of-Ridges -- Synthetic\\
				\includegraphics[width=0.25\textwidth]{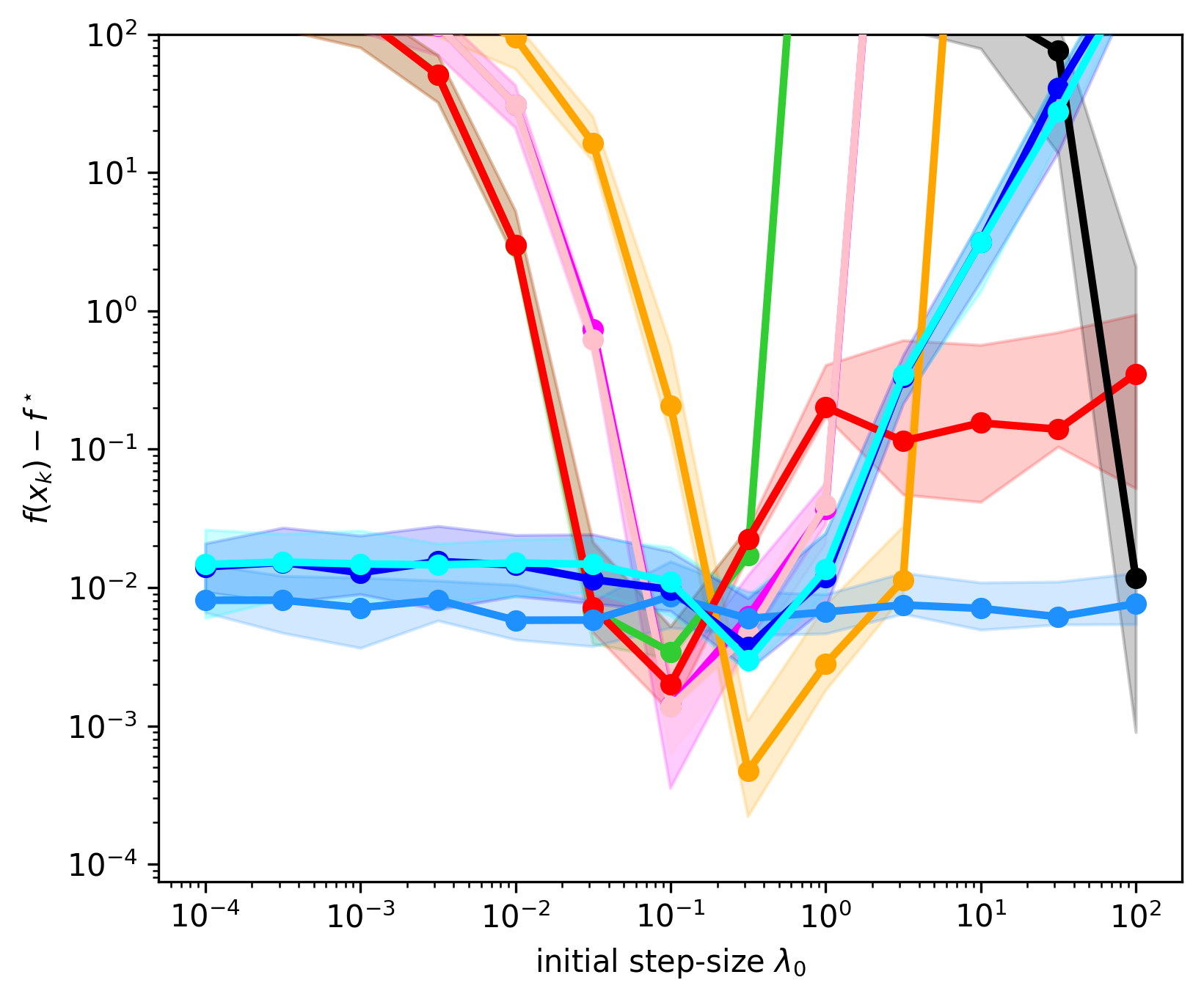} 
			}&
			\makecell{
				\footnotesize Logistic -- 2moons\\
				\includegraphics[width=0.25\textwidth]{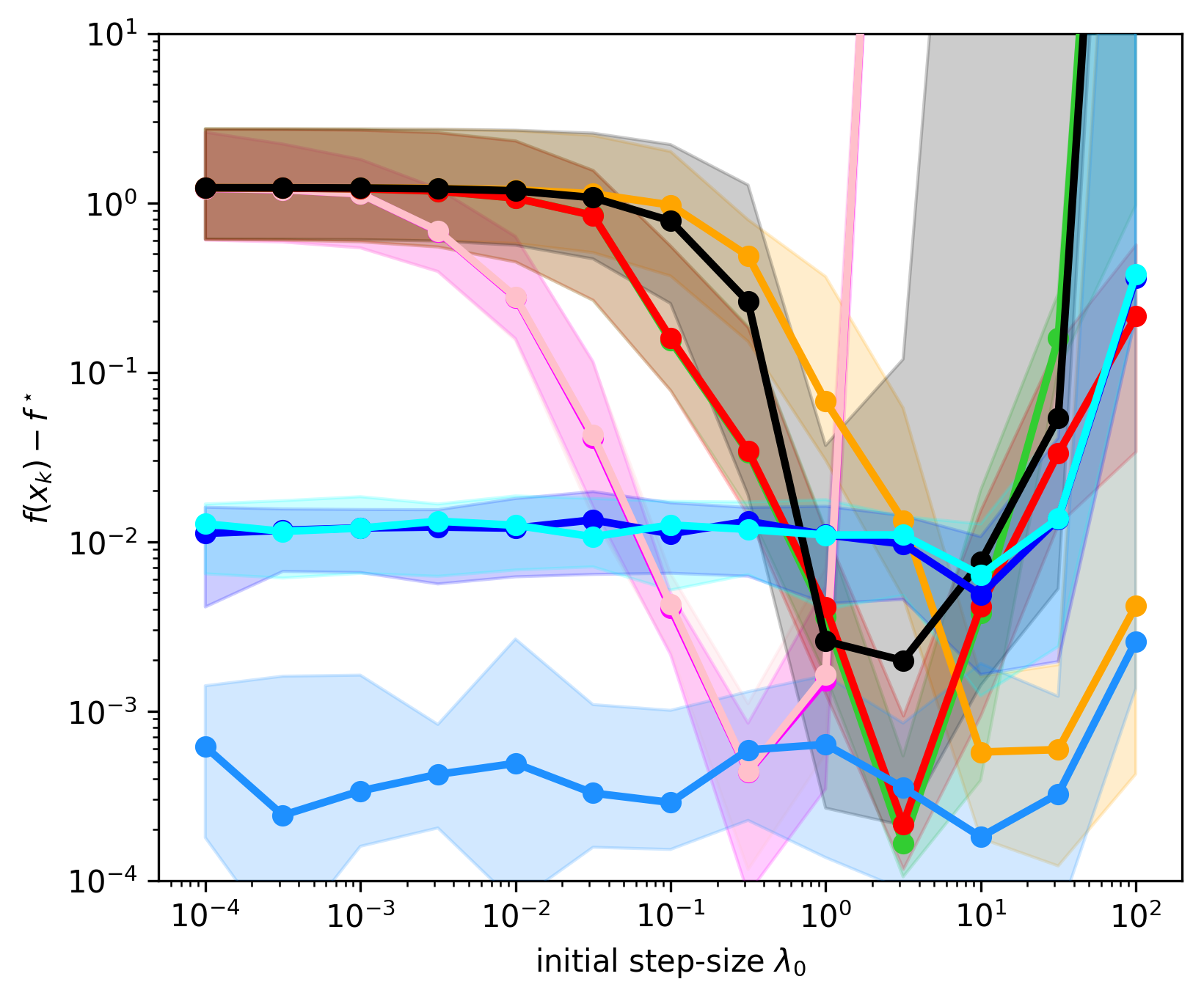} 
			}
			\\
			\makecell{
				\includegraphics[width=0.25\textwidth]{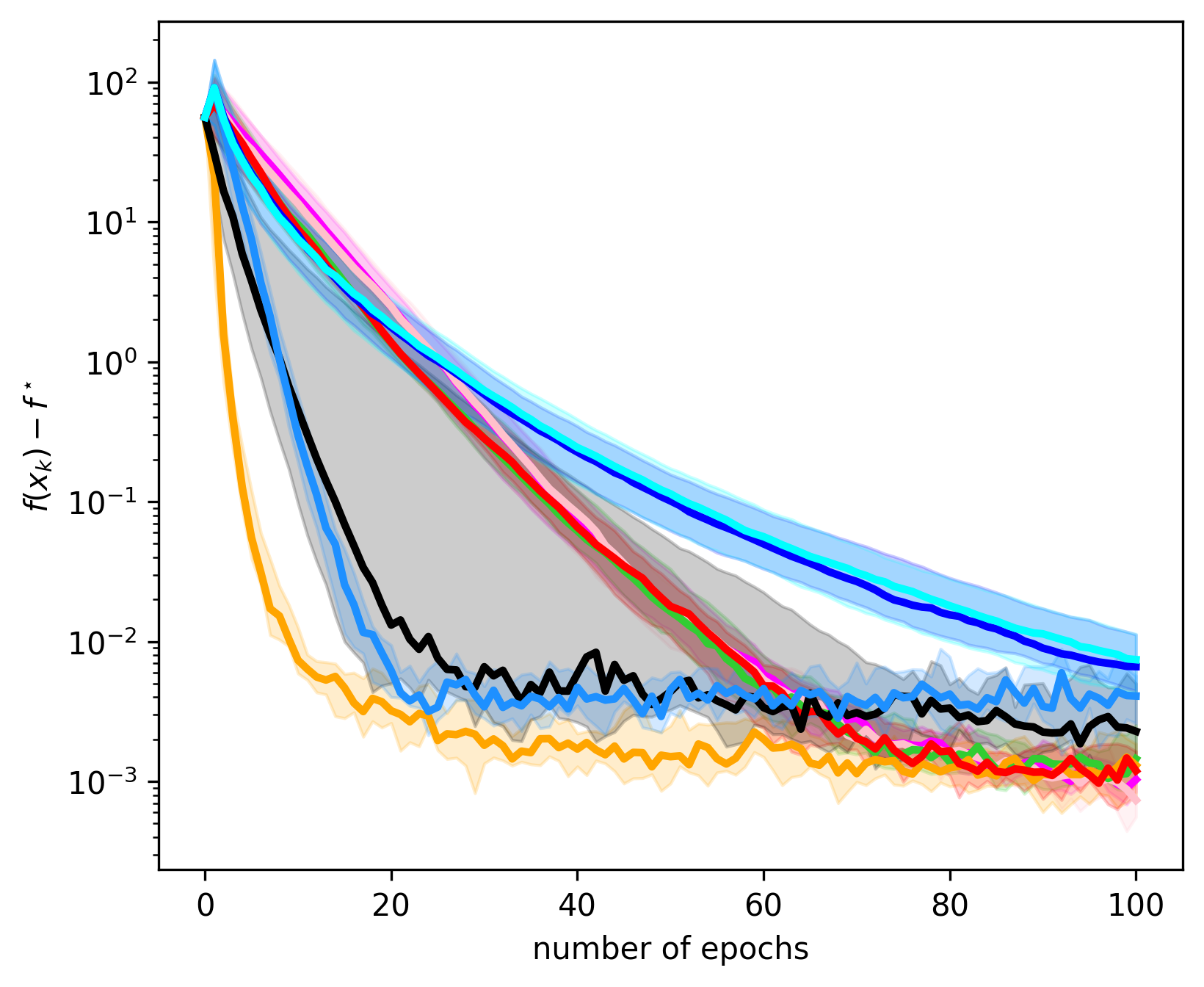}
			}&
			\makecell{
				\includegraphics[width=0.25\textwidth]{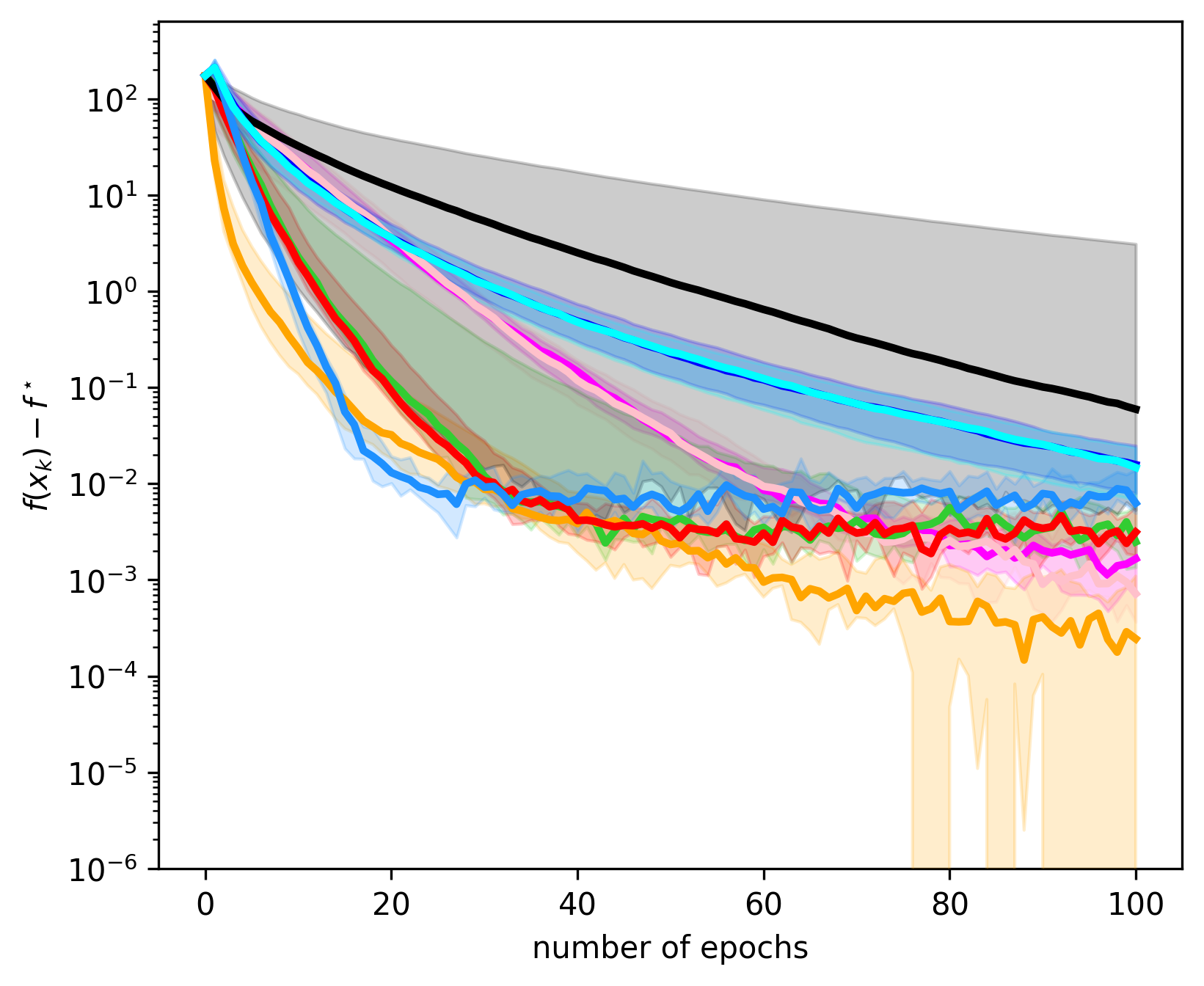} 
			}&
			\makecell{
				\includegraphics[width=0.25\textwidth]{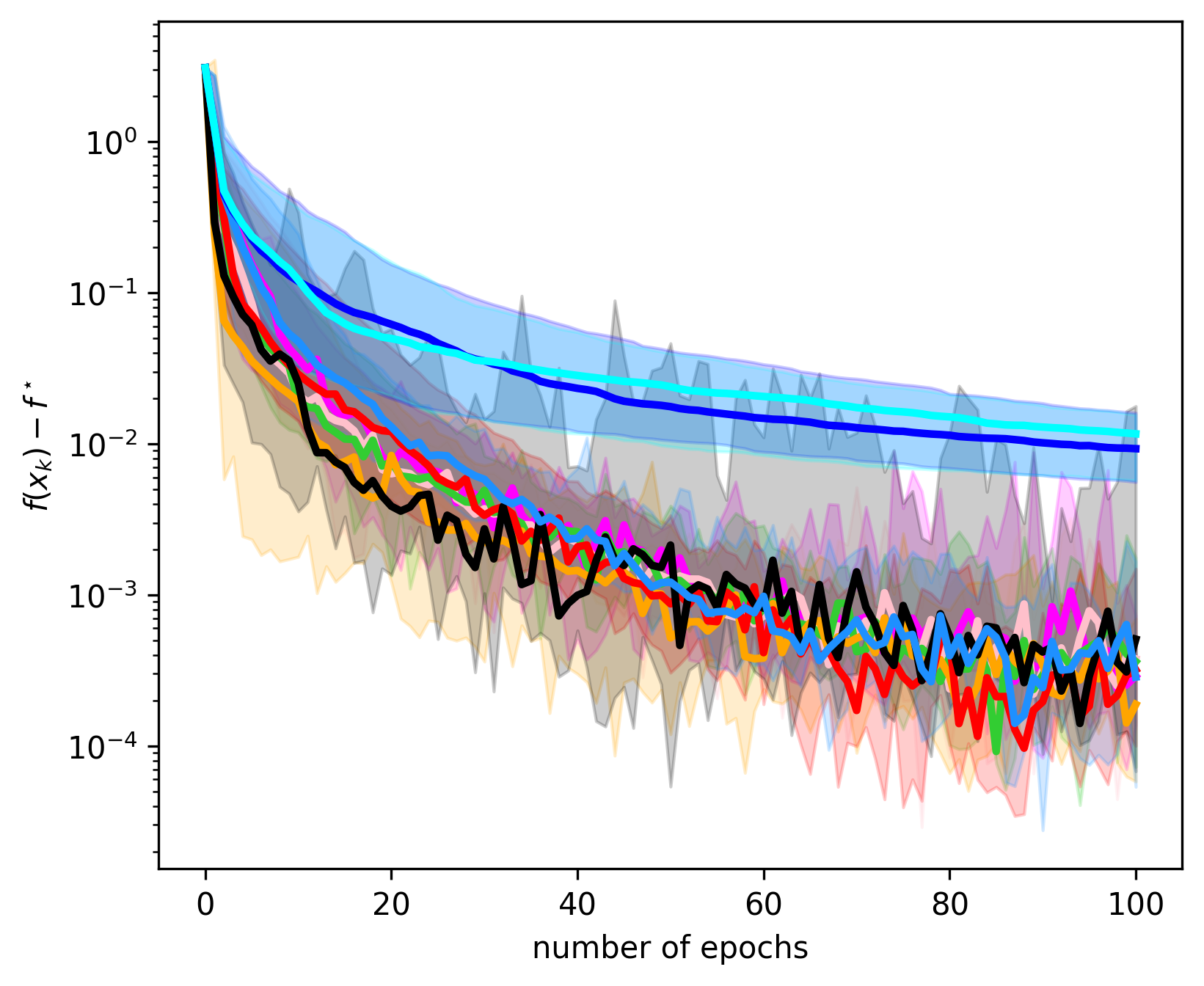} 
			}
			\\
		\end{tabular}
		\caption{Sensitivity to $\lambda_0$ and minimization performance after tuning for \textbf{larger} mini-batches (of size $128$).
			Same experimental setting as in Figures~\ref{fig::stepsize_sensitivity}, \ref{fig::detailed_sensitivity} and~\ref{fig::EXP_loss}.}\label{fig::largerMB}
	\end{figure}
	\paragraph{Parameter selection.}	
	\begin{itemize}
		\item All the methods feature a single tunable parameter $\lambda_0$. To obtain Figures~\ref{fig::stepsize_sensitivity} and~\ref{fig::detailed_sensitivity}, we ran the algorithms described above for all choices $\lambda_0$ on a dense grid of values. More specifically, we tried every $\lambda_0$ of the form $10^i$, for $i\in\{-4, -3.5,\ldots, 1.5, 2\}$. We ran $100$ epochs for each problem, except for the \textit{w8a} dataset for which we performed the grid-search on $10$ epochs due to a significantly higher computational cost.
		\item For Figure~\ref{fig::EXP_loss}, and bottom-rows of Figures~\ref{fig::smallerMB} and~\ref{fig::largerMB}, we selected the parameter $\lambda_0$ that achieved the lowest full-batch value $f(x_k)$ at the last epoch of the grid-search. We then ran again the algorithm with the best choice of $\lambda_0$ observed in the above experiments. We proceeded this way for all the baselines (SGD and AdaSGD-MM), however, since we argue that AdaSGD requires no tuning, we did not select $\lambda_0$ for our methods (AdaSGD), and rather always used a step-size $\lambda_0=10^{-3}$.
		\item For SGD with decay, as well as V-\textbf{II} and V-\textbf{III} of AdaSGD, we use $\delta=10^{-2}$. This value was \emph{not tuned} and simply taken small as any small $\delta>0$ works from a theoretical point of view.
	\end{itemize}

	\subsubsection*{Acknowledgments}
	The authors gratefully acknowledge financial support from the Agence Nationale de la Recherche  (MaSDOL grant ANR-19-CE23-0017), and by PEPR PDE-AI.
	
	
	\bibliographystyle{abbrvnat}
	\bibliography{biblio}

	\end{document}